\definecolor{mydarkred}{rgb}{0.6,0,0}
\definecolor{mydarkgreen}{rgb}{0,0.6,0}
\theoremstyle{plain}
\newtheorem{theorem}{Theorem}
\newtheorem{lemma}[theorem]{Lemma}
\newtheorem{corollary}[theorem]{Corollary}
\newtheorem{definition}[theorem]{Definition}
\newtheorem{assumption}[theorem]{Assumption}
\newtheorem{example}[theorem]{Example}
\newtheorem{remark}{Remark}
\def \x {\bm{x}}
\def \y {\bm{y}}
\def \w {\bm{w}}
\def \h {\bm{h}}
\def \F {\bm{F}}
\def \L {\bm{L}}
\def \I {\bm{I}}
\def \U {\bm{U}}
\def \mepsilon {\bm{\epsilon}}
\def \meta {\bm{\eta}}
\def \bR {\mathbb{R}}
\def \bP {\mathbb{P}}
\def \bU {\mathbb{U}}
\def \bQ {\mathbb{Q}}
\def \bI {\mathbb{I}}
\def \bW {\mathbb{W}}
\def \cF {\mathcal{F}}
\def \cB {\mathcal{B}}
\def \cT {\mathcal{T}}
\def \cZ {\mathcal{Z}}
\def \cH {\mathcal{H}}
\def \cK {\mathcal{K}}
\def \cX {\mathcal{X}}
\def \cN {\mathcal{N}}
\def \cE {\mathcal{E}}
\def \cW {\mathcal{W}}
\def \cY {\mathcal{Y}}
\def \rB {\mathscr{B}}
\title{DUAL: Learning Diverse Kernels for Aggregated Two-sample and Independence Testing}
\author{
Zhijian Zhou\thanks{Equal contribution. $^\P$Code: \url{https://github.com/yeager20001118/MMD-HSIC-DUAL}.} \ $^\S$\enspace Xunye Tian$^*$$^\S$\enspace Liuhua Peng$^\S$\enspace Chao Lei$^\S$ \\
\textbf{Antonin Schrab$^\dag$ \quad Danica J. Sutherland$^\ddag$ \quad Feng Liu$^\S$} \\
University of Melbourne$^\S$ \quad University of Cambridge$^\dag$ \quad UBC \& Amii$^\ddag$ \\
\texttt{\{zhijianzhou.ml, xunyetian.ml\}@gmail.com} \\
\texttt{liuhua.peng@unimelb.edu.au} \\
\texttt{clei1@student.unimelb.edu.au}\\
\texttt{afls2@cam.ac.uk \quad dsuth@cs.ubc.ca \quad fengliu.ml@gmail.com}
}
\begin{document}
\maketitle

\begin{abstract}
To adapt kernel two-sample and independence testing to complex structured data, aggregation of multiple kernels is frequently employed to boost testing power compared to single-kernel tests. However, we observe a phenomenon that directly maximizing multiple kernel-based statistics may result in highly similar kernels that capture highly overlapping information, limiting the effectiveness of aggregation.
To address this, we propose an aggregated statistic that explicitly incorporates \emph{kernel diversity} based on the covariance between different kernels. Moreover, we identify a fundamental challenge: a \emph{trade-off} between the diversity among kernels and the test power of individual kernels, i.e., the selected kernels should be both effective and diverse. This motivates a testing framework with selection inference, which leverages information from the training phase to select kernels with strong individual performance from the learned diverse kernel pool. We provide rigorous theoretical statements and proofs to show the consistency on the test power and control of Type-I error, along with asymptotic analysis of the proposed statistics. Lastly, we conducted extensive empirical experiments demonstrating the superior performance of our proposed approach across various benchmarks for both two-sample and independence testing.$^\P$
\end{abstract}

\section{Introduction}\label{sec:intro}
In modern machine learning, non-parametric hypothesis tests have become essential for comparing probability distributions and detecting statistical dependencies without imposing restrictive model assumptions. Kernel-based methods provide a powerful framework for these tasks by embedding probability distributions into reproducing kernel Hilbert spaces (RKHS), enabling rigorous yet flexible measures of discrepancy and dependence \cite{Gretton:Borgwardt:Rasch:Scholkopf:Smola2006, Gretton:Fukumizu:Teo:Song:Scholkopf2007}. For example, the Maximum Mean Discrepancy (MMD) is a prominent kernel two-sample test metric used to determine whether two sets of observations originate from the same distribution \cite{Gretton:Borgwardt:Rasch:Scholkopf:Smola2006, Gretton:Borgwardt:Rasch:Scholkopf:Smola2012, Lopez-Paz:Oquab2017,Sutherland:Tung:Strathmann:De:Ramdas:Smola:Gretton2017,Liu:Xu:Lu:Zhang:Gretton:Sutherland2020, Schrab:Kim:Guedj:Gretton2022,Schrab:Kim:Albert:Laurent:Guedj:Gretton2023,Biggs:Schrab:Gretton2023,Chau:Schrab:Gretton:Sejdinovic:Muandet2025,schrab2024robust, Tian:Peng:Zhou:Gong:Gretton:Liu2025}. Similarly, the Hilbert–Schmidt Independence Criterion (HSIC) is a related method designed to measure statistical dependence between random variables, thus serving as a test of independence \cite{Gretton:Fukumizu:Teo:Song:Scholkopf2007, Jitkrittum:Szabo:Gretton2017,Shekhar:Kim:Ramdas2022a,Ren:Xia:Zhang:Guan:Zhou2024,Xu:Liu:Sutherland2024}. They use kernel methods to enhance statistical power and are widely adopted in machine learning fields including domain adaptation \cite{Gong:Zhang:Liu:Tao:Glymour2016, Chi:Liu:Yang:Lan:Liu:Han:Cheung:Kwok2021}, generative modeling \cite{Binkowski:Sutherland:Arbel:Gretton2018}, adversarial learning \cite{Gao:Liu:Zhang:Han:Liu:Niu:Sugiyama2021}, machine-generated text detection \cite{Zhang:Song:Yang:Li:Han:Tan2024, Song:Yuan:Zhang:Fang:Yu:Liu2025}, causal discovery \cite{Peters:Mooij:Janzing:Scholkopf2014}, semi-supervised representation learning \cite{Li:Pogodin:Sutherland:Gretton2021}, continual learning \cite{Wang:Zhan:Gong:Shao:Ioannidis:Wang:Dy2023}, and more.


\textbf{Related works.} Kernel aggregation, combining multiple kernels into a single test procedure, has proven to be highly effective in non-parametric hypothesis testing, often yielding substantial gains in statistical power \cite{Schrab:Kim:Guedj:Gretton2022, Biggs:Schrab:Gretton2023,schrab2025optimal}. This is because the choice of single kernel is critical: even though a well-chosen kernel can greatly enhance a test’s ability to detect departures from the null hypothesis, a poorly chosen kernel may fail to capture the relevant differences. To mitigate the risk of selecting a suboptimal kernel, a common strategy is to aggregate test statistics across a collection of kernels rather than relying on any single kernel. Such multi-kernel aggregation approaches have been widely adopted in various hypothesis testing scenarios – including two-sample testing \cite{Schrab:Kim:Guedj:Gretton2022, Schrab:Kim:Albert:Laurent:Guedj:Gretton2023, Biggs:Schrab:Gretton2023,Kim:Balakrishnan:Wasserman2022,Chatterjee:Bhattacharya2023} and independence testing \cite{Schrab:Kim:Guedj:Gretton2022, Kim:Balakrishnan:Wasserman2022, Albert:Laurent:Marrel:Meynaoui2022,Scheidegger:Horrmann:Buhlmann2022} – and have consistently demonstrated improved test power (i.e., the probability of correctly rejecting the null hypothesis under the alternative) and adaptivity. In these works, aggregating kernels with different bandwidths or characteristics enables the tests to capture a broad range of potential data structures, often achieving higher power than single-kernel methods. 

\begin{wrapfigure}{r}{0.4\textwidth}
    \vspace{-1em}
    \includegraphics[width=0.4\textwidth]{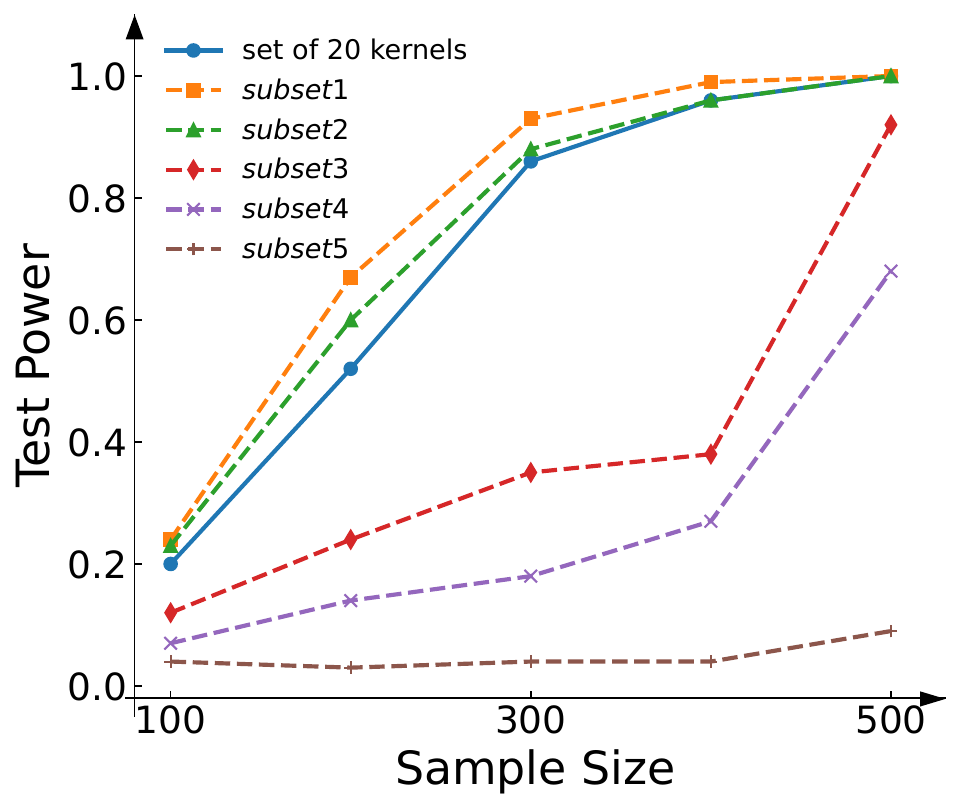}
    \caption{Comparing the test power of aggregating different sets of kernels in the two-sample testing problem on the BLOB dataset. The solid blue line shows the performance when aggregating all 20 kernels. The five dotted lines represent the test power when aggregating five different randomly selected subsets (each containing 5 kernels).}
    \label{figure1}
    \vspace{-1em}
\end{wrapfigure}

\textbf{Motivations.} Aggregating multiple statistics\footnote{In this work, we \emph{aggregate multiple test statistics into a single statistic} \cite{Biggs:Schrab:Gretton2023,Chatterjee:Bhattacharya2023,schrab2025unified}, initially via a sum for simplicity, eventually via a weighted $\ell_2$ norm. This differs multiple testing across different kernels \cite{Schrab:Kim:Albert:Laurent:Guedj:Gretton2023,Albert:Laurent:Marrel:Meynaoui2022,schrab2022ksd}.} with diverse kernels is a powerful approach to capture complex distributional characteristics and achieve higher test power in hypothesis testing. However, our research uncovers a core limitation that, contrary to conventional wisdom, not all kernels contribute to the test power, and simply using more kernels does not always imply higher power. In other words, the inclusion of uninformative or redundant kernels can potentially reduce the effectiveness of the test. As illustrated in Figure~\ref{figure1}, we evaluate the test power of the two-sample test using a set of 20 kernels, implemented as \cite{Chatterjee:Bhattacharya2023}, and also perform the test using randomly selected subsets of 5 kernels. Our results demonstrates that the performance varies significantly across different subsets, and in some cases, the aggregation over a small subset of kernels can indeed outperform the aggregation of full set of kernels, indicating that indiscriminately ensembling various kernels may introduce redundancy that dilutes the test’s power. Thus, our work highlights the importance of identifying and utilizing informative kernels rather than relying on a large, potentially redundant collection.

\textbf{Contributions.} We propose a new kernel aggregating methods in hypothesis testing, \emph{Diverse U-statistic Aggregation with Learned kernels (DUAL)}, which improves kernel-based nonparametric tests (e.g., MMD and HSIC) by introducing a notion of \emph{diversity} inspired by ensemble learning, where diversity among base models is crucial for performance \cite{Breiman1996,Breiman2001,Dietterich2000}. In particular, DUAL computes the covariance matrix of $U$-statistics obtained from multiple kernels and leverages it to quantify the pairwise diversity among these kernels. Building on this diversity measure, we develop a novel test statistic that integrates each kernel’s $U$-statistic with the pairwise diversity between kernels, thereby capturing complementary information and improving test sensitivity. Furthermore, we employ post-selection inference in the testing procedure to adaptively maximize test power through informed kernel selection while rigorously controlling the Type-I error rate. As a result, DUAL provides a general power enhancement for both two-sample and independence tests—instantiated as MMD-DUAL and HSIC-DUAL—which harnesses the benefits of aggregating multiple kernels while mitigating the influence of uninformative or weak kernels. We provide theoretical guarantees for the proposed approach (including valid Type-I error control and improved asymptotic power) and present extensive empirical validation on diverse benchmarks, demonstrating that DUAL-based tests achieve strong performance relative to existing state-of-the-art methods.


\section{Preliminaries}\label{sec:pre}
In this section, we provide background about the non-parametric hypothesis testing problems that we are interested in, including both \emph{MMD two-sample testing} and \emph{HSIC independence testing}. To begin, we introduce the concept of the second-order $U$-statistic with a kernel $\kappa$, which is a key statistical tool. Suppose we have a random sample $W=\{ \w_1, \w_2, \ldots, \w_n\} $ from some distribution $\bW$ on a separable metric space $\cW$. Let $h(\w_1, \w_2;\kappa)$ be a symmetric function of two arguments defined over the kernel $\kappa$. The second-order $U$-statistic, with computational complexity $O(n^2)$, is defined as
\vspace{-1mm}
\begin{equation}\label{eq:Ustat}
U_n^\kappa(W) = \binom{n}{2}^{-1} \sum_{1 \leq i_1 < i_2\leq n} h(\w_{i_1}, \w_{i_2};\kappa)\ .
\vspace{-2mm}
\end{equation}

For two-sample and independence tests, the widely used kernel-based test statistics MMD and HSIC, respectively, can each be formulated within the framework of $U$-statistics, which we introduce below. 
See \cite{schrab2025practical} for a detailed introduction to these, and  \cite{schrab2025unified} for an overview of MMD/HSIC testing results.

\textbf{Two-sample Test with MMD.} Let $\bP$ and $\bQ$ denote two unkown Borel probability measures over an instance space $\cX\subseteq \bR^d$, and draw samples $X=\{\x_i\}_{i=1}^n\sim\bP^n$ and $Y=\{\y_j\}_{j=1}^n\sim\bQ^n$. We aim to test the null hypothesis $H_0:\bP=\bQ$ against the alternative hypothesis $H_1:\bP\neq\bQ$. The key step in this test is to quantify the discrepancy between distribution $\bP$ and $\bQ$.
Writing $W=\{\w_i\}_{i=1}^n=\{(\x_i,\y_i)\}_{i=1}^n$, an unbiased estimate of the squared MMD~\citep{Gretton:Borgwardt:Rasch:Scholkopf:Smola2012} with kernel $\kappa$ is
\begin{gather*}
\text{MMD}^2_{n,\kappa}(W)=\binom{n}{2}^{-1} \sum_{1 \leq i_1 < i_2 \leq n} h_{\text{MMD}}^{(\kappa)}\left(\w_{i_1},\w_{i_2} \right)\\
h_{\text{MMD}}^{(\kappa)}((\x,\y),(\x',\y'))=\kappa(\x,\x')+\kappa(\y,\y')-\kappa(\x,\y')-\kappa(\y,\x').
\end{gather*}

\textbf{Independence Test with HSIC.} Let $\bU_{xy}$ be a Borel probability measure defined on the space $\cX \times \cY$, and draw an i.i.d. sample $(X,Y)=\{(\x_1,\y_1),(\x_2,\y_2),\dots,(\x_m,\y_m)\}\sim\bU_{xy}^m$. Denote by $\bU_x$ and $\bU_y$ the marginal distributions of $\x$ and $\y$ respectively. We aim to test the null hypothesis $H_0:\bU_{xy}=\bU_x\times\bU_y$ (independence) against the alternative hypothesis $\bU_{xy}\neq\bU_x\times\bU_y$ (dependence).
Let $n=\lfloor m/2\rfloor$ and
$W=\{\w_i\}_{i=1}^n=\{(\x_i,\x_{i+n}, \y_i,\y_{i+n})\}_{i=1}^n$.
To measure the discrepancy between $\bU_{xy}$ and $\bU_x\bU_y$, we estimate the HSIC with kernels $\gamma$ on $\cX$ and $\ell$ on $\cY$ as\footnote{We use the second-order HSIC estimate on $\lfloor m/2 \rfloor$ quadruples \citep{Schrab:Kim:Guedj:Gretton2022} for its greater convenience over the complete unbiased HSIC fourth-order $U$-statistic (which is also computable in quadratic time) \citep{Song:Smola:Gretton:Bedo:Borgwardt2012}.} 
\begin{gather*}
\text{HSIC}^{(\gamma,\ell)}_n(W) = \binom{n}{2}^{-1} \sum_{1 \leq i_1 < i_2 \leq n}h_{\text{HSIC}}^{(\gamma,\ell)}\left(\w_{i_1}, \w_{i_2}\right)
\\
h_{\text{HSIC}}^{(\gamma,\ell)}\left((\x_1, \x_2, \y_1, \y_2), (\x_1', \x_2', \y_1', \y_2')\right) =
\tfrac14
 h_{\text{MMD}}^{(\gamma)}\left((\x_1,\x_2), (\x_1',\x_2')\right)h_{\text{MMD}}^{(\ell)}\left((\y_1,\y_2), (\y_1',\y_2')\right) .
\end{gather*}
For consistency with MMD,
we will refer to the product kernel
$\kappa((\x, \y), (\x', \y')) = \gamma(\x, \x') \ell(\y, \y')$
as ``the kernel'' of HSIC,
as justified by HSIC's relationship to MMD \citep[Thm.~25]{Gretton:Borgwardt:Rasch:Scholkopf:Smola2012}.

\textbf{Degeneracy.} MMD and HSIC are both used to assess the difference between two distributions 
(i.e., $\bP,\bQ$ and $\bU_{xy},\bU_x\!\!\times\!\bU_y$)
\cite{Gretton:Borgwardt:Rasch:Scholkopf:Smola2006, Gretton:Fukumizu:Teo:Song:Scholkopf2007, Gretton:Borgwardt:Rasch:Scholkopf:Smola2012}. Correspondingly, in the context of two-sample and independence tests, the null hypotheses take the form $H_0$: \emph{the two distributions are identical}, while the alternative hypotheses are formulated as $H_1$: \emph{the two distributions differ}.  The $U$-statistics underlying MMD and HSIC exhibit a common structural property: they are \emph{first-order degenerate} under the null hypothesis $H_0$, and typically \emph{non-degenerate} under the alternative hypothesis $H_1$. Formally, a $U$-statistic with function $h(\cdot;\kappa)$ is said to be \emph{first-order degenerate} if its conditional expectation satisfies
\[
h_1(\w_1;\kappa)= \mathbb{E}[h(\w_1,\w_2;\kappa)\mid \w_1] = 0.
\]
If $\operatorname{Var}_{\w_1}(h_1(\w_1;\kappa)) \neq 0$, the $U$-statistic is classified as \emph{non-degenerate}. This degeneracy structure plays a critical role in determining the asymptotic distribution of the test statistics under the null and alternative hypotheses, as well as in the design of our proposed test statistic.

\section{Motivation}
\label{Motivation}
In this section, we will identify a phenomenon consistently observed across prior aggregation methods: while individually strong kernels are necessary, aggregating redundant kernels can degrade performance, and including weak kernels adds little useful information.

\textbf{Aggregating better kernels may not give higher performance.} Intuitively, one might expect that an aggregation of powerful kernels would outperform each kernel on its own. However, the empirical results tell a more nuanced story. For example, in Figure~\ref{fig2a} we observe that, for three high-performing kernels $\kappa_1, \kappa_2$ and $\kappa_3$, kernel $\kappa_2$ achieves higher test power than kernel $\kappa_3$.
Now, consider probably the simplest aggregation method:
combine the features used by two kernels,
corresponding to using the kernel which is their sum.
In this case, we have simply that
$U_n^{\{\kappa_1,\kappa_2\}} = U_n^{\kappa_1} + U_n^{\kappa_2}$. 
Even though $\kappa_2$ is stronger than $\kappa_3$,
the combination $\{\kappa_1,\kappa_3\}$ attains greater test power than $\{\kappa_1,\kappa_2\}$ (Figure~\ref{fig2b}).

The reason for this seemingly counter-intuitive result lies 
in the diversity of information that different kernels contribute. In the context of $U$-statistic-based tests, each kernel $\kappa$ produces a test statistic $U_n^\kappa$ that reflects a particular view of the data. If two kernels are highly correlated in the information they capture, their $U$-statistics will also be highly correlated. In our example, the test statistics $U_n^{\kappa_1}$ and $U_n^{\kappa_2}$ are more strongly correlated with each other than $U_n^{\kappa_1}$ and $U_n^{\kappa_3}$. Consequently, $\kappa_1$ and $\kappa_2$ exhibit lower diversity: \emph{they redundantly capture similar aspects of the underlying distributional difference}. Aggregating redundant kernels (as in $\{\kappa_1,\kappa_2\}$) offers little new information beyond what $\kappa_1$ already provides. By contrast, $\kappa_1$ and $\kappa_3$ are less correlated, so $\kappa_3$ contributes complementary information that $\kappa_1$ alone misses, making the combined statistic $U_n^{\{\kappa_1,\kappa_3\}}$ more informative.

\begin{figure}[t]
    \centering
    \begin{subfigure}[t]{0.48\textwidth}
        \includegraphics[width=\textwidth]{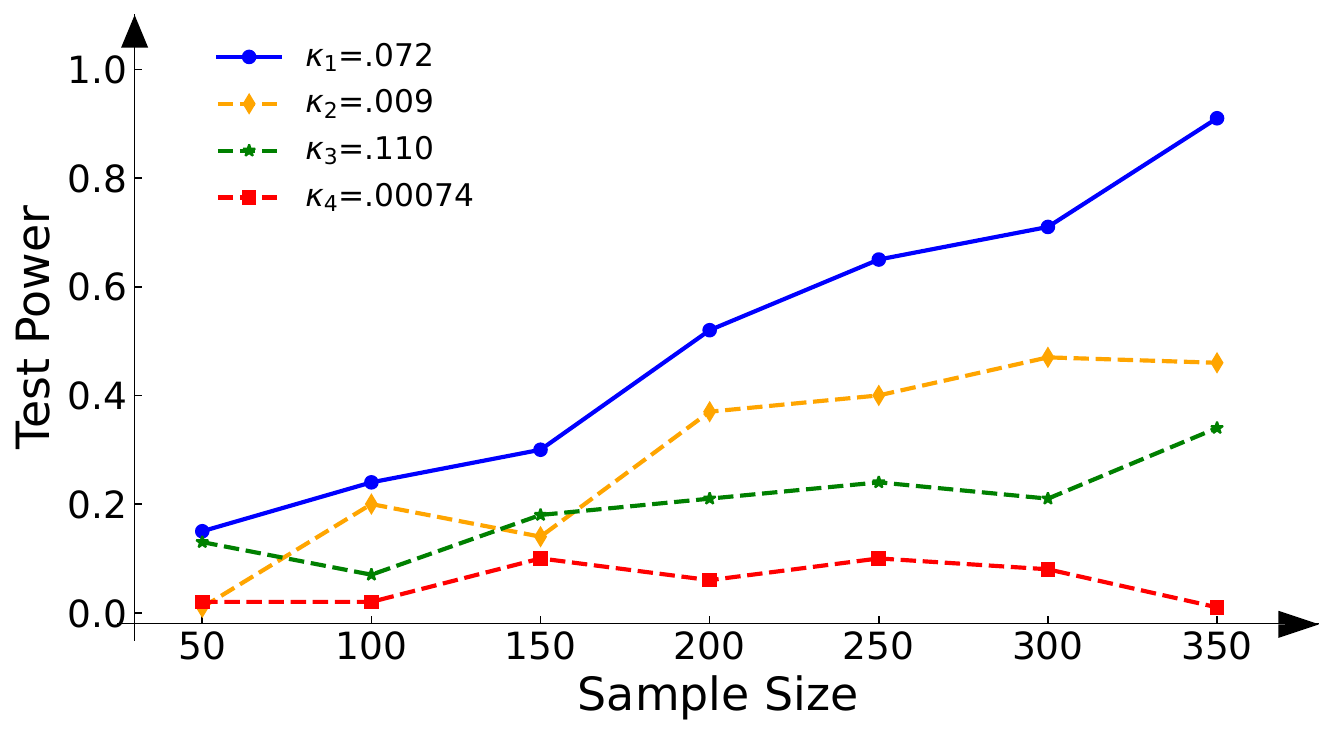}
        \caption{Test Power of Different Kernels}
        \label{fig2a}
    \end{subfigure}
    \hfill
    \begin{subfigure}[t]{0.48\textwidth}
        \includegraphics[width=\textwidth]{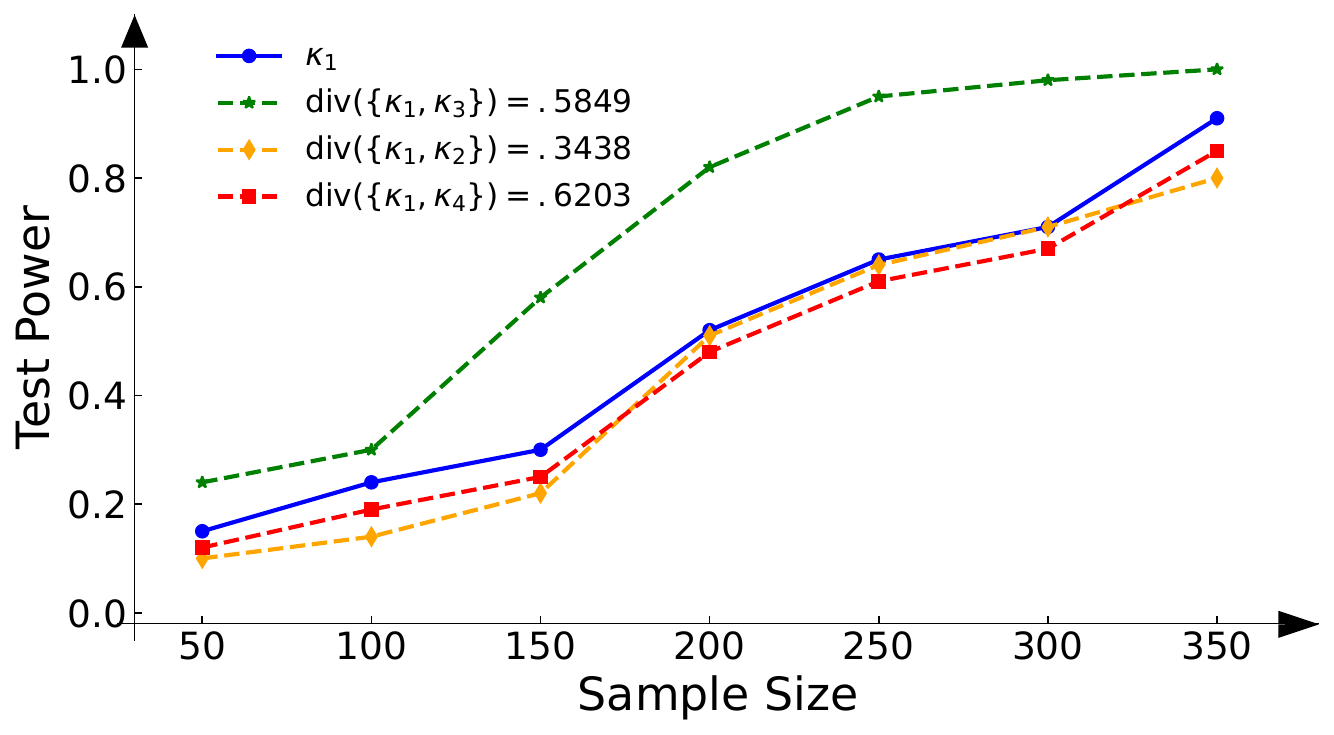}
        \caption{Test Power of Aggregated Kernels}
        \label{fig2b}
    \end{subfigure}
    \caption[]{Test power versus samples size on BLOB dataset. (a) The performance of four different individual kernels with different bandwidths. (b) The performance of aggregating the first kernel $\kappa_1$ with each of the kernels. The diversity\footnotemark{} between $\kappa_1$ and $\kappa_4$ is the largest, and that between $\kappa_1$ and $\kappa_2$ is the smallest.}
    \label{fig:illust}
    \vspace{-1.5em}
\end{figure}
\footnotetext{Motivated by \citep{Wu:Liu:Xie:Chow:Wei2021, Wood:Mu:Webb:Reeve:Lujan:Brown2023}, the relative diversity value for kernel $\kappa_i$ with $i\in\{2,3,4\}$ w.r.t. $\kappa_1$ is computed as $\left(1+|\mathrm{Cor}(U_n^{\kappa_1}(W), U_n^{\kappa_i}(W))|\sqrt{\text{Var}(U_n^{\kappa_1}(W))/\text{Var}(U_n^{\kappa_i}(W))}\right)^{-1}$. Here, the term $\mathrm{Cor}(\cdot, \cdot)$ denotes the Pearson correlation and the square root term serves as a scaling factor when comparing the relative diversity of kernels $\kappa_2$, $\kappa_3$, and $\kappa_4$ with respect to kernel $\kappa_1$.}

\textbf{Quality of kernels matters as well.}
However, \emph{diversity alone is not sufficient to guarantee high performance}. The kernels must also be effective in detecting the effect of interest. To illustrate, consider kernel $\kappa_4$, which is highly diverse relative to $\kappa_1$. Among the pairs we consider, $\{\kappa_1,\kappa_4\}$ has the greatest diversity in the sense of capturing very different statistical features. Nonetheless, as seen in Figure~\ref{fig2b}, the aggregated $\{\kappa_1,\kappa_4\}$ does not perform as well as $\{\kappa_1,\kappa_3\}$, nor $\{\kappa_1,\kappa_2\}$. The reason is that $\kappa_4$ is a `weak' kernel: its individual test power is too low across sample sizes (Figure~\ref{fig2a}), which indicates that it cannot provide enough useful information to detect the pattern differences. 
This highlights that an ineffective kernel, no matter how different, can drag down the performance of an otherwise strong aggregation. Thus, high test power from an aggregated test statistic arises when the component kernels are \emph{both individually powerful and mutually complementary}.

\section{Aggregating $U$-Statistic with Diversity}\label{sec:agg_stat}
As introduced in the Section~\ref{Motivation}, it is crucial to introduce diversity into the multiple kernel aggregation. Here, we propose the \emph{multivariate U-statistic}.

\textbf{Multivariate $U$-Statistic.} Given a constant integer $c>1$, let $\cK = \{\kappa_1, \kappa_2,...,\kappa_c\}$ denote a set of $c$ different kernels.
To incorporate information from multiple kernels, we construct a multivariate $U$-statistic by aggregating the individual $U$-statistics defined in \eqref{eq:Ustat} with each kernel in $\cK$. For a random sample $W$, the resulting multivariate $U$-statistic is computed with complexity $\mathcal O(cn^2)$ as:
\begin{eqnarray}
\U_n^\cK(W) &=& \big(U_n^{\kappa_1}(W), U_n^{\kappa_2}(W),..., U_n^{\kappa_c}(W)\big)^T \label{eq:multiU}\ .
\end{eqnarray}
Given the potential redundancy in the information captured by different kernels, we investigate the diversity among multiple kernels and integrate their contributions in a manner that accounts for their mutual dependencies. Specifically, the diversity between $\kappa_a$ and $\kappa_b$ for $1\leq a,b\leq c$ can be characterized by the (co)variance $\sigma_{a,b}$ of $U_n^{\kappa_a}(W)$ and $U_n^{\kappa_b}(W)$. In practice, the true (co)variance is unknown and must be estimated from the observed sample $W$, which poses a particular challenge for MMD and HSIC. Under the null hypothesis $H_0$, these $U$-statistics are first-order degenerate, where the second-order (co)variance dominates and scales as $O(n^{-2})$. In contrast, under the alternative hypothesis $H_1$, the $U$-statistics are non-degenerate, and the first-order (co)variance dominates, scaling as $O(n^{-1})$. Consequently, the naive rescaling of the (co)variance by $n$ yields convergence to 0 under $H_0$, whereas rescaling by $n^2$ results in convergence to $+\infty$ under $H_1$. This discrepancy highlights the difficulty of constructing a (co)variance estimator that consistently converges to a finite, non-zero limit across both hypotheses, which is essential in enabling meaningful comparison and aggregation of kernel-based test statistics under both null and alternative hypotheses.


Fortunately, we can always apply the second-order (co)variance estimator on samples from the null hypothesis to investigate the diversity among multiple kernels, motivated by~\cite{Chatterjee:Bhattacharya2023}.\footnote{The original method in~\cite{Chatterjee:Bhattacharya2023} focuses specifically on MMD with a tailored (co)variance estimator, while we generalize the approach to a broader class of $U$-statistics by employing a unified (co)variance estimator.}
For MMD and HSIC, samples under the null hypothesis can be simulated by resampling the observed data $W$ with replacement. Building on this, we utilize the simulated null samples $W_{H_0}=\{\w'_i\}_{i=1}^n$ to assess the diversity between $\kappa_a$ and $\kappa_b$ by computing the second-order (co)variance estimator~\cite{Serfling2009,Hoeffding1992,Sutherland2019} between $n\cdot U_n^{\kappa_a}(W_{H_0})$ and $n\cdot U_n^{\kappa_b}(W_{H_0})$ with computational complexity $O(n^2)$ as
\begin{equation}\label{eqn:sigma_cal}
n^2\cdot\hat{\sigma}_{H_0,a,b} = n^{2}\binom{n}{2}^{-2} \sum_{1\leq i_1 < i_2\leq n}h(\w'_{i_1},\w'_{i_2};\kappa_a)h(\w'_{i_1},\w'_{i_2};\kappa_b)\ .
\end{equation}
Given the covariance matrix $\widehat{\Sigma}_{H_0}$ with entries $n^2\cdot\hat{\sigma}_{H_0,a,b}$, we integrate the contributions of multiple $U$-statistics (i.e., $\U_n^\cK(W)$) in a manner that accounts for mutual dependencies among kernels as
\begin{equation}\label{eq:aggStat}
T_n^\cK(W) = n^2\left(\U_n^\cK(W)\right)^T\widehat{\Sigma}^{-1}_{H_0}\U_n^\cK(W)\, 
\end{equation}
which is inspired by the \emph{Mahalanobis distance} \cite{DeMaesschalck:Jouan-Rimbaud:Massart2000}, as done in \cite{Chatterjee:Bhattacharya2023}. In this work, we assume that the covariance matrix $\widehat{\Sigma}_{H_0}$ is strictly positive-definite. Notably, the dimension of the covariance matrix is decided by the number of kernels, i.e., $c$, with a computational complexity $\mathcal O(c^3)$, independent of the sample size $n$; thus computational complexity remains low as $n$ increases. The asymptotic properties of the multivariate $U$-statistic \eqref{eq:multiU}, the aggregated statistic \eqref{eq:aggStat}, and the estimated covariance matrix $\widehat{\Sigma}_{H_0}$ are provided in Appendix~\ref{app:asym_agg}.

\begin{remark}\label{rem:cov}
In our multivariate  $U$-statistic \eqref{eq:aggStat}, each dimension of $n\U_n^\cK(W)$ is normalized to a common scale using matrix of covariances of the different kernels. This normalization is crucial to ensure that kernels with varying scales or magnitudes do not disproportionately influence the statistic, thereby mitigating potential biases of aggregation.
\end{remark}

\vspace{-2mm}
\section{Two-sample and Independence Testing with Learned Diverse Kernels} \label{sec:implementation}
\vspace{-2mm}
In this section, we introduce the implementation pipeline of DUAL, which follows the \emph{data-splitting approach} for kernel selection~\cite{Liu:Xu:Lu:Zhang:Gretton:Sutherland2020, Tian:Liu2023, Gretton:Sriperumbudur:Sejdinovic:Strathmann:Balakrishnan:Balakrishnan:Fukumizu2012, Zhou:Ni:Yao:Gao2023}. Even though data-splitting will reduce the sample size in the testing procedure, learning diverse and powerful kernels can \emph{gain extra power} and \emph{adapt to various datasets}. We partition the dataset into a training set, $W_{tr}=\{\w_{tr,i}\}_{i=1}^n$, and a testing set, $W_{te}=\{\w_{te,i}\}_{i=1}^n$. For notational convenience, we assume both sets contain $n$ elements.\footnote{One can aggregate over multiple splits, as in \cite{DiCiccio:DiCiccio:Romano2020}, but this may not actually help compared to a single split.}
\vspace{-1.5mm}
\subsection{Learning Multiple Diverse Kernels} \label{subsec:learning}
\vspace{-2mm}
The selection of kernels
that maximize the aggregated statistic $T_n^\cK(W)$ effectively minimizes the (co)variances of the $U$-statistics under the null hypothesis while maximizing the $U$-statistics under the alternative hypothesis. This enhances the power of hypothesis testing, as discussed in Section~\ref{sec:intro}. Thus, given training samples $W_{tr}=\{\w_{tr,i}\}_{i=1}^n$, the kernel set $\cK$ is learned as
\begin{equation}\label{eq:optimzation}
\{\kappa_1,\kappa_2,...,\kappa_c\}\in \arg\max\{T_n^\cK(W_{tr})\}\ ,
\end{equation}
where $T_n^\cK(W_{tr})$ is defined as in \eqref{eq:aggStat} but computed based on the training samples $W_{tr}$. We apply a gradient method~\cite{Boyd:Boyd:Vandenberghe2004, Jiang:Liu:Wang:Song2016},
initialized with a set of distinct kernels,
to maximize the aggregated statistic over pre-specified kernels, following previous approaches~\cite{Chwialkowski:Ramdas:Sejdinovic:Gretton2015,Jitkrittum:Szabo:Chwialkowski:Gretton2016}.\footnote{%
Prior work on choosing single MMD or HSIC kernels \citep{Sutherland:Tung:Strathmann:De:Ramdas:Smola:Gretton2017,Liu:Xu:Lu:Zhang:Gretton:Sutherland2020,Ren:Xia:Zhang:Guan:Zhou2024,Xu:Liu:Sutherland2024}
has emphasized the importance of maximizing the power of a test,
rather than maximizing the statistic.
Our statistic, however, is already studentized,
which makes directly estimating the test power less practical (Theorem~\ref{thm:alt_converge})
but also
removes the incentive to e.g.\ simply
scale $\kappa$ to $C \kappa$ for some large $C$;
our statistic is invariant to such changes.
Also see Remark~\ref{rem:snr}.
}
The learned kernels are independent of testing samples, ensuring that their use in testing does not violate the type-I error constraint. 

\begin{remark}\label{rem:snr}
When $c = 1$,
the statistic \eqref{eq:aggStat}
becomes the square of 
the signal-to-noise ratio of that single kernel.
This was the objective proposed in previous work
to select a single MMD
\cite{Sutherland:Tung:Strathmann:De:Ramdas:Smola:Gretton2017,Liu:Xu:Lu:Zhang:Gretton:Sutherland2020}
or HSIC \cite{Ren:Xia:Zhang:Guan:Zhou2024,Xu:Liu:Sutherland2024}
kernel;
thus \eqref{eq:optimzation} reduces to those methods when $c = 1$.
\end{remark}

\subsection{Testing with both Diverse and Powerful Kernels}\label{sec:agg_stat_sele}
After learning kernels to optimize the diversity within our multivariate $U$-statistics, we would further like to select only the effective kernels. We focus on kernels that effectively capture evidence against the null hypothesis (under which test statistics have mean zero), while placing less emphasis on kernels that demonstrate limited performance in subsequent testing.

\textbf{Extracting prior knowledge.} In order to identify and select effective kernels within our optimized aggregation, we extract essential sign information from the training data—computed as the signum vector of the decomposed statistic $T_n^\cK(W_{tr})$, inspired by~\cite{Zhou:Ni:Yao:Gao2023}. Specifically, we decompose $\widehat{\Sigma}_{H_0}=\widehat{\L}_{H_0}\widehat{\L}_{H_0}$ using the Schur method\footnote{The Schur decomposition works for positive-definite matrices, and in this paper, we assume the covariance matrix $\hat{\Sigma}_{H_0}$ is positive-definite. In practice, to ensure positive definiteness, we can replace $\hat{\Sigma}_{H_0}$ with $\hat{\Sigma}_{H_0} + \lambda \I$, where $\lambda > 0$ is a small regularization constant and $\I$ denotes the identity matrix.}~\cite{Deadman:Higham:Ralha2012} with computational complexity $O(c^3)$, yielding:
\begin{equation}\label{eqn:T_tr}
T_n^\cK(W_{tr})=n^2(\U_n^\cK(W_{tr}))^T\widehat{\Sigma}_{H_0}^{-1}\U_n^\cK(W_{tr}) =  n^2\left\|\widehat{\L}_{H_0}^{-1}\U_n^\cK(W_{tr})\right\|_2^2\ ,
\end{equation}
where $\|\cdot\|_2$ denotes the $\ell_2$ norm. $\widehat{\L}_{H_0}^{-1}$ \emph{helps reduce correlations among different kernels} \cite{Zhou:Ni:Yao:Gao2023}. Then, to select strong kernels that capture complementary information and contribute to the final aggregated statistic, we first investigate the information from training samples by computing the \emph{signum vector} as 
\[
\F_{tr}=\text{sgn}\left(\widehat{\L}_{H_0}^{-1}\U_n^\cK(W_{tr})\right)\in\{-1,+1\}^c\ ,
\]
where sgn$(\cdot)$ is signum function as sgn$({\bm a})=(\text{sgn}(a_1),\text{sgn}(a_2),\cdots,\text{sgn}(a_d))^T$ for a vector ${\bm a}=(a_1,a_2,\cdots,a_d)^T$, and sgn$(a_i)=a_i/|a_i|$ for $a_i\neq0$; otherwise, sgn$(a_i)=1$. 

\textbf{Selection inference based on extracted knowledge.} Based on the learned $\cK$, we perform the test using the testing samples $W_{te}=\{\w_{te,i}\}_{i=1}^n$.\footnote{Notably, for statistic \eqref{eqn:T_tr}, we compute $\widehat{\Sigma}_{H_0}$ based on $W^{H_0}{tr}$, which is resampled from $W^{H_0}{tr}$. The entries of $\widehat{\Sigma}_{H_0}$ are calculated as in Eqn.\eqref{eqn:sigma_cal} with $W^{H_0}_{tr}$. Similarly, for statistic \eqref{eqn:T_te}, $\widehat{\Sigma}_{H_0}$ is computed based on $W^{H_0}_{te}$, which is resampled from $W_{te}$. For notational simplicity, we omit the explicit dependence on $W^{H_0}_{tr}$ and $W^{H_0}_{te}$.} The corresponding aggregated statistic is defined as
\begin{equation}\label{eqn:T_te}
T^\cK_{n}(W_{te})=n^2(\U_n^\cK(W_{te}))^T\widehat{\Sigma}_{H_0}^{-1}\U_n^\cK(W_{te})=  n^2\left\|\widehat{\L}_{H_0}^{-1}\U_n^\cK(W_{te})\right\|_2^2\ ,
\end{equation}
which is defined analogously to Eqn.~\eqref{eq:aggStat}, but computed based on the testing samples $W_{te}$.

In a similar manner, we infer the signum vector of testing samples, i.e., $\F_{te}=\text{sgn}\left(\widehat{\L}_{H_0}^{-1}\U_n^\cK(W_{te})\right)$. 
Given the two signum vectors $\F_{tr}$ and $\F_{te}$, we can calculate an indicator vector $\F$ to assess the alignment between the training and testing signum vectors as follows
\begin{equation}\label{eq:align}
\F=\{F_i\}_{i=1}^c\in\{0,+1\}^c \ \ \ \ \textnormal{with} \ \ \ \ F_i = \bI[F_{te,i} = F_{tr,i}]\ ,
\end{equation}
where $F_{te,i}$ and $F_{tr,i}$ are $i$-th elements of $\F_{te}$ and $\F_{tr}$, respectively. The alignment vector $\F$ selects the components of the aggregated statistic that share the same signum value across the training and testing samples. Given the selection, we focus on the components that are more likely to capture deviations from the null hypothesis, and define the test statistic with selection inference as 
\begin{equation}\label{eq:test_stat}
\cT = n^2\left\|\F\odot\widehat{\L}_{H_0}^{-1}\U_n^\cK(W_{te})\right\|_2^2\ ,
\end{equation}
where $\odot$ is the element-wise product.

\begin{remark}\label{rem:select_inference}
Our approach differs from previous bi-directional hypothesis testing~\cite{Zhou:Ni:Yao:Gao2023}, which constructs rejection regions along the directions of $\F$ and $-\F$ to determine if the test statistic lies within these regions. In contrast to their method, which employs an additional parameter calibrated via a separate validation dataset to adjust the significance levels of the rejection regions, our technique avoids such parameter tuning. Additionally, our method diverges from post-selection inference approaches~\cite{Lee:Sun:Sun:Taylor2016,Kubler:Jitkrittum:Scholkopf:Muandet2020}, which select significant statistics based solely on predefined kernels without incorporating insights from the training phase. Such post-selection inference methods are constrained to specific kernel classes and utilize less precise "streaming" estimators, potentially leading to lower-powered tests when using fixed kernels~\cite{Ramdas:Reddi:Poczos:Singh:Wasserman2015,Liu:Xu:Lu:Sutherland2021}.
\end{remark}


\textbf{Wild Bootstrap for Testing Threshold.} To obtain the testing threshold $\tau_\alpha$ for a significance level $\alpha$, we employ the wild bootstrap for $\cT$ with Rademacher random variables~\cite{Shao2010,Chwialkowski:Sejdinovic:Gretton2014,Schrab:Kim:Guedj:Gretton2022,Pogodin:Schrab:Li:Sutherland:Gretton2024}. Specifically, let $B$ be the iteration number of bootstraps. In the $b$-th iteration ($b\in[B]$), we generate i.i.d. Rademacher variables $\mepsilon=(\epsilon_1,\dots,\epsilon_n)$, that is, $\Pr(\epsilon_i=1)=\Pr(\epsilon_i=-1)=1/2$ for $i\in[n]$, and then compute
\[
\U_n^{\cK,b}(W_{te}) = \binom{n}{2}^{-1} \sum_{1 \leq i_1 < i_2\leq n}\epsilon_{i_1}\epsilon_{i_2}\h(\w_{i_1}, \w_{i_2};\cK)\ ,
\]
where $\h(\w_1, \w_2;\cK)=(h(\w_1, \w_2;\kappa_1),...,h(\w_1, \w_2;\kappa_c))^T$.

Correspondingly, we calculate the $b$-th alignment vector $\F^b$, analogous to Eqn.~\eqref{eq:align}, as follows
\[
\F^b=\{F^b_i\}_{i=1}^c\in\{0,+1\}^c \ \ \ \ \textnormal{with} \ \ \ \ F^b_i = \bI[F^b_{te,i}= F_{tr,i}]\ \ \textnormal{and}\ \ \F^b_{te} = \text{sgn}\big(\widehat{\L}_{H_0}^{-1}\U_n^{\cK,b}(W_{te})\big)\ ,
\]
and we perform selection inference with $\F^b$ to derive the $b$-th wild bootstrap statistic as
\[
\cT^b = n^2\left\|\F^b\odot\widehat{\L}_{H_0}^{-1}\U_n^{\cK,b}(W_{te})\right\|_2^2\ .
\]

Taking the original test statistic in Eqn.~\eqref{eq:test_stat} as $\cT^{B+1}$, we estimate the testing threshold (i.e., $(1-\alpha)$-quantile of the null distribution of the test statistic with selection inference) as follows
\begin{equation}\label{eq:testthreshold}
\hat{\tau}_\alpha = \inf \left\{\tau\in\bR:1-\alpha\leq \frac{1}{B+1}\sum^{B+1}_{b=1}\bI[\cT^b\leq\tau]\right\}\ .
\end{equation}
Finally, we propose the test with the testing threshold $\hat{\tau}_\alpha$ and the test statistic $\cT$ in Eqn.~\eqref{eq:test_stat} as
\begin{equation}\label{eq:testing}
\mathfrak{h}(X,Y;\kappa)=\bI[\cT>\hat{\tau}_\alpha]\ ,
\end{equation}
where $\mathfrak{h}(X,Y;\kappa)=1$ means the null hypothesis is rejected; otherwise, it is accepted.

The computational complexity of the above testing procedure is $\mathcal O(Bcn^2 + n^2c^2 + c^3)$. The subsequent theorem characterizes the behavior of the test statistic under both the null and alternative hypotheses.
\begin{theorem}\label{thm:agg_test}
Let $\cK$ be a collection of bounded characteristic kernels. Under the null hypothesis $H_0$, the test in \eqref{eq:testing} has type-I error bounded by $\alpha$, i.e., $\Pr_{H_0}(\mathfrak{h}(X,Y;\kappa)=1)\leq\alpha$, even non-asymptotically. Meanwhile, under any fixed alternative hypothesis $H_1$, and assuming Assumption~\ref{ass:alt_con} (in Appendix~\ref{app:agg_test}) holds, the test has power converging to 1, i.e., $\lim_{n\rightarrow\infty} \Pr_{H_1}(\mathfrak{h}(X,Y;\kappa)=1)=1$.
\end{theorem}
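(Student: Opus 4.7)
I would prove the two conclusions separately: the non-asymptotic Type-I bound via a conditional exchangeability argument for the wild bootstrap—carefully threaded through the selection-inference layer—and the $H_1$ consistency by showing that $\cT$ diverges at rate $n^{2}$ while the threshold $\hat{\tau}_\alpha$ stays $O_p(1)$.

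\textbf{Step 1: Type-I control.} I would first condition on the training sample $W_{tr}$, which fixes the learned kernel set $\cK$, the whitening matrix $\widehat{\L}_{H_0}^{-1}$ (computed from the null-resample $W_{tr}^{H_0}$), and the reference signum $\F_{tr}$. Under $H_0$, the MMD and HSIC $U$-statistics are invariant in law under a natural group action on the testing data ($\x_i\!\leftrightarrow\!\y_i$ swaps for MMD under $\bP=\bQ$; $\y$-permutations for HSIC under independence), and this invariance is exactly what the Rademacher wild bootstrap encodes through the sign products $\epsilon_{i_1}\epsilon_{i_2}$. Leveraging this invariance with the standard wild-bootstrap exchangeability argument of \cite{Chwialkowski:Sejdinovic:Gretton2014,Schrab:Kim:Guedj:Gretton2022,Pogodin:Schrab:Li:Sutherland:Gretton2024}, the $c$-vectors $\U_n^{\cK,1}(W_{te}),\dots,\U_n^{\cK,B}(W_{te}),\U_n^\cK(W_{te})$ are jointly exchangeable conditional on $W_{tr}$. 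Since the selection map $\v\mapsto n^2\|\bI[\text{sgn}(\widehat{\L}_{H_0}^{-1}\v)=\F_{tr}]\odot\widehat{\L}_{H_0}^{-1}\v\|_2^2$ depends only on $\v$ and the fixed $(\widehat{\L}_{H_0}^{-1},\F_{tr})$, it applies uniformly to all $B+1$ entries, so $(\cT^1,\dots,\cT^B,\cT)$ is exchangeable. The quantile rule \eqref{eq:testthreshold} then yields $\Pr_{H_0}(\cT>\hat{\tau}_\alpha\mid W_{tr})\le\alpha$, and taking expectation over $W_{tr}$ gives the marginal non-asymptotic bound.

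\textbf{Step 2: Consistency under $H_1$.} I would assemble three ingredients. (i) By the SLLN for $U$-statistics, $\U_n^\cK(W_{te})\to\mmu^\cK:=(\bE h(\w_1,\w_2;\kappa_j))_{j=1}^c$ a.s., and Appendix~\ref{app:asym_agg} gives $\widehat{\Sigma}_{H_0}\to\Sigma_{H_0}^\star\succ 0$; Assumption~\ref{ass:alt_con} ensures $\mmu^\cK\ne 0$, so $\v^\star:=(\Sigma_{H_0}^\star)^{-1/2}\mmu^\cK\ne 0$. (ii) Because $W_{tr}$ and $W_{te}$ are independent i.i.d.\ copies, both $\widehat{\L}_{H_0}^{-1}\U_n^\cK(W_{tr})$ and $\widehat{\L}_{H_0}^{-1}\U_n^\cK(W_{te})$ converge a.s.\ to $\v^\star$; on coordinates $j$ with $v^\star_j\ne 0$, both $F_{tr,j}$ and $F_{te,j}$ equal $\text{sgn}(v^\star_j)$ with probability tending to $1$, so $F_j\to 1$, whereas coordinates with $v^\star_j=0$ contribute $0$ to the limit regardless of $F_j$. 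Hence $\|\F\odot\widehat{\L}_{H_0}^{-1}\U_n^\cK(W_{te})\|_2^2$ is eventually bounded below by a positive constant, so $\cT=\Theta_p(n^2)\to\infty$. (iii) The Rademacher-perturbed vectors $\U_n^{\cK,b}(W_{te})$ satisfy $\bE[\U_n^{\cK,b}\mid W_{te}]=0$ and $\mathrm{Var}(\U_n^{\cK,b}\mid W_{te})=O(n^{-2})$ by a direct second-moment calculation on $\bE[\epsilon_{i_1}\epsilon_{i_2}\epsilon_{k_1}\epsilon_{k_2}]$ together with boundedness of $\h$, so $n\U_n^{\cK,b}=O_p(1)$ and $\cT^b=O_p(1)$ uniformly in $b$; hence $\hat{\tau}_\alpha=O_p(1)$. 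Combining (i)--(iii) gives $\Pr_{H_1}(\cT>\hat{\tau}_\alpha)\to 1$.

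\textbf{Main obstacle.} The delicate point is making the exchangeability argument survive the selection operator, which is a discontinuous function of the testing data. The crucial observation is that $\F_{tr}$—computed entirely from the training phase—serves as an immutable reference vector that plays an identical role against every bootstrap realization, and that the original statistic $\cT$ corresponds to the trivial Rademacher pattern $(1,\dots,1)$; this uniformity is what allows the single selection map to be applied across all $B+1$ $U$-statistic vectors without breaking exchangeability. A secondary care point in the $H_1$ half is that coordinates of $\v^\star$ equal to zero may have fluctuating $F_j$, but they contribute nothing to the limiting statistic, so Assumption~\ref{ass:alt_con}'s guarantee of at least one nonzero coordinate of $\mmu^\cK$ (and hence of $\v^\star$, by invertibility of $(\Sigma_{H_0}^\star)^{-1/2}$) is exactly what is needed for divergence.
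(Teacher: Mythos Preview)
Your overall skeleton matches the paper's: exchangeability of the wild-bootstrap statistics under $H_0$ yields non-asymptotic level control, and under $H_1$ the test statistic diverges while the bootstrap statistics stay bounded. Two points need correction, though.

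\emph{First}, the whitening matrix $\widehat{\L}_{H_0}^{-1}$ appearing in $\cT$ and in every $\cT^b$ is computed from the \emph{testing-phase} null resample $W_{te}^{H_0}$, not from $W_{tr}^{H_0}$ (see the footnote following Eqn.~\eqref{eqn:T_te}). Consequently, conditioning on $W_{tr}$ alone does not fix it as you state. The exchangeability conclusion still holds---the paper rewrites both $\cT$ and $\cT^b$ uniformly as $\|\max(n\cdot\mathrm{diag}(\F_{tr})\widehat{\L}_{H_0}^{-1}\v,\bm{0})\|_2^2$ and invokes \cite[Theorem~1]{Hemerik:Goeman2018} together with \cite[Lemma~1]{Romano:Wolf2005} to push exchangeability of the $U$-statistic vectors through the common map---but your conditioning argument must be amended.

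\emph{Second}, and more substantively, you misread Assumption~\ref{ass:alt_con}. It does \emph{not} say ``$\mmu^\cK$ has a nonzero coordinate''; that is automatic under $H_1$ with characteristic kernels, since every component of $\U^\cK(\bW)$ is strictly positive. What the assumption actually says is that the training signum $\F_{tr}$---which the paper explicitly treats as \emph{fixed} throughout the proof---agrees with the population signum $\mathrm{sgn}(\L_{H_0}^{-1}\U^\cK(\bW))$ in at least one coordinate $i$. That coordinate then forces $(\mathrm{diag}(\F_{tr})\L_{H_0}^{-1}\U^\cK(\bW))_i>0$, which gives the positive lower bound on the limit of $\cT$ (scaled appropriately). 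Your route in (ii)---arguing that $\F_{tr}$ itself converges to $\mathrm{sgn}(\v^\star)$ because the training sample also grows---is a different argument: it \emph{re-derives} Assumption~\ref{ass:alt_con} asymptotically rather than invoking it. That is internally coherent given both samples have size $n\to\infty$, but it departs from the paper's conditional framing, and you should recognise that ``sign agreement in at least one coordinate'' (not ``$\mmu^\cK\ne 0$'') is precisely what the assumption supplies in the paper's proof.
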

\vspace{-1mm}
In Theorem~\ref{thm:agg_test}, we validate the test by proving that the type-I error is controlled at level $\alpha$ in a non-asymptotic sense under $H_0$, and the test power approaches one asymptotically under fixed $H_1$. The asymptotic properties of the test statistic with selection inference are provided in Appendix~\ref{app:asym_test}.
\vspace{-1mm}
\section{Experiments}
\vspace{-1mm}
\begin{figure}[t]
    \vspace{-2mm}
    \centering
    \includegraphics[width=1\linewidth]{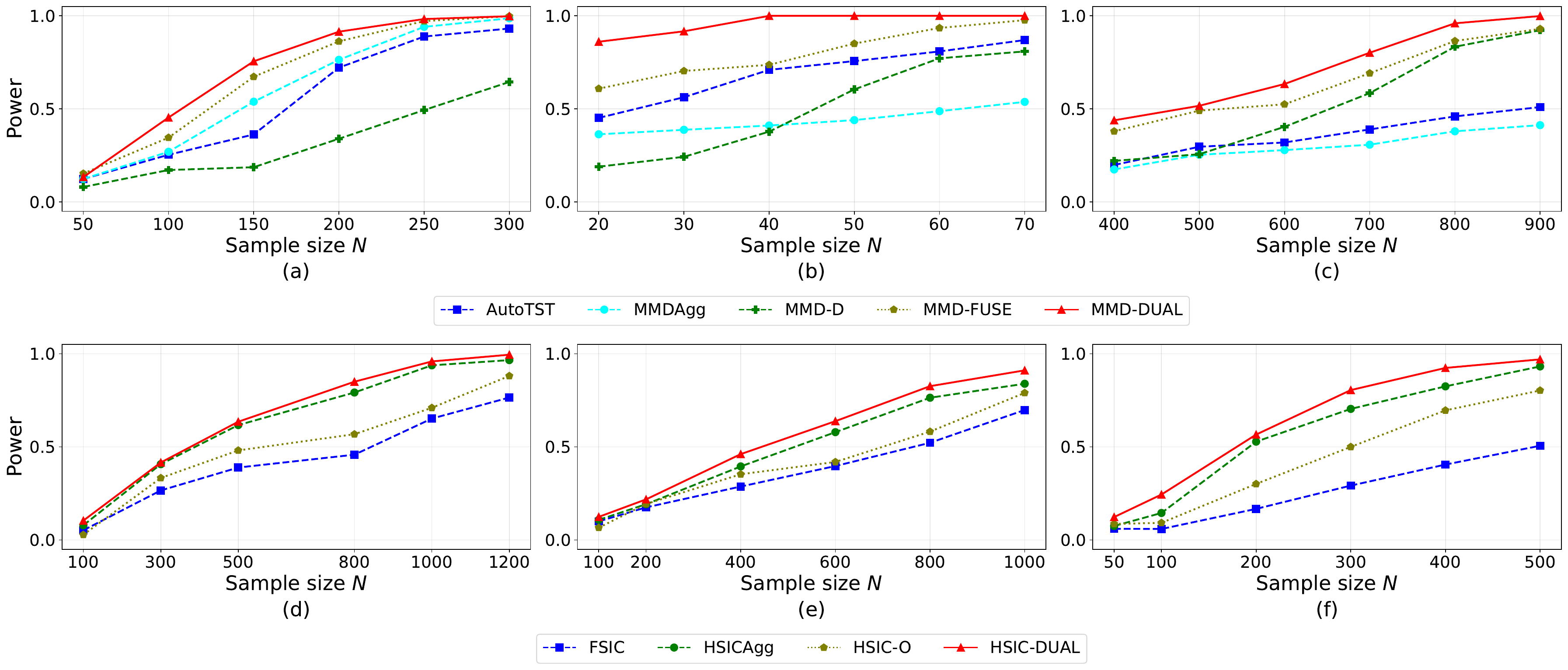}
    \vspace{-5mm}
    \caption{Two-sample $(a\!-\!c)$ experiments on dataset BLOB, MNIST and ImageNet; and independence $(e\!-\!f)$ experiments on dataset Higgs, MNIST and CIFAR10. The power results are averaged over 1,000 repetitions and the type-I error are all controlled under the significant level $\alpha=0.05$, where the type-I error experiments can be found in the supplementary material.}
    \label{fig:main_results}
    \vspace{-1.5em}
\end{figure}

\subsection{Datasets \& Baselines}
We evaluate our proposed methods on benchmarks for two-sample and independence testing. For two-sample testing, we use three datasets: a frequently used synthetic BLOB dataset \cite{Gretton:Sriperumbudur:Sejdinovic:Strathmann:Balakrishnan:Balakrishnan:Fukumizu2012, Jitkrittum:Szabo:Chwialkowski:Gretton2016, Sutherland:Tung:Strathmann:De:Ramdas:Smola:Gretton2017, Liu:Xu:Lu:Zhang:Gretton:Sutherland2020}, the MNIST (versus generative adversarial model DCGAN \cite{Radford:Metz:Chintala2016}) dataset \cite{Liu:Xu:Lu:Zhang:Gretton:Sutherland2020, Schrab:Kim:Guedj:Gretton2022, Schrab:Kim:Albert:Laurent:Guedj:Gretton2023}, and the ImageNet (versus ImageNetV2 \cite{Recht:Roelofs:Schmidt:Shankar2019}) dataset \cite{Liu:Xu:Lu:Zhang:Gretton:Sutherland2020, Zhou:Ni:Yao:Gao2023}. For independence testing, we consider the Higgs dataset (a high-dimensional physics dataset) \cite{Baldi:Sadowski:Whiteson2014}, MNIST, and CIFAR-10. These benchmarks encompass a diverse range of data modalities and difficulties, providing a rigorous evaluation of our tests. 

We compare MMD-DUAL (for two-sample tests) and HSIC-DUAL (for independence tests) against both standard baselines and recent state-of-the-art methods. In the two-sample case, baseline methods include the AutoML two-sample test (AutoTST) \cite{Kubler:Stimper:Buchholz:Muandet:Scholkopf2022}, the MMD Aggregated test (MMD-Agg) \cite{Schrab:Kim:Albert:Laurent:Guedj:Gretton2023}, the MMD two-sample test with deep kernel (MMD-D) \cite{Liu:Xu:Lu:Zhang:Gretton:Sutherland2020}, and the recently proposed multi-kernel test MMD-FUSE \cite{Biggs:Schrab:Gretton2023}. For independence testing, baselines include the Finite Set Independence Criterion (FSIC) \cite{Jitkrittum:Szabo:Gretton2017}, an aggregated HSIC test (HSIC-Agg) \cite{Schrab:Kim:Guedj:Gretton2022}, and independence testing with optimized bandwidth (HSIC-O). All methods are calibrated to control the Type-I error at $\alpha=0.05$ for a fair comparison. As shown in Figure~\ref{fig:main_results}, the proposed MMD-DUAL and HSIC-DUAL consistently outperform all baselines across the six benchmarks. In every case, our adaptive methods achieve the highest test power (rejection rate under the alternative), demonstrating a clear advantage over both classical and contemporary methods on both two-sample (BLOB, MNIST, ImageNet) and independence (Higgs, MNIST, CIFAR-10) tasks. The detailed description of datasets, baselines, experimental settings can be found in the Appendix \ref{sec:extra_exp}.
\vspace{-1mm}
\subsection{Ablation Study}\label{subsec:ablation}
\vspace{-1mm}
\textbf{Effectiveness of Diverse Kernels and Selection Inference}. To quantify the effect of learning diverse kernels and selection inference in our proposed DUAL, we conduct a series of well-designed ablation study on all the benchmarks. We only display the results of BLOB for MMD-DUAL and Higgs for HSIC-DUAL, and the results on other datasets can be found in Appendix \ref{sec:extra_exp}. Figure~\ref{fig:ablation} (a) and (e) report the test power (for level $\alpha=0.05$) for the full DUAL methods and three ablated variants of each: (i) AU+D: without selection---a variant without the selection inference in the testing procedure (i.e., all candidate are always aggregated, but still optimizing the diversity between kernels in the training procedure), (ii) AU+S: without introducing diversity---a variant without the diverse kernel pool (using an fixed identity covariance matrix) while still performing the selection inference technique, and (iii) AU: plain aggregation---a baseline that uses multiple kernels in an aggregated two-sample and independence testing with neither diversity nor selection inference enhancements. We observe that both MMD-DUAL and HSIC-DUAL (full methods) consistently achieve the highest power, outperforming all ablated variants at every sample size. Removing either component leads to a drop in power. Importantly, both of these variants still outperform the plain aggregation baseline. This indicates that each component---diversity in the kernel choices and selection inference during testing---independently contributes to improving test power. Their combination in DUAL has a cumulative effect, yielding the best performance overall.

\textbf{Kernel Selection Probability by Selection Inference.} We next examine how the selection inference behaves under the null and alternative through the kernel selection probabilities. Under the null hypothesis (see Figure~\ref{fig:ablation} (b) and (f)), each kernel is selected with roughly 50\% probability, which means the selection is approximately uniform across the pool of diverse kernels and guarantee that selection inference has \emph{no} influence on the control of Type-I error. This holds consistently across sample sizes, indicating that in the absence of a signal the procedure effectively randomizes over the kernel choices without bias. In contrast, under the alternative hypothesis (see Figure~\ref{fig:ablation} (c) and (g)), the selection probabilities become increasingly concentrate on the most informative kernels as the sample size grows. In other words, the selection inference technique correctly detects which kernel is capturing the existing dependency or distribution difference, and it selects that kernel with ever-growing frequency. For instance, on the Higgs dataset (Figure~\ref{fig:ablation} (g)), the kernels (Kernel 2 and Kernel 6) that best capture the dependence structure are chosen far more often than others, with its selection probability rising well above 0.5 and eventually approaching nearly 1.0 as the sample size increases. This trend demonstrates that the procedure progressively focuses on the kernels that yield the strongest test statistic, effectively leveraging the most useful features of the data.
\begin{figure}[t]
    \vspace{-2mm}
    \centering
    \includegraphics[width=1\linewidth]{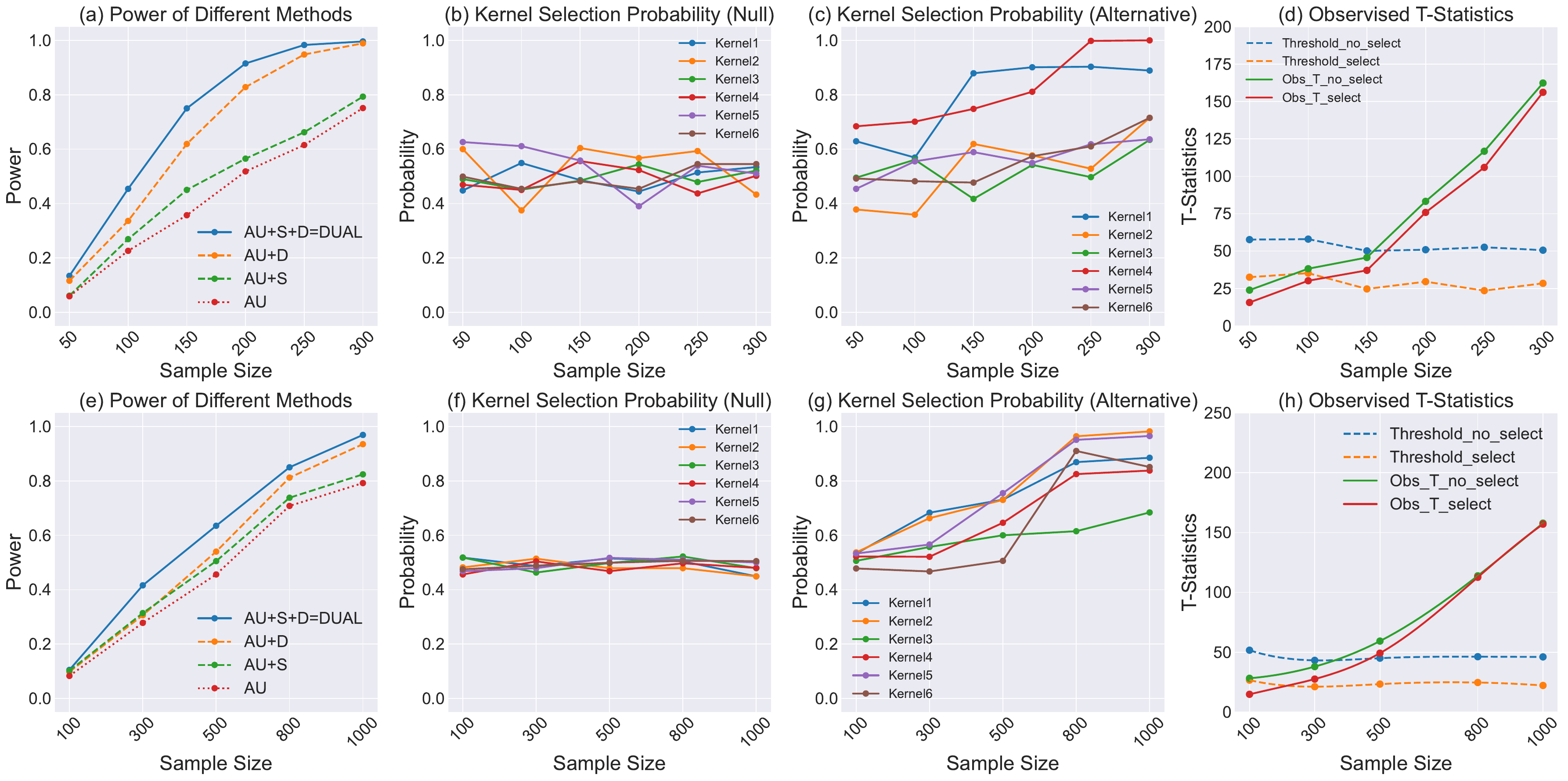}
    \vspace{-5mm}
    \caption{Ablation Study on the effectiveness of diversity and selection inference. $(a\!-\!d)$ are ablation study for MMD-DUAL; $(e\!-\!h)$ are ablation study for HSIC-DUAL. $(a,e)$ Test power for model variants: AU represents simple Aggregated $U$-Statistics; S represents selection inference technique; D represents considering diversity into AU; AU+S+D refers to our proposed DUAL.}
    \label{fig:ablation}
    \vspace{-1.6em}
\end{figure}

\textbf{Testing Threshold and Observed Test Statistics.} We now analyze the impact of selection inference on the test power, which mainly depends on the testing threshold and the observed test statistic (see Figure~\ref{fig:ablation} (d) and (h)). From the two dotted lines, we observe that the testing threshold (derived by $1-\alpha$ quantiles of the aggregated statistic under $H_0$ or wild bootstrap) of applying selection inference is approximately 50\% that of the aggregated statistic on the original full kernel set, because any given kernel is included in the test statistic only about 50\% of the time on average in selection inference procedure. Furthermore, from the two solid lines, we can find that the observed test statistic produced by DUAL is initially approximately half that of the no-selection counterpart for small sample sizes. This is expected: at lower sample size, the selection inference might not able to determine which kernels are signal-carrying, so its test statistic starts off lower. However, as the sample size increases, the selected kernel is almost always the most informative one, and thus the adaptive test statistic grows and eventually matches the magnitude of the no-selection test statistic. Crucially, throughout this process, the DUAL enjoys the advantage of a lower threshold, and as sample size increases, its statistic catches up to the no-selection statistic. This combination---a growing test statistic that converges to the no-selection level, together with a consistently reduced critical threshold---means that DUAL achieves higher power at all sample sizes. 

In summary, the ablation results show that both diversity and selection inference contribute to performance gains, and the selection inference mechanism not only identifies informative kernels under $H_1$ but also effectively control the Type-I error under $H_0$, leading to a substantial improvement in test power for the full DUAL method. A more detailed analytical explanation and analysis to support the results in Figure~\ref{fig:ablation} is provided in Example~\ref{exa:testp} (Appendix~\ref{app:example}).

\subsection{Computational and Scalability Analysis}\label{subsec:discuss}
\textbf{Time Complexity Analysis.} Table~\ref{tab:time_complexity} summarizes the time complexity of MMD-DUAL\footnote{HSIC-DUAL exhibits the same time complexity, as both follow the formulation of the second-order U-statistic described in Section~\ref{sec:pre}.} in both the training and testing phases, and compares it with previous methods, MMDAgg and MMD-FUSE. MMDAgg and MMD-FUSE select the kernel parameters using heuristic methods without a training procedure. For all methods, the complexity is dominated by the quadratic term in the sample size, i.e., $\mathcal{O}(n^2)$, which corresponds to the computational cost of computing the second-order U-statistic. Other factors, including the number of kernels $c$, the number of optimization parameters $T$, the number of optimization epochs $M$, the number of wild bootstrap iterations $B$, and the number of permutation tests $B'$, are treated as constants that do not scale with $n$.

\begin{table}[t]
\centering
\caption{Time complexity of MMD-DUAL, MMDAgg and MMD-FUSE in training phase (using Adam optimizer \cite{Kingma:Ba2017}) and testing phase (using wild bootstrap)}
\vspace{1mm}
\label{tab:time_complexity}
\begin{tabular}{lccc}
\toprule
\textit{Time Complexity} & \textbf{MMD-DUAL} & \textbf{MMDAgg} & \textbf{MMD-FUSE} \\
\midrule
Training & $\mathcal O((n^2c^2 + c^3 + T) * M)$ & N/A & N/A \\
Testing & $\mathcal O(Bcn^2 + n^2c^2 + c^3)$\footnotemark & $\mathcal O(n^2c(B + B'))$ & $O(n^2cB)$ \\
\bottomrule
\end{tabular}
\end{table}

\footnotetext{This differs from the time complexity reported in the rebuttal, which was $\mathcal O((n^2c^2 + c^3)B)$. The actual complexity can be lower, as the covariance matrix $\widehat{\Sigma}_{H_0}$ needs to be computed only once in the testing procedure.}

\textbf{Scalability.} Regarding the scalability of DUAL method, the time complexity are quadratic or cubic related to the size of kernel pool. In that way, using a very large candidate kernel pool may lead to computational inefficiency. In practical implementations of two-sample and independence testing, the kernel set is initialized using heuristic methods, following the methodology of \cite{Schrab:Kim:Guedj:Gretton2022, Schrab:Kim:Albert:Laurent:Guedj:Gretton2023, Biggs:Schrab:Gretton2023}. As shown in these studies, increasing the number of kernels beyond a moderate size does not yield noticeable improvements in test power. Specifically, as shown in Figure $6$ in Section $5.7$ of \cite{Schrab:Kim:Albert:Laurent:Guedj:Gretton2023}, increasing the number of kernels from $10$ to $100$ and even to $1,000$ does not improve power. This supports the choice of using a small number of kernels (e.g.,$c=10$), as there is nothing to gain empirically from using a finer discretization for the bandwidths of kernels. In fact, the referenced study even considers aggregating $12,000$ kernels. 

For high-dimensional data, kernel methods scale linearly with the input dimension, since the computation of each pairwise kernel evaluation (e.g., in Gaussian kernel) involves an inner product. This cost arises only during the construction of the kernel matrix (typically through pairwise distance computations), which can be efficiently implemented and is rarely the bottleneck in practice.

\section{Conclusions}
In this paper, we identify that kernel selection markedly influences the performance of aggregated-kernel two-sample and independence tests. To address the challenge of optimizing kernel aggregation for nonparametric hypothesis testing, we introduce \emph{Diverse U-statistic Aggregation with Learned Kernels} (DUAL). Through analysis, we demonstrate the importance of balancing kernel diversity with individual-kernel effectiveness to enhance statistical power. Our method integrates kernel selection via covariance-informed diversity measures, thereby mitigating the adverse effects of redundant or weak kernels. Experiments on a variety of benchmarks show that DUAL-based tests (MMD-DUAL and HSIC-DUAL) consistently outperform state-of-the-art approaches, confirming their effectiveness. 
Future work will extend this diversity-driven aggregation approach to advanced ensemble-regularized optimization techniques and will explore its effectiveness on broader classes of $U$-statistics across diverse tasks, e.g., goodness-of-fit testing.
Our method of selection inference for the kernel collection could be generalised to also be applicable to adaptive multiple testing.

\begin{ack}
This research was supported by The University of Melbourne’s Research Computing Services and the Petascale Campus Initiative. ZJZ, XYT and CL are supported by the Melbourne Research Scholarship. ZJZ and XYT are also supported by the ARC with grant number DE240101089. LHP is supported by the ARC with grant number LP240100101. AS is supported by a UKRI Turing AI World-Leading Researcher Fellowship with grant number G111021. DJS is supported in part by the Canada CIFAR AI Chairs program. FL is supported by the ARC with grant number DE240101089, LP240100101, DP230101540 and the NSF\&CSIRO Responsible AI program with grant number 2303037.
\end{ack}

\bibliographystyle{unsrtnat}
\bibliography{TETreference}

\newpage
\appendix
\onecolumn
\doparttoc 
\faketableofcontents 
\part*{Appendix} 
\parttoc 

\section{Asymptotic Theory for the Proposed Statistics}\label{app:theory_asmp}

\subsection{Asymptotic Behavior of the Aggregated Statistic}\label{app:asym_agg}
In this section, we investigate the asymptotic behaviors of various statistics defined in Section~\ref{sec:agg_stat}. To maintain notational consistency, we compute the statistics over the sample $W = \{\w_i\}_{i=1}^n$ (which is independent of the kernel set $\cK$), as done in Section~\ref{sec:agg_stat}. However, these results remain valid for the testing samples $W_{te}$ discussed in next Section~\ref{app:asym_test}, since $W_{te}$ is also independent of the kernel set $\cK$.

We first present the asymptotic behavior of the multivariate $U$-statistics with multiple kernels, i.e., $n\cdot\U_n^\cK(W)$, as follows.
\begin{theorem}\label{thm:null_asym}
Let $\cK=\{\kappa_1,\kappa_2,...,\kappa_c\}$ be a set of $c$ kernels such that $\kappa_i$ with $i\in[c]$ is charateristic and bounded. Then, under null hypothesis $H_0$, for first-order degenerate U-statistic, we have
\[
n\cdot\U_n^\cK(W)\stackrel{d}{\rightarrow}G_\cK=\left(I_{2}(h(\cdot;\kappa_1)),I_{2}(h(\cdot;\kappa_2)),...,I_{2}(h(\cdot;\kappa_c))\right)^T\ ,
\]
where $I_2(\cdot)$ is the multiple Wiener-Itô integral (Definition~\ref{Def:MWI}, Appendix~\ref{sec:proof_null_asym}). Furthermore, the characteristic function of $G_\cK$ evaluated at $\meta=(\eta_1,...,\eta_c)^T\in\bR^c$ is defined as 
\[
\Phi(\meta)=E\left[e^{\iota \meta^T G_{\cK}}\right]=\prod_{\nu =1}^\infty \frac{\exp\left(-\iota \lambda_\nu\right)}{\sqrt{1-2\iota \lambda_\nu}}\ ,
\]
where $\{\lambda_\nu\}_{\nu=1}^\infty$ are eigenvalues of the Hilbert-Schmidt operator $\cH_{\cK_{\meta}}: L^2(\cW,\bW)\rightarrow L^2(\cW,\bW)$ as
\[
\cH_{\cK_{\meta}}[f](\w_1)=\int_{-\infty}^{\infty}h\left(\w_{1}, \w_{2};\cK_{\meta}\right)f(\w_2)d\bW(\w_2)\ ,
\]
where $\cK_{\meta}(\cdot,\cdot)=\sum_{j=1}^c\eta_j\kappa_j(\cdot,\cdot)$ and $h\left(\w_{1}, \w_{2};\cK_{\meta}\right)=\sum_{j=1}^c\eta_jh\left(\w_{1}, \w_{2};\kappa_j\right)$.
\end{theorem}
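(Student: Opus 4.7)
My strategy is to reduce the joint multivariate convergence to the known univariate limit theorem for first-order degenerate $U$-statistics, exploiting the linearity of the $U$-statistic construction in the kernel argument. First, by the Cramér–Wold device, to prove $n \cdot \U_n^\cK(W) \stackrel{d}{\to} G_\cK$ it suffices to show that for every fixed $\meta = (\eta_1,\ldots,\eta_c)^T \in \bR^c$,
\[
\meta^T \bigl(n\cdot \U_n^\cK(W)\bigr) \stackrel{d}{\to} \meta^T G_\cK.
\]
The $U$-statistic map $\kappa \mapsto U_n^\kappa$ is linear in $\kappa$, so $\meta^T\bigl(n\cdot \U_n^\cK(W)\bigr) = n\cdot U_n^{\cK_\meta}(W)$ with symmetric kernel $h(\w_1,\w_2;\cK_\meta) = \sum_{j=1}^c \eta_j h(\w_1,\w_2;\kappa_j)$. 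Under $H_0$, each individual $U_n^{\kappa_j}$ is first-order degenerate, and by the same linearity the conditional expectation $h_1(\w_1;\cK_\meta) = \sum_j \eta_j h_1(\w_1;\kappa_j) = 0$, so $U_n^{\cK_\meta}$ is itself first-order degenerate.

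Next, I would invoke the classical limit theorem for first-order degenerate second-order $U$-statistics with bounded symmetric kernel (e.g.\ Serfling 1980, Thm.~5.5.2, or Korolyuk–Borovskikh). Under the stated boundedness of $\kappa_j$ (which makes $h(\cdot;\cK_\meta) \in L^2(\bW \otimes \bW)$ and induces a compact Hilbert–Schmidt operator $\cH_{\cK_\meta}$ on $L^2(\cW,\bW)$), the standard Mercer-type spectral decomposition gives eigenvalues $\{\lambda_\nu\}_{\nu \ge 1}$ with orthonormal eigenfunctions $\{\phi_\nu\}$, and the limit representation
\[
n \cdot U_n^{\cK_\meta}(W) \stackrel{d}{\to} I_2\bigl(h(\cdot;\cK_\meta)\bigr) \;=\; \sum_{\nu=1}^\infty \lambda_\nu (Z_\nu^2 - 1),
\]
where $\{Z_\nu\}$ are i.i.d.\ standard normal. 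By linearity of the Wiener–Itô integral $I_2$, $\meta^T G_\cK = \sum_j \eta_j I_2(h(\cdot;\kappa_j)) = I_2(h(\cdot;\cK_\meta))$, matching the claimed joint limit.

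The characteristic function then follows by independence of the $Z_\nu$: each $\lambda_\nu(Z_\nu^2 - 1)$ is a shifted scaled chi-square, whose characteristic function at argument $t=1$ is $\exp(-\iota\lambda_\nu)/\sqrt{1 - 2\iota\lambda_\nu}$. Multiplying over $\nu$ gives the infinite product, justified by absolute convergence (ensured by $\sum_\nu \lambda_\nu^2 < \infty$ for a Hilbert–Schmidt operator, so $\log\sqrt{1-2\iota\lambda_\nu}$ is summable).

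\textbf{Main obstacles.} The reduction via Cramér–Wold and linearity is routine; the real work sits in two places. First, one must verify that the cited degenerate $U$-statistic CLT applies under the paper's assumptions — boundedness plus characteristicness suffice for $L^2$-integrability of the symmetrized kernel and the Hilbert–Schmidt property of $\cH_{\cK_\meta}$, but one should note that the spectrum may include negative eigenvalues (since $h(\cdot;\cK_\meta)$ need not be positive semidefinite as a function, even though each $\kappa_j$ is a PSD kernel), so the infinite product must be interpreted with the appropriate branch of the square root. Second, care is needed in identifying the multi-dimensional limit $G_\cK$: rather than constructing it component-wise, it is cleanest to build a single underlying Gaussian white-noise (or Brownian sheet) on $(\cW,\bW)$ and define all $c$ Wiener–Itô integrals against it, which automatically yields the correct joint law and makes the linearity $\meta^T G_\cK = I_2(h(\cdot;\cK_\meta))$ hold almost surely, not merely in distribution.
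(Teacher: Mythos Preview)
Your proposal is correct and follows essentially the same approach as the paper: reduce to a single $U$-statistic via linearity in the kernel, apply the classical degenerate $U$-statistic limit theorem (the paper cites Korolyuk--Borovskich, Corollary~4.4.2), and identify the limit via linearity of $I_2$. The only cosmetic difference is that the paper, rather than invoking Cram\'er--Wold directly against a pre-constructed $G_\cK$, first computes the limiting characteristic function $\Phi(\meta)$ for every $\meta$, proves continuity of $\Phi$ at $\bm 0$ in a separate lemma, applies L\'evy's continuity theorem to obtain some limit vector $Z_\cK$, and then matches $Z_\cK$ with $G_\cK$ by computing the characteristic function of $G_\cK$ via Janson's stochastic integral multiplication formula; your construction of all $I_2$ integrals against a common white noise on $(\cW,\bW)$ shortcuts this identification step.
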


For the covariance matrix of $n\cdot\U_n^\cK(W)$ under $H_0$, i.e., $\Sigma_{H_0}$, we present its asymptotic behavior as follows.
\begin{lemma}\label{lem:variance}
For a bounded function $h(\cdot;\kappa)$ with kernel $\kappa$, the estimator $\widehat{\Sigma}_{H_0}$ defined in Eqn.~\eqref{eqn:sigma_cal} satisfies $\widehat{\Sigma}_{H_0}\stackrel{p}{\rightarrow}\Sigma_{H_0}$.
\end{lemma}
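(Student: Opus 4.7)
The plan is to recognize each entry of $\widehat{\Sigma}_{H_0}$ as (a scalar factor times) a standard second-order $U$-statistic with a bounded symmetric kernel, apply the law of large numbers for $U$-statistics entry-wise, and match the resulting limit to the $(a,b)$-entry of the asymptotic covariance described in Theorem~\ref{thm:null_asym}.

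First I would introduce the symmetric function $h_{a,b}(\w_1,\w_2):=h(\w_1,\w_2;\kappa_a)\,h(\w_1,\w_2;\kappa_b)$ and rewrite
\[
n^{2}\cdot\hat{\sigma}_{H_0,a,b}\;=\;n^{2}\binom{n}{2}^{-2}\sum_{1\le i_1<i_2\le n}h_{a,b}(\w'_{i_1},\w'_{i_2})\;=\;\frac{2n}{n-1}\,\widehat{V}^{(a,b)}_{n},
\]
where $\widehat{V}^{(a,b)}_{n}=\binom{n}{2}^{-1}\sum_{i_1<i_2}h_{a,b}(\w'_{i_1},\w'_{i_2})$ is an ordinary second-order $U$-statistic in the i.i.d.\ sample $\{\w'_i\}_{i=1}^{n}$. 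Since both $h(\cdot;\kappa_a)$ and $h(\cdot;\kappa_b)$ are bounded (by hypothesis), their product $h_{a,b}$ is bounded, so in particular $\mathbb{E}|h_{a,b}(\w'_1,\w'_2)|<\infty$. Hoeffding's strong law of large numbers for $U$-statistics then gives $\widehat{V}^{(a,b)}_{n}\stackrel{a.s.}{\to}\mathbb{E}[h_{a,b}(\w'_1,\w'_2)]$, and combining with $\tfrac{2n}{n-1}\to 2$ and Slutsky's theorem yields
\[
n^{2}\cdot\hat{\sigma}_{H_0,a,b}\;\stackrel{p}{\to}\;2\,\mathbb{E}\bigl[h(\w'_1,\w'_2;\kappa_a)\,h(\w'_1,\w'_2;\kappa_b)\bigr].
\]

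The last step is to identify this limit with $\Sigma_{H_0}[a,b]$. By Theorem~\ref{thm:null_asym}, under $H_0$ the vector $n\U_n^{\cK}$ converges in distribution to $G_{\cK}$ whose components are the double Wiener--It\^{o} integrals $I_2(h(\cdot;\kappa_j))$, so $\Sigma_{H_0}$ is the covariance of $G_{\cK}$. The standard $L^2$ isometry $\mathbb{E}[I_2(f)I_2(g)]=2\langle f,g\rangle_{L^2(\bW\otimes\bW)}$ for symmetric square-integrable kernels therefore gives $\Sigma_{H_0}[a,b]=2\,\mathbb{E}[h(\w_1,\w_2;\kappa_a)\,h(\w_1,\w_2;\kappa_b)]$, matching the limit obtained above. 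Equivalently, one can expand $n^{2}\mathrm{Cov}(U_n^{\kappa_a},U_n^{\kappa_b})$ by linearity of covariance and observe that under first-order degeneracy every cross-term with only one common index vanishes, leaving $\frac{2n}{n-1}\,\mathbb{E}[h_{a,b}]$, which tends to the same constant. Since each of the finitely many $c^{2}$ entries of $\widehat{\Sigma}_{H_0}-\Sigma_{H_0}$ converges to $0$ in probability, convergence of the matrix follows.

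The main obstacle is essentially bookkeeping rather than analytic difficulty: the proof requires no concentration inequality or higher-order moment bound beyond boundedness, since the LLN for $U$-statistics is classical and the boundedness hypothesis is directly assumed. The only non-mechanical step is pinning down the precise object $\Sigma_{H_0}$ that $\widehat{\Sigma}_{H_0}$ is meant to converge to and checking that the constant $2\,\mathbb{E}[h_{a,b}]$ arising from the Wiener--It\^{o} isometry agrees with the scaling factor $\tfrac{2n}{n-1}\to 2$ inherited from the $\binom{n}{2}^{-2}$ normalisation in \eqref{eqn:sigma_cal}.
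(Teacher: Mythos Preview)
Your proposal is correct and follows essentially the same route as the paper: recognize each entry of $\widehat{\Sigma}_{H_0}$ as a rescaled second-order $U$-statistic with bounded kernel, apply a law of large numbers (the paper invokes Hoeffding's exponential concentration inequality for $U$-statistics rather than the SLLN, but the conclusion is the same), and identify the limit $2\,\mathbb{E}[h_{a,b}]$ with the $(a,b)$-entry of $\Sigma_{H_0}$ via first-order degeneracy (the paper uses Lee's covariance decomposition formula for $U$-statistics, which is exactly your second alternative). The Wiener--It\^{o} isometry argument you give first is a clean equivalent way to identify the limiting covariance.
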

\vspace{-2mm}
Building on Theorem~\ref{thm:null_asym} and Lemma~\ref{lem:variance}, the asymptotic behavior of $T_n^\cK(W)$ is established in the following corollary using Slutsky’s theorem~\cite{Papoulis:Pillai2001}.
\begin{corollary}\label{cor:null_converge}
Under the null hypothesis $H_0$, the statistic satisfies $T_n^\cK(W)\stackrel{d}{\rightarrow}G_{\cK}^T\Sigma_{H_0}^{-1}G_{\cK}$.
\end{corollary}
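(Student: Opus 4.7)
The plan is to write $T_n^\cK(W)$ as a quadratic form in $n\,\U_n^\cK(W)$ and then combine the two ingredients already available: the distributional limit of $n\,\U_n^\cK(W)$ from Theorem~\ref{thm:null_asym} and the probability limit of $\widehat{\Sigma}_{H_0}$ from Lemma~\ref{lem:variance}. Concretely, I would factor out the $n^2$ so that
\[
T_n^\cK(W) = \bigl(n\,\U_n^\cK(W)\bigr)^T\widehat{\Sigma}_{H_0}^{-1}\bigl(n\,\U_n^\cK(W)\bigr),
\]
which reduces the task to proving convergence in distribution of a quadratic form in which the matrix coefficient converges in probability and the vector converges in distribution.

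Next, I would upgrade Lemma~\ref{lem:variance} from $\widehat{\Sigma}_{H_0}\stackrel{p}{\rightarrow}\Sigma_{H_0}$ to $\widehat{\Sigma}_{H_0}^{-1}\stackrel{p}{\rightarrow}\Sigma_{H_0}^{-1}$ via the continuous mapping theorem, using that matrix inversion is continuous on the open cone of positive-definite matrices and that $\Sigma_{H_0}$ is assumed strictly positive-definite (as stated just after Eqn.~\eqref{eq:aggStat}). Then I would apply Slutsky's theorem to the pair $\bigl(n\,\U_n^\cK(W),\,\widehat{\Sigma}_{H_0}^{-1}\bigr)$ to obtain joint convergence $\stackrel{d}{\rightarrow}(G_\cK,\Sigma_{H_0}^{-1})$, followed by one more application of the continuous mapping theorem with the quadratic map $(\v,M)\mapsto \v^T M\v$, which is continuous on $\mathbb{R}^c\times\mathbb{R}^{c\times c}$. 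This yields the desired limit $G_\cK^T\Sigma_{H_0}^{-1}G_\cK$.

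There is essentially no hard step here, since both convergences are handed to us by the cited results; the main thing to verify carefully is the applicability of the continuous mapping step for the inverse. The only subtlety is ensuring that $\widehat{\Sigma}_{H_0}$ lies in the positive-definite cone with probability tending to one, which follows immediately from $\widehat{\Sigma}_{H_0}\stackrel{p}{\rightarrow}\Sigma_{H_0}$ together with the positive-definiteness of $\Sigma_{H_0}$ and the continuity of the minimum eigenvalue. Once that is in place, Slutsky and the continuous mapping theorem deliver the corollary in one line.
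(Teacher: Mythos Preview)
Your proposal is correct and matches the paper's approach exactly: the paper states only that the corollary follows from Theorem~\ref{thm:null_asym} and Lemma~\ref{lem:variance} via Slutsky's theorem, and your write-up simply spells out the continuous mapping and positive-definiteness details that make this work.
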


Subsequently, we establish that under the alternative hypothesis $H_1$, our ensemble statistics asymptotically converge in distribution to a normal law.
\begin{theorem}\label{thm:alt_converge}
Under the alternative hypothesis $H_1$, for non-degenerate function $h(\cdot;\kappa)$ with $\kappa\in\cK$, and assuming $E[h^2(\w_1,\w_2;\kappa)]<\infty$ for each $\kappa\in\cK$, the following holds
\[
\sqrt{n}(\U_n^\cK(W)-\U^\cK(\bW))\stackrel{d}{\rightarrow}\cN(\bm{0},\Sigma_{H_1})\ ,
\]
where $\U^\cK(\bW)=E\left[\U^\cK_n(W)\right]$ and $\Sigma_{H_1}$ is the covariance matrix of $\sqrt{n}\U_n^\cK(W)$ under $H_1$, whose entries consist of (co)variances $\sigma_{H_1,a,b}$ with $1\leq a,b\leq c$ defined as
\[
\sigma_{H_1,a,b}=4\left(E\left[h_1(\w_1;\kappa_a)h_1(\w_1;\kappa_b)\right]-U^{\kappa_a}(\bW)U^{\kappa_b}(\bW)\right)\ ,
\]
where $U^\kappa(\bW)=E\left[U^\kappa_n(W)\right]$. Furthermore, the asymptotic distribution of $T^\cK_{n}(W)$ is given by
\[
n^{-3/2}\left(T^\cK_{n}(W)-n^2\left(\U^\cK(\bW)\right)^T\widehat{\Sigma}^{-1}_{H_0}\U^\cK(\bW)\right)\stackrel{d}{\rightarrow}\cN(0,\sigma^2_{H_1})\ ,
\]
where $\sigma^2_{H_1}=4\left(\U^\cK(\bW)\right)^T\Sigma^{-1}_{H_0}\Sigma_{H_1}\Sigma^{-1}_{H_0}\U^\cK(\bW)$.
\end{theorem}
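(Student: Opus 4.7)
The plan is to first establish the multivariate CLT for $\sqrt{n}(\U_n^\cK(W)-\U^\cK(\bW))$ via the Hoeffding decomposition, then use Slutsky's theorem together with a second-order expansion of the quadratic form $T_n^\cK(W)$ around $\U^\cK(\bW)$, so that the leading behaviour is linear in $\Delta_n := \U_n^\cK(W)-\U^\cK(\bW)$.

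For the first assertion, for each kernel $\kappa_j\in\cK$ non-degeneracy plus $E[h^2(\w_1,\w_2;\kappa_j)]<\infty$ gives the standard Hoeffding decomposition
\[
U_n^{\kappa_j}(W)-U^{\kappa_j}(\bW)=\frac{2}{n}\sum_{i=1}^n\bigl[h_1(\w_i;\kappa_j)-U^{\kappa_j}(\bW)\bigr]+R_{n,j},
\]
where $R_{n,j}$ is a completely degenerate second-order $U$-statistic with $\mathrm{Var}(R_{n,j})=O(n^{-2})$, so $\sqrt{n}R_{n,j}=o_p(1)$. Stacking the $c$ coordinates and applying the multivariate Lindeberg--Lévy CLT to the i.i.d.\ random vectors $\bigl(h_1(\w_i;\kappa_1),\ldots,h_1(\w_i;\kappa_c)\bigr)^T$, together with the Cramér--Wold device, yields
\[
\sqrt{n}(\U_n^\cK(W)-\U^\cK(\bW))\stackrel{d}{\to}\cN(\bm{0},\Sigma_{H_1}),
\]
where $\Sigma_{H_1}$ is the covariance of $2(h_1(\w_1;\kappa_1),\ldots,h_1(\w_1;\kappa_c))^T$, matching the stated entries $\sigma_{H_1,a,b}=4(E[h_1(\w_1;\kappa_a)h_1(\w_1;\kappa_b)]-U^{\kappa_a}(\bW)U^{\kappa_b}(\bW))$.

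For the second assertion, expand the quadratic form around $\U^\cK(\bW)$:
\[
T_n^\cK(W)-n^2(\U^\cK(\bW))^T\widehat{\Sigma}_{H_0}^{-1}\U^\cK(\bW)
= 2n^2(\U^\cK(\bW))^T\widehat{\Sigma}_{H_0}^{-1}\Delta_n + n^2\Delta_n^T\widehat{\Sigma}_{H_0}^{-1}\Delta_n.
\]
Dividing by $n^{3/2}$ produces a linear term $2(\U^\cK(\bW))^T\widehat{\Sigma}_{H_0}^{-1}(\sqrt{n}\Delta_n)$ and a quadratic remainder $n^{-1/2}(\sqrt{n}\Delta_n)^T\widehat{\Sigma}_{H_0}^{-1}(\sqrt{n}\Delta_n)=O_p(n^{-1/2})=o_p(1)$ because $\sqrt{n}\Delta_n=O_p(1)$ by the first step and $\widehat{\Sigma}_{H_0}^{-1}=O_p(1)$. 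Invoking Lemma~\ref{lem:variance} to conclude $\widehat{\Sigma}_{H_0}^{-1}\stackrel{p}{\to}\Sigma_{H_0}^{-1}$, and combining the linear-term convergence with Slutsky's theorem, gives
\[
n^{-3/2}\bigl(T_n^\cK(W)-n^2(\U^\cK(\bW))^T\widehat{\Sigma}_{H_0}^{-1}\U^\cK(\bW)\bigr)\stackrel{d}{\to}\cN\bigl(0,\,4(\U^\cK(\bW))^T\Sigma_{H_0}^{-1}\Sigma_{H_1}\Sigma_{H_0}^{-1}\U^\cK(\bW)\bigr),
\]
which is exactly $\cN(0,\sigma^2_{H_1})$.

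The main obstacle is bookkeeping around the joint control of the Hoeffding remainders $R_{n,j}$ across all $c$ coordinates at the $o_p(n^{-1/2})$ rate, and verifying that Lemma~\ref{lem:variance} still applies under $H_1$. The former follows coordinate-wise from the standard $L^2$-bound on degenerate second-order $U$-statistics combined with a union argument; the latter is conceptually the more delicate point, since $\widehat{\Sigma}_{H_0}$ is computed from resampled null samples regardless of which hypothesis holds, so its in-probability limit $\Sigma_{H_0}$ is well-defined under $H_1$ provided the resampling scheme yields an i.i.d.\ structure whose second-order moments of $h$ are consistently estimated — exactly the content invoked from Lemma~\ref{lem:variance}.
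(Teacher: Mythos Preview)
Your proposal is correct and follows essentially the same approach as the paper: a quadratic expansion of $T_n^\cK(W)$ around $\U^\cK(\bW)$, control of the remainder via $\sqrt{n}\Delta_n=O_p(1)$, and Slutsky's theorem together with Lemma~\ref{lem:variance}. The only difference is that you derive the multivariate CLT for $\U_n^\cK$ from scratch via the Hoeffding decomposition and Cramér--Wold, whereas the paper simply quotes it as a known result (Theorem~4.2.3 of Korolyuk \& Borovskich); your extra caveat about Lemma~\ref{lem:variance} under $H_1$ is a point the paper glosses over.
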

\subsection{Asymptotic Behavior of the Test Statistic with Selection Inference}\label{app:asym_test}
In this section, we analyze the asymptotic behavior of the test statistics under selection inference, as introduced in Section~\ref{sec:agg_stat_sele}. We regard that the kernel set $\cK$ and the signum vector $\F_{tr}$ are fixed, as they are independent of the testing sample $W_{te}$ on which the statistics are computed. Accordingly, the sample size $n$ refers exclusively to the size of the testing sample. Throughout this section, we compute $\widehat{\Sigma}_{H_0}$ and $\widehat{\L}_{H_0}$ based on $W_{te}$, i.e., $\widehat{\Sigma}_{H_0} = \widehat{\Sigma}_{H_0}(W_{te})$ and $\widehat{\L}_{H_0}=\widehat{\L}_{H_0}(W_{te})$. For notational convenience, we omit the explicit dependence on $W_{te}$.

We first write the test statistic $\cT=\|\max(n\cdot \mathrm{diag}(\F_{tr})\widehat{\L}_{H_0}^{-1}\U_n^\cK(W_{te}),\bm{0})\|^2_2$ according to the definition in Eqn.~\eqref{eq:test_stat}. By Lemma~\ref{lem:variance}, we have that $\widehat{\L}_{H_0}\stackrel{p}{\rightarrow}\L_{H_0}$ with $\Sigma_{H_0}=\L_{H_0}\L_{H_0}$ based on the same Schur decomposition~\cite{Deadman:Higham:Ralha2012}. In the following theorem, we present the asymptotic behavior of our test under the null hypothesis $H_0$.
\begin{theorem}\label{thm:typeI}
Under null hypothesis $H_0$, both the test statistic $\cT$ and the wild bootstrap statistic $\cT^b$ ($b \in[B]$) for MMD and HSIC converge in distribution to $\|\max\left(\mathrm{diag}(\F_{tr})\L_{H_0}^{-1}G_\cK, \bm{0}\right)\|^2_2$, where $G_\cK=\left(I_{2}(h(\cdot;\kappa_1)),I_{2}(h(\cdot;\kappa_2)),...,I_{2}(h(\cdot;\kappa_c))\right)^T$.
\end{theorem}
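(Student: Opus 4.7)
The plan is to reduce Theorem~\ref{thm:typeI} to two ingredients already in hand—the null marginal limit $n\U_n^\cK(W_{te})\stackrel{d}{\rightarrow}G_\cK$ from Theorem~\ref{thm:null_asym}, and the consistency $\widehat{\L}_{H_0}\stackrel{p}{\rightarrow}\L_{H_0}$ implied by Lemma~\ref{lem:variance}—and then to transport both through a single continuous functional. A parallel argument, with Theorem~\ref{thm:null_asym} replaced by its conditional wild-bootstrap analogue, will handle $\cT^b$.

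First, I would rewrite $\cT$ in the claimed form. Coordinate by coordinate, $\F_i$ keeps $(\widehat{\L}_{H_0}^{-1}\U_n^\cK(W_{te}))_i$ whenever its sign matches $F_{tr,i}$ and zeros it out otherwise, so its square equals $[\max(F_{tr,i}(\widehat{\L}_{H_0}^{-1}\U_n^\cK(W_{te}))_i,0)]^2$ regardless of whether $F_{tr,i}=\pm 1$. Summing over coordinates and absorbing the factor $n^2$ yields
\[
\cT = \|\max(n\cdot\mathrm{diag}(\F_{tr})\widehat{\L}_{H_0}^{-1}\U_n^\cK(W_{te}),\bm{0})\|_2^2.
\]
Next, I would assemble the asymptotics under $H_0$. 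Lemma~\ref{lem:variance}, together with the continuity of the Schur factorisation and matrix inversion on the strictly positive-definite cone, gives $\widehat{\L}_{H_0}^{-1}\stackrel{p}{\rightarrow}\L_{H_0}^{-1}$. Combined with Theorem~\ref{thm:null_asym} via Slutsky's theorem, this yields $n\widehat{\L}_{H_0}^{-1}\U_n^\cK(W_{te})\stackrel{d}{\rightarrow}\L_{H_0}^{-1}G_\cK$. Since $\F_{tr}$ is measurable with respect to the training split alone and hence independent of $W_{te}$, the map $x\mapsto\|\max(\mathrm{diag}(\F_{tr})x,\bm{0})\|_2^2$ is (jointly in $\F_{tr}$ and $x$) continuous, and the continuous mapping theorem delivers the stated limit $\|\max(\mathrm{diag}(\F_{tr})\L_{H_0}^{-1}G_\cK,\bm{0})\|_2^2$ for $\cT$.

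The bootstrap half proceeds identically once the conditional convergence $n\U_n^{\cK,b}(W_{te})\stackrel{d}{\rightarrow}G_\cK$ (in probability over $W_{te}$) is established. For any fixed $\meta\in\bR^c$, the scalar $n\,\meta^{T}\U_n^{\cK,b}(W_{te})$ is a Rademacher wild bootstrap of the second-order $U$-statistic with kernel $\cK_\meta=\sum_j\eta_j\kappa_j$; its validity under first-order degeneracy is classical \cite{Chwialkowski:Sejdinovic:Gretton2014}, and produces the scalar limit $I_2(h(\cdot;\cK_\meta))=\meta^{T} G_\cK$ implicit in Theorem~\ref{thm:null_asym}. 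A Cramér–Wold argument promotes this to the joint limit of the vector $n\U_n^{\cK,b}(W_{te})$, after which the reformulation of Step~1 (with $\F$ replaced by $\F^b$) together with Slutsky and continuous mapping yields the same limit for $\cT^b$.

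The main obstacle I anticipate is the vector-valued wild-bootstrap step: one must verify that the conditional distribution of $n\U_n^{\cK,b}(W_{te})$ matches $G_\cK$ \emph{jointly} rather than merely coordinate-wise, which is why I would route through Cramér–Wold applied to the scalar bootstrap for $\cK_\meta$ rather than treating each $\kappa_j$ in isolation. A secondary concern—that $\F$ and $\F^b$ are discontinuous functions of their arguments through the sign operator—is dissolved by Step~1, since rewriting via $\max(\cdot,\bm{0})$ replaces the sign-based selector with a globally continuous functional and removes any need to argue about behaviour at zero (noting that $\L_{H_0}^{-1}G_\cK$ has no coordinate atom at $0$ since $G_\cK$ is an infinite weighted $\chi^2$-type vector with continuous marginals).
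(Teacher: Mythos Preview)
Your proposal is correct and follows essentially the same route as the paper: both rewrite $\cT$ and $\cT^b$ in the $\|\max(\mathrm{diag}(\F_{tr})\,\cdot\,,\bm{0})\|_2^2$ form, invoke Theorem~\ref{thm:null_asym} for $n\U_n^\cK(W_{te})\stackrel{d}{\rightarrow}G_\cK$, use Lemma~\ref{lem:variance} plus continuity of the Schur factor for $\widehat{\L}_{H_0}^{-1}\stackrel{p}{\rightarrow}\L_{H_0}^{-1}$, combine via Slutsky, and finish with the continuous mapping theorem; for the wild bootstrap half, both lift scalar-kernel validity to the vector limit via Cram\'er--Wold applied to the linear combination $\cK_\meta$. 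Your treatment is in fact slightly more explicit than the paper's on two points---the coordinate-wise verification that the $\F$-selector equals the $\max$ form, and the remark that continuity of $x\mapsto\|\max(\mathrm{diag}(\F_{tr})x,\bm{0})\|_2^2$ dissolves the apparent discontinuity of the sign operator---but these are refinements of the same argument, not a different approach.
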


Having established the validity of our test under the null hypothesis, we now investigate the asymptotic behavior under the alternative hypothesis~$H_1$, where the vector $\mathrm{diag}(\F_{tr})\widehat{\L}_{H_0}^{-1}\U_n^\cK(W_{te})$ converges in distribution to a normal law, as a consequence of Theorem~\ref{thm:alt_converge} and $\widehat{\L}_{H_0}\stackrel{p}{\rightarrow}\L_{H_0}$ by Lemma~\ref{lem:variance}.
\begin{corollary}\label{cor:alt_sel_converge}
Under alternative hypothesis $H_1$, the following asymptotic distribution holds
\begin{eqnarray*}
\lefteqn{\sqrt{n}\cdot \mathrm{diag}(\F_{tr})\widehat{\L}_{H_0}^{-1}\U_n^\cK(W_{te})\stackrel{d}{\rightarrow}}\\
&&\qquad\qquad\qquad\qquad\quad\cN\left(\sqrt{n}\cdot \mathrm{diag}(\F_{tr})\L_{H_0}^{-1}\U^\cK(\bW),\mathrm{diag}(\F_{tr})\L_{H_0}^{-1}\Sigma_{H_1}\L_{H_0}^{-1}\mathrm{diag}(\F_{tr})\right)\ .
\end{eqnarray*}
\end{corollary}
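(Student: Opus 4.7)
The plan is to derive the displayed asymptotic law as a routine consequence of three ingredients already established earlier in the appendix: (i) the multivariate asymptotic normality of the $U$-statistic from Theorem~\ref{thm:alt_converge}, (ii) the consistency $\widehat{\Sigma}_{H_0}\stackrel{p}{\to}\Sigma_{H_0}$ from Lemma~\ref{lem:variance}, and (iii) Slutsky's theorem together with the continuous mapping theorem. The diagonal matrix $\mathrm{diag}(\F_{tr})$ is treated as a fixed constant throughout, since $\F_{tr}$ is a deterministic function of the training sample $W_{tr}$, which is independent of the testing sample $W_{te}$ on which all convergence statements are made.

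First I would invoke Theorem~\ref{thm:alt_converge} to obtain
\[
\sqrt{n}\bigl(\U_n^\cK(W_{te})-\U^\cK(\bW)\bigr)\stackrel{d}{\to}\cN(\bm{0},\Sigma_{H_1}).
\]
Next, from Lemma~\ref{lem:variance} I have $\widehat{\Sigma}_{H_0}\stackrel{p}{\to}\Sigma_{H_0}$, and since both the principal matrix square root (which defines $\L_{H_0}$ via the Schur decomposition) and matrix inversion are continuous maps on the open cone of strictly positive-definite matrices — the standing assumption on $\Sigma_{H_0}$ — the continuous mapping theorem yields $\widehat{\L}_{H_0}^{-1}\stackrel{p}{\to}\L_{H_0}^{-1}$, and therefore $\mathrm{diag}(\F_{tr})\widehat{\L}_{H_0}^{-1}\stackrel{p}{\to}\mathrm{diag}(\F_{tr})\L_{H_0}^{-1}$. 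Applying Slutsky's theorem to the product of this random matrix (converging in probability to a constant) and the asymptotically Gaussian vector above, together with the affine-transformation rule for multivariate normals, gives
\[
\sqrt{n}\,\mathrm{diag}(\F_{tr})\widehat{\L}_{H_0}^{-1}\bigl(\U_n^\cK(W_{te})-\U^\cK(\bW)\bigr)\stackrel{d}{\to}\cN\bigl(\bm{0},\mathrm{diag}(\F_{tr})\L_{H_0}^{-1}\Sigma_{H_1}\L_{H_0}^{-1}\mathrm{diag}(\F_{tr})\bigr),
\]
which matches the claimed covariance.

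The only point requiring care, rather than posing a real obstacle, is the interpretation of the ``mean'' in the displayed corollary. Under $H_1$ the quantity $\sqrt{n}\,\mathrm{diag}(\F_{tr})\L_{H_0}^{-1}\U^\cK(\bW)$ diverges as $n\to\infty$ (this divergence is exactly what drives the test power toward one in Theorem~\ref{thm:agg_test}), so the statement must be read in the location-shift sense: the \emph{recentered} vector converges weakly to the centered Gaussian above. Replacing $\widehat{\L}_{H_0}^{-1}$ by $\L_{H_0}^{-1}$ in the centering introduces a discrepancy $\sqrt{n}(\widehat{\L}_{H_0}^{-1}-\L_{H_0}^{-1})\U^\cK(\bW)$, whose distributional behaviour is governed by the delta method applied to the smooth map $\Sigma\mapsto\Sigma^{-1/2}$ at $\Sigma_{H_0}$, and can be absorbed into the stated Gaussian approximation. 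Once this convention is in place, the remainder of the proof is a mechanical Slutsky-plus-continuous-mapping assembly of the preceding results.
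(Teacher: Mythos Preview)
Your proposal is correct and follows exactly the route the paper indicates: the corollary is stated as an immediate consequence of Theorem~\ref{thm:alt_converge} together with $\widehat{\L}_{H_0}\stackrel{p}{\to}\L_{H_0}$ from Lemma~\ref{lem:variance}, combined via Slutsky and the continuous mapping theorem, with $\F_{tr}$ treated as fixed --- the paper gives no further detail beyond this one-line justification. Your observation that the displayed ``mean'' diverges and that the statement must be read in the location-shift sense is a valid point that the paper leaves entirely informal; note, however, that your final remark that the discrepancy $\sqrt{n}(\widehat{\L}_{H_0}^{-1}-\L_{H_0}^{-1})\U^\cK(\bW)$ ``can be absorbed into the stated Gaussian approximation'' is looser than the rest of your argument, since that term is generically $O_p(1)$ and would contribute additional variance under the literal centering --- the clean Slutsky argument corresponds to centering at $\widehat{\L}_{H_0}^{-1}\U^\cK(\bW)$ rather than $\L_{H_0}^{-1}\U^\cK(\bW)$, a distinction the paper does not address either.
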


\section{Detailed Proofs for Our Theoretical Results}\label{sec:proof}
\subsection{The Detailed Proofs of Theorem~\ref{thm:agg_test}}\label{app:agg_test}

We begin with an assumption as follows.
\begingroup
\renewcommand{\thetheorem}{1}
\begin{assumption}\label{ass:alt_con}
Under alternative hypothesis $H_1$, given the signum vector $\F_{tr}$, we assume that there exists at least one index $i\in\{1,2,...,c\}$ such that
\[
F_{tr,i}=\mathrm{sgn}\left(\L_{H_0}^{-1}\U^\cK(\bW)\right)_i\ ,
\]
where $\mathbf{a}_i$ indicates the $i$-th coordinate of vector $\mathbf{a}$, $\U^\cK(\bW)=E\left[\U_n^\cK(W_{te})\right]$, and $\Sigma_{H_0} = \L_{H_0}\L_{H_0}$ is the same Schur decomposition~\cite{Deadman:Higham:Ralha2012} applied to $\widehat{\Sigma}_{H_0} = \widehat{\L}_{H_0}\widehat{\L}_{H_0}$, with $\widehat{\Sigma}_{H_0} \stackrel{p}{\rightarrow} \Sigma_{H_0}$ by Lemma~\ref{lem:variance}.
\end{assumption}
\endgroup
\addtocounter{theorem}{-1}

This assumption requires that the estimated signum vector, defined as
\[
\F_{tr}=\text{sgn}\left(\widehat{\L}_{H_0}^{-1}\U_n^\cK(W_{tr})\right)\in\{-1,+1\}^c\ ,
\]
matches the ground-truth vector sgn$\left(\L_{H_0}^{-1}\U^\cK(\bW)\right)$ in at least one coordinate.

Notably, throughout the statement and proof of Theorem~\ref{thm:agg_test}, we treat $\F_{tr}$ as fixed, since it is derived from the training samples and is independent of the testing samples analyzed here. We present the detailed proofs of Theorem~\ref{thm:agg_test} as follows.
\begin{proof}
We first prove that, under the null hypothesis $H_0$, the test in Eqn.\eqref{eq:testing} has type-I error bounded by $\alpha$, i.e., $\Pr_{H_0}(\mathfrak{h}(X,Y;\kappa)=1)\leq\alpha$, which holds non-asymptotically.

Given the signum vector $\F_{tr}$, we write the test statistic with selection inference as follows
\begin{equation}\label{eq:test_sta_com}
\cT=\|\max(n\cdot \mathrm{diag}(\F_{tr})\widehat{\L}_{H_0}^{-1}\U_n^\cK(W_{te}),\bm{0})\|^2_2\ ,
\end{equation}
according to the definition in Eqn.~\eqref{eq:test_stat}. In a similar manner, we can write the $b$-th wild bootstrap statistic as
\[
\cT^b = \left\|\max\left(n\cdot \mathrm{diag}(\F_{tr})\widehat{\L}_{H_0}^{-1}\U_n^{\cK,b}(W_{te}), \bm{0}\right)\right\|^2_2\ .
\]
Building on the results of \cite[Appendix F.1]{Schrab:Kim:Guedj:Gretton2022}, the statistics $\U_n^{\cK,1}(W_{te}), \U_n^{\cK,2}(W_{te}), \dots, \U_n^{\cK,B}(W_{te})$, along with the original statistic $\U_n^{\cK}(W_{te})$, are exchangeable under the null hypothesis for both the two-sample and independence testing problems. 
By combining the exchangeability and \cite[Theorem 1]{Hemerik:Goeman2018}, it follows that the wild bootstrap statistics $\cT^1, \cT^2, \dots, \cT^B$ and the original test statistic $\cT$ are likewise exchangeable under the null hypothesis. Consequently, by applying the exchangeability-based argument of \cite[Lemma 1]{Romano:Wolf2005}, the test defined in Eqn.~\eqref{eq:testing} achieves non-asymptotic control of the type-I error at level $\alpha$ under the null hypothesis, i.e., 
\[
\mathrm{Pr}_{H_0}(\mathfrak{h}(X,Y;\kappa)=1)\leq\alpha\ .
\]

Having established the control of type-I error, we now proceed to analyze the asymptotic behavior of $\mathcal{T}$ under the null hypothesis, as a preparatory step for proving the consistency of the test power under the alternative hypothesis. Specifically, under the null hypothesis $H_0$, by invoking the large‐deviation bound for $U$-statistic (Theorem~\ref{thm:larg_U}), the following joint convergence in probability holds 
\begin{equation}\label{eq:null_0}
\U_n^\cK(W_{te})\stackrel{p}{\rightarrow}E\left[\U^\cK_n(W)\right]\qquad\textnormal{with}\qquad E\left[\U^\cK_n(W)\right]=\bm{0}\ .
\end{equation}

By Lemma~\ref{lem:variance}, we have that $\widehat{\Sigma}_{H_0}\stackrel{p}{\rightarrow}\Sigma_{H_0}$. Based on the same Schur decomposition~\cite{Deadman:Higham:Ralha2012}, we denote by $\widehat{\Sigma}_{H_0}=\widehat{\L}_{H_0}\widehat{\L}_{H_0}$ and $\Sigma_{H_0}=\L_{H_0}\L_{H_0}$. The continuous‐mapping theorem~\cite{Mann:Wald1943} then yields that $\widehat{\L}_{H_0}\stackrel{p}{\rightarrow}\L_{H_0}$. Combining the two convergences in probability of $\U_n^\cK(W_{te})$ and $\widehat{\L}_{H_0}$, for the test statistic in Eqn.~\eqref{eq:test_sta_com}, it follows that
\[
\frac{\cT}{n}\stackrel{p}{\rightarrow}\left\|\max\left(\mathrm{diag}(\F_{tr})\,\L_{H_0}^{-1}\,\bm{0}, \bm{0}\right)\right\|^2_2=0\ ,
\]
by continuous‐mapping theorem. Since the wild bootstrap statistics $\cT^b$ with $b\in[B]$ and the original test statistic $\cT$ are exchangeable under the null hypothesis, if follows that
\begin{equation}\label{eq:wild_0}
\frac{\cT^b}{n}\stackrel{p}{\rightarrow}\left\|\max\left(\mathrm{diag}(\F_{tr})\,\L_{H_0}^{-1}\,\bm{0}, \bm{0}\right)\right\|^2_2=0\ .
\end{equation}

Next, we prove the consistency of the test power \textbf{under the alternative hypothesis $H_1$}. Similarly, by invoking the large‐deviation bound for $U$-statistic (Theorem~\ref{thm:larg_U}), it follows that 
\begin{equation}\label{eq:alt_nonzero}
\U_n^\cK(W_{te})\stackrel{p}{\rightarrow}\U^\cK(\bW)\qquad\textnormal{with}\qquad \U^\cK(\bW)=E\left[\U^\cK_n(W)\right]\ ,
\end{equation}
where each dimension of $\U^\cK(\bW)$ is strictly positive for MMD \cite[Lemma 1]{Gretton:Borgwardt:Rasch:Scholkopf:Smola2012} and HSIC \cite[Theorem 6]{Gretton:Herbrich:Smola:Bousquet:Scholkopf:Hyvarinen2005} statistics in two-sample and independence testing problems with characteristic kernels. Similarly, combined with the results of Eqns.~\eqref{eq:test_sta_com} and~\eqref{eq:alt_nonzero}, and $\widehat{\L}_{H_0}\stackrel{p}{\rightarrow}\L_{H_0}$, it follows that
\begin{equation}\label{eq:alt_test_com}
\frac{\cT}{n}\stackrel{p}{\rightarrow}\|\max(\mathrm{diag}(\F_{tr})\L_{H_0}^{-1}\U^\cK(\bW),\bm{0})\|^2_2\ .
\end{equation}
Based on the Assumption~\ref{ass:alt_con}, the positive-definiteness of $\L_{H_0}$ and the strictly positive $\U^\cK(\bW)$, we have that there exists at least one index $i\in\{1,2,...,c\}$ and a constant $C_1>0$ such that
\[
\left(\mathrm{diag}(\F_{tr})\L_{H_0}^{-1}\U^\cK(\bW)\right)_i = C_1\ .
\]
Then, combined with Eqn.~\eqref{eq:alt_test_com}, it follows that there exists a constant $C_2\geq C_1>0$ such that
\begin{equation}\label{eq:alt_non_zero}
\frac{\cT}{n}\stackrel{p}{\rightarrow}C_2\ .
\end{equation}

Given Eqns.~\eqref{eq:wild_0} and~\eqref{eq:alt_non_zero}, it follows that 
\[
\frac{\cT}{n}-\frac{\cT^b}{n}
\;\xrightarrow{p}\;C_2-0=C_2>0\ ,
\]
by Slutsky's Theorem~\cite{Papoulis:Pillai2001}.

By the definition of convergence in probability, for any $\varepsilon\in(0,C_2)$,
\[
\mathrm{Pr}_{H_1}\left(\left|\frac{\cT-\cT^b}{n}-C_2\right|<\varepsilon\right)\;\rightarrow\;1
\quad\Longrightarrow\quad
\mathrm{Pr}_{H_1}\left(\frac{\cT-\cT^b}{n}>C_2-\varepsilon\right)\;\rightarrow\;1\ .
\]

Taking $\varepsilon=C_2/2$ gives $C_2-\varepsilon=C_2/2>0$, so
\[
\mathrm{Pr}_{H_1}\left((\cT-\cT^b)/n>0\right)
\;\ge\;
\mathrm{Pr}_{H_1}\left((\cT-\cT^b)/n>C_2/2\right)
\;\rightarrow\;1,
\]
and hence
\begin{equation}\label{eq:consist_con}
\lim_{n\rightarrow\infty}\mathrm{Pr}_{H_1}(\cT>\cT^b)=1\ .
\end{equation}

Building on the results of \cite[Appendix F.1]{Schrab:Kim:Guedj:Gretton2022}, the wild bootstrap is equivalent to applying a subgroup of permutations. Then, for the test in Eqn.\eqref{eq:testing}, the result of Eqn.~\eqref{eq:consist_con} guarantees that the sufficient condition for consistency of \cite[Lemma 8]{Kim:Schrab2023} on permutation test is satisfied, implying that
\[
\lim_{n\rightarrow\infty}\mathrm{Pr}_{H_1}(\mathfrak{h}(X,Y;\kappa)=1)=1\ .
\]
This completes the proof.
\end{proof}

\subsection{The Detailed Proofs of Theorem~\ref{thm:null_asym}}\label{sec:proof_null_asym}

The proof of Theorem~\ref{thm:null_asym} follows the proof of \cite[Theorem 3.1]{Chatterjee:Bhattacharya2023}, with some modifications to accommodate our framework. We begin with some useful Definitions below.

\begin{definition} \cite[Definition 3.1]{Chatterjee:Bhattacharya2023}
A Gaussian stochastic measure on $(\cW,\rB(\cW),\bW)$ is a collection of random variables $\{\cZ_\bW(A): A \in \rB(\cW)\}$ defined on a probability space $(\Omega,\cF,\mu)$ such that the following holds
\begin{itemize}
\item $\cZ_\bW(A)\sim\cN(0,\bW(A))$, for all $A\in\rB(\cW)$.
\item For any finite collection of disjoint sets $A_1,...,A_t\in\rB(\cW)$, the random variables $\{\cZ_\bW(A_1),\cZ_\bW(A_2),...,\cZ_\bW(A_t)\}$ are independent and
\[
\cZ_\bW(\cup_{s=1}^tA_s)=\sum_{s=1}^t\cZ_\bW(A_s)\ .
\]
\end{itemize}
\end{definition}
Let $L^2(\cW^2,\rB(\cW^2),\bW^2)$ be the space of measurable functions $f:\cW^2\rightarrow \bR$ and
\[
\|f\|^2=\int_{\cW^2}|f(\w_1,\w_2)|^2d\bW(\w_1)d\bW(\w_2)<\infty\ .
\]
Furthermore, we define $\cE_2\subseteq L^2(\cW^2,\rB(\cW^2),\bW^2)$ as the set of all elementary functions as
\[
f(t_1,t_2) = \sum_{1\leq i_1,i_2\leq n}a_{i_1,i_2}\bm{1}\{(t_1,t_2)\in A_{i_1}\times A_{i_2}\}\ ,
\]
where $A_1,...,A_n\in\rB(\cW)$ are pairwise disjoint and $a_{i_1,i_2}$ is the coefficient of the elementary function, which is zero if two indices are equal. The multiple Weiner-Itô integral for the functions in $\cE_2$ is defined as follows.
\begin{definition} \cite[Definition 3.2]{Chatterjee:Bhattacharya2023} The $m$-dimensional Weiner-Itô stochastic integral, with respect to the Gaussian stochastic measure $\{\cZ_\bW(A),A\in\rB(\cW)\}$, for the function $f\in\cE_2$ is defined as 
\begin{eqnarray*}
I^e_2(f)&=&\int_{\cW^2} f(\w_1, \w_2) d\cZ_\bW(\w_1)d\cZ_\bW(\w_2)\\
&=&\sum_{1 \leq i_1, i_2\leq n} a_{i_1, i_2} \mathcal{Z}_\bW(A_{i_1}) \times \cZ(A_{i_2})\ .
\end{eqnarray*}
\end{definition}

By taking limits over the set $\cE_2$, we can extend the above multiple Weiner-Itô integral for elementary functions to functions in $L^2(\cW^2,\rB(\cW^2),\bW^2)$ as follows.
\begin{definition}\label{Def:MWI}\cite[Multiple Weiner-Itô integral for general $L_2$-functions, Definition 3.3]{Chatterjee:Bhattacharya2023} The $2$-dimensional Weiner-Itô stochastic integral for a function $f\in L^2(\cW^2,\rB(\cW^2),\bW^2)$ is defined as the $L_2$ limit of the sequence $\{I_2(f_\ell)\}_{\ell\geq1}$, where $\{f_\ell\}_{\ell\geq1}$ is a sequence such that $f_\ell\in\cE_2$ with $\lim_{\ell\rightarrow\infty}\|f_\ell-f\|=0$. This is denoted by
\[
I_2(f)=\int_{\cW^2}f(\w_1,\w_2) d\cZ_\bW(\w_1)d\cZ_\bW(\w_2)\ .
\]
\end{definition}

\

Given the Multiple Weiner-Itô integral for general $L_2$-functions. \textbf{We now present the detailed proofs of Theorem~\ref{thm:null_asym} as follows.}
\begin{proof}
Recall the definition of $\U^{\cK}_n(W)$ in Eqn.~\eqref{eq:multiU}, we have that for $\meta=(\eta_1,\eta_2,...,\eta_c)^T\in\bR^c$,
\begin{eqnarray*}
\meta^T\U_n^\cK(W)&=&\sum_{j=1}^c\eta_j U_n^{\kappa_j}(W)\\
&=&\binom{n}{2}^{-1} \sum_{j=1}^c\eta_j\sum_{1 \leq i_1 < i_2 \leq n} h(\w_{i_1}, \w_{i_2};\kappa_j)\\
&=&\binom{n}{2}^{-1} \sum_{1 \leq i_1 < i_2 \leq n} \sum_{j=1}^c\eta_jh(\w_{i_1}, \w_{i_2};\kappa_j)\\
&=& \binom{n}{2}^{-1} \sum_{1 \leq i_1 < i_2 \leq n}h\left(\w_{i_1}, \w_{i_2};\cK_{\meta}\right)\ ,
\end{eqnarray*}
where
\[
h\left(\w_{i_1}, \w_{i_2};\cK_{\meta}\right) =\sum_{j=1}^c\eta_jh(\w_{i_1}, \w_{i_2};\kappa_j)\ .
\]

It is easy to see that the function $h\left(\cdot;\cK_{\meta}\right)$ is a measurable and symmetric function, and $h\left(\cdot;\cK_{\meta}\right)\in L^2(\cW^2,\rB(\cW^2),\bW^2)$ based on bounded and characteristic kernels. Moreover, the function $h\left(\cdot;\cK_{\meta}\right)$ is first-order degenerate, as this property is inherited from the first-order degeneracy of each component function $h(\cdot;\kappa)$ with $\kappa\in \cK$. Then, by \cite[Corollary 4.4.2, Section 4.4]{Korolyuk:Borovskich2013}, we obtain
\begin{equation}\label{eq:null_ran}
n\cdot\meta^T\U_n^\cK(W)\stackrel{d}{\rightarrow}\sum_{\nu=1}^\infty\lambda_\nu(Z^2_\nu-1)\ ,
\end{equation}
where the $Z_\nu$ are i.i.d. random variables drawn from $\cN(0,1)$ and the $\{\lambda_\nu\}_{\nu=1}^\infty$ are the eigenvalues of the Hilbert-Schmidt operator $\cH_{\cK_{\meta}}\colon L^2(\cW,\rB(\cW),\bW)\rightarrow L^2(\cW,\rB(\cW),\bW)$ defined as
\begin{equation}\label{eq:eig1}
\cH_{\cK_{\meta}}[f](\w_1)=\int_{-\infty}^{\infty}h\left(\w_{1}, \w_{2};\cK_{\meta}\right)f(\w_2)d\bW(\w_2)\ .
\end{equation}

By \citep[Theorem 6.1, Section 6]{Janson1997}, we have the characteristic function $\Phi(\meta)$ as follows
\begin{eqnarray}\label{eq:charc}
E\left[\exp\left(\iota\cdot n\cdot\meta^T\U_n^\cK(W)\right)\right]&\stackrel{d}{\rightarrow}&E\left[\exp\left(\iota\cdot\sum_{\nu=1}^\infty\lambda_\nu(Z^2_\nu-1)\right)\right]\\
&=&\prod_{\nu =1}^\infty \frac{\exp\left(-\iota \lambda_\nu\right)}{\sqrt{1-2\iota \lambda_\nu}}=\Phi(\meta)\ .\nonumber
\end{eqnarray}
Here, $\meta\in\bR^c$ can be chosen arbitrarily and Lemma~\ref{lem:chara_prop} establishes the continuity of $\Phi(\meta)$ at $\meta=\bm{0}\in\bR^c$. By Lévy’s Continuity Theorem \citep[Theorem 3.3.17]{Durrett2019}, a random vector $Z_\cK$ exists with characteristic function $\Phi(\meta)$ such that
\[
n\cdot\U_n^\cK(W)\stackrel{d}{\rightarrow}Z_\cK\ .
\]

Next, we establish that $Z_\cK$ can be represented as $G_\cK$. To achieve this, we leverage the linearity property of multiple stochastic integrals, which allows us to write the characteristic function of $G_\cK$ at $\meta$ as follows
\begin{eqnarray*}
\Phi'(\meta)&=&E\left[\exp\left(\iota\meta^T G_\cK\right)\right]\\
&=&E\left[\exp\left(\iota\sum_{j=1}^c\eta_jI_2(h(\cdot;\kappa_j))\right)\right]\\
&=&E\left[\exp\left(\iota I_2\left(\sum_{j=1}^c\eta_j h(\cdot;\kappa_j)\right)\right)\right]\\
&\stackrel{(a)}{=}&E\left[\exp\left(\iota I_2\left(h(\cdot;\cK_{\meta})\right)\right)\right]\\
&\stackrel{(b)}{=}&\prod_{\nu =1}^\infty\frac{\exp(-\iota\lambda'_\nu)}{\sqrt{1-2\iota\lambda'_\nu}}
\end{eqnarray*}
where equality $(a)$ holds for $h\left(\w_{i_1}, \w_{i_2};\cK_{\meta}\right) =\sum_{j=1}^c\eta_jh(\w_{i_1}, \w_{i_2};\kappa_j)$, and equality $(b)$ follows by \citep[Theorem 6.1, Section 6]{Janson1997}. Here, $\{\lambda'_\nu\}_{\nu=1}^\infty$ represents the eigenvalues of the bilinear form $\cB:L^2(\cW,\rB(\cW),\bW)\times L^2(\cW,\rB(\cW),\bW)\rightarrow\bR$ defined as
\[
\cB(f_1,f_2)=\frac{1}{2}E[I_2(h\left(\cdot;\cK_{\meta}\right))I_1(f_1)I_1(f_2)]\ ,
\]
for any $f_1,f_2\in L^2(\cW,\rB(\cW),\bW)$. The terms $I_1(f_1)$ and $I_1(f_2)$ are Weiner-Itô integrals (Definition~\ref{Def:MWI}) defined as follows
\[
I_1(f)=\int_{\cW}f(\w) d\cZ_\bW(\w)\qquad\textnormal{for}\qquad f\in\{f_1,f_2\}\ .
\]

Now, applying the stochastic integral multiplication formula~\citep[Section 7, Theorem 7.33]{Janson1997}, we obtain the following result:
\begin{eqnarray*}
\lefteqn{\frac{1}{2}E[I_2(h\left(\cdot;\cK_{\meta}\right))I_1(f_1)I_1(f_2)]}\\
&=&\frac{1}{2}\int_{\cW^2}h\left(\w_1,\w_2;\cK_{\meta}\right)\left[f_1(\w_1)f_2(\w_2)+f_1(\w_2)f_2(\w_1)\right]d\bW(\w_1)d\bW(\w_2)\\
&=&\int_{\cW^2}h\left(\w_1,\w_2;\cK_{\meta}\right)f_1(\w_1)f_2(\w_2)d\bW(\w_1)d\bW(\w_2)\ ,
\end{eqnarray*}
where the last equation holds by the symmetry of the core function $h_2(\cdot;\cK_{\meta})$. 

This demonstrates that the bilinear form $\cB(f_1,f_2)$ has the same eigenvalues as those in Eqn.~\eqref{eq:eig1} based on $h\left(\cdot;\cK_{\meta}\right)$. Hence, we have that $\{\lambda_\nu\}_{\nu=1}^\infty=\{\lambda'_\nu\}_{\nu=1}^\infty$ and that
\[
\Phi(\meta)=\Phi'(\meta)\ .
\]
This proves that $Z_\cK$ can be represented as $G_\cK$ and 
\[
n\cdot\U_n^\cK(W)\stackrel{d}{\rightarrow}G_\cK=\left(I_{2}(h(\cdot;\kappa_1)),I_{2}(h(\cdot;\kappa_2)),...,I_{2}(h(\cdot;\kappa_c))\right)^T\ .
\]
This complete the proof.
\end{proof}

The following lemma establishes the continuity of the characteristic function.

\begin{lemma}\label{lem:chara_prop}
Let $\Phi(\meta)$ be as defined in Eqn.~\eqref{eq:charc}. Then, $\Phi(\bm{0})=1$ and $\Phi(\meta)$ is continuous at $\bm{0}\in\bR^c$.
\end{lemma}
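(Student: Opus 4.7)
The first claim, $\Phi(\bm{0}) = 1$, is immediate: by linearity, $h(\cdot\,; \cK_{\bm{0}}) = \sum_{j=1}^c 0 \cdot h(\cdot\,; \kappa_j) \equiv 0$, so the integral operator $\cH_{\cK_{\bm{0}}}$ is the zero operator, all its eigenvalues $\lambda_\nu$ vanish, and every factor in the product reduces to $e^0/\sqrt{1} = 1$.

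For continuity at $\bm{0}$, my plan has two steps. First, I would control the eigenvalues in bulk via the Hilbert--Schmidt norm of the operator:
\[
\sum_{\nu=1}^\infty \lambda_\nu(\meta)^2 \;=\; \|\cH_{\cK_\meta}\|_{\mathrm{HS}}^2 \;=\; \int_{\cW^2} h(\w_1, \w_2; \cK_\meta)^2\, d\bW(\w_1)\, d\bW(\w_2).
\]
Substituting $h(\w_1,\w_2; \cK_\meta) = \sum_{j=1}^c \eta_j\, h(\w_1,\w_2;\kappa_j)$ and using Cauchy--Schwarz together with the boundedness of each $h(\cdot\,;\kappa_j)$ (which follows from the bounded characteristic assumption on the kernels in $\cK$), I would derive $\sum_\nu \lambda_\nu(\meta)^2 \le M\|\meta\|^2$ for some constant $M$ depending only on $\cK$ and $\bW$. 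In particular, $\sup_\nu |\lambda_\nu(\meta)| \le (\sum_\nu \lambda_\nu(\meta)^2)^{1/2} \to 0$ as $\meta \to \bm{0}$, so every eigenvalue can be forced into a neighborhood of $0$ where a universal Taylor bound applies.

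Second, I would Taylor-expand each factor. The key observation is that $e^{-\iota\lambda} = 1 - \iota\lambda + O(\lambda^2)$ and $(1-2\iota\lambda)^{-1/2} = 1 + \iota\lambda + O(\lambda^2)$, so their product has \emph{cancelling} first-order imaginary parts and equals $1 + O(\lambda_\nu^2)$. Combining this with the standard infinite-product bound $\bigl|\prod_\nu(1+a_\nu) - 1\bigr| \le \exp\bigl(\sum_\nu |a_\nu|\bigr) - 1$ yields $|\Phi(\meta) - 1| \le \exp(C \sum_\nu \lambda_\nu(\meta)^2) - 1 \le \exp(CM\|\meta\|^2) - 1 \to 0$ as $\meta \to \bm{0}$. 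The main obstacle is ensuring the Taylor remainder is controlled \emph{uniformly in $\nu$}: without uniformity one cannot pass to the infinite product. This is precisely what the first step buys, since once $\sup_\nu |\lambda_\nu(\meta)|$ is uniformly small, a single remainder constant applies to every factor and the argument closes.
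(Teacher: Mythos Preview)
Your proposal is correct but takes a different route from the paper. Both arguments share the same first step: the eigenvalues satisfy $\sum_\nu \lambda_\nu(\meta)^2 = \|\cH_{\cK_\meta}\|_{\mathrm{HS}}^2 \to 0$ as $\meta \to \bm 0$. From there, the paper proceeds probabilistically: it uses the representation $\Phi(\meta) = E\bigl[\exp\bigl(\iota \sum_\nu \lambda_\nu(Z_\nu^2-1)\bigr)\bigr]$, shows that the exponent has second moment $2\sum_\nu \lambda_\nu^2 \to 0$ (hence converges to $0$ in $L^2$), and applies the dominated convergence theorem (the integrand is bounded by $1$) to conclude $\Phi(\meta) \to 1$. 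You instead work directly with the infinite product, exploiting the first-order cancellation in $e^{-\iota\lambda}(1-2\iota\lambda)^{-1/2} = 1 + O(\lambda^2)$ and the standard product bound $\bigl|\prod_\nu(1+a_\nu)-1\bigr| \le e^{\sum_\nu |a_\nu|} - 1$. The paper's route is slightly shorter and avoids the factor-by-factor Taylor control, while your route is more elementary (no auxiliary random variables or DCT) and delivers an explicit quantitative bound $|\Phi(\meta)-1| \le \exp(CM\|\meta\|^2)-1$, which the paper's argument does not directly give.
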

\begin{proof}
When $\meta=\bm{0}$ with $\bm{0}\in\bR^c$, the eigenvalues $\lambda_\nu=0$ for $\nu\in\{1,2,3,...,\infty\}$. Hence, $\Phi(\bm{0})=1$ based on its definition as in Eqn.~\eqref{eq:charc}
\[
\Phi(\meta)=\prod_{\nu =1}^\infty \frac{\exp\left(-\iota \lambda_\nu\right)}{\sqrt{1-2\iota \lambda_\nu}}=E\left[\exp\left(\iota\cdot\sum_{\nu=1}^\infty\lambda_\nu(Z^2_\nu-1)\right)\right]\ ,
\]
where $\{\lambda_\nu\}_{\nu=1}^\infty$ are eigenvalues of the Hilbert-Schmidt operator $\cH_{\cK_{\meta}}: L^2(\cW,\rB(\cW),\bW)\rightarrow L^2(\cW,\rB(\cW),\bW)$ defined as
\[
\cH_{\cK_{\meta}}[f](\w_1)=\int_{-\infty}^{\infty}h\left(\w_{1}, \w_{2};\cK_{\meta}\right)f(\w_2)d\bW(\w_2)\ ,
\]
where $h\left(\w_{1}, \w_{2};\cK_{\meta}\right)=\sum_{j=1}^c\eta_jh\left(\w_{1}, \w_{2};\kappa_j\right)$.

Next, we prove that $\Phi(\meta)$ is continuous at $\bm{0}\in\bR^c$. It is evident that the eigenvalues $\{\lambda_\nu\}_{\nu=1}^\infty$ of the Hilbert–Schmidt operator $\cH_{\cK_{\meta}}$ satisfy the summability condition~\cite{Conway1994}
\[
\sum_{\nu=1}^\infty\lambda^2_\nu<\infty\ .
\]

Then, applying Fubini's Theorem to the term $\sum_{\nu=1}^\infty\lambda_\nu(Z^2_\nu-1)$, we obtain
\begin{eqnarray*}
    \lefteqn{E\left[\left(\sum_{\nu=1}^\infty\lambda_\nu(Z^2_\nu-1)\right)^2\right]}\\
    &=&E\left[\sum_{\nu=1}^\infty\lambda^2_\nu(Z^2_\nu-1)^2+\sum_{\nu_1\neq\nu_2}^\infty\lambda_{\nu_1}\lambda_{\nu_2}(Z^2_{\nu_1}-1)(Z^2_{\nu_2}-1)\right]\\
    &=&\sum_{\nu=1}^\infty\lambda^2_\nu E[(Z^2_\nu-1)^2]\\
    &=&2\sum_{\nu=1}^\infty\lambda^2_\nu\ .
\end{eqnarray*}
Furthermore, we have, by the spectral theorem~\citep[Theorem 6.35, Section 6.2.1]{Renardy:Rogers2006},
\[
E\left[\left(\sum_{\nu=1}^\infty\lambda_\nu(Z^2_\nu-1)\right)^2\right]=2\sum_{\nu=1}^\infty\lambda^2_\nu=2\|\cH_{\cK_{\meta}}\|^2\ .
\]

It is evident that $\lim_{\meta\rightarrow\bm{0}}2\|\cH_{\cK_{\meta}}\|^2=0$ since $\cK_{\meta}(\cdot,\cdot)=\sum_{j=1}^c\eta_j\kappa_j(\cdot,\cdot)$,
and thus we have $\sum_{\nu=1}^\infty\lambda_\nu(Z^2_\nu-1)\stackrel{L^2}{\rightarrow}0$, as $\meta\rightarrow 0$. Hence, by the Dominated Convergence Theorem~\cite[Theorem 3, Section 1.3]{Evans1992}, we have
\[
\lim_{\meta\rightarrow\bm{0}}\Phi(\meta)=\lim_{\meta\rightarrow\bm{0}}E\left[\exp\left(\iota\cdot\sum_{\nu=1}^\infty\lambda_\nu(Z^2_\nu-1)\right)\right]=1\ .
\]
This completes the proof that  $\Phi(\cdot)$ is continuous at $\bm{0}\in\bR^c$.
\end{proof}

\subsection{The Detailed Proofs of Lemma~\ref{lem:variance}}
We begin with a useful Theorem which we now present.
\begin{theorem}\label{thm:larg_U}\cite[Eqn. (5.7)]{Hoeffding1994}
If the function $h(\cdot;\kappa)$ is bounded given a kernel $\kappa$, i.e., $a\leq h(\w_1, \w_2;\kappa)\leq b$, the following holds for samples $W=\{ \w_1, \w_2, \ldots, \w_n\}\sim\bW^n$,
\[
\Pr(|U^\kappa_n(W)-\theta|\geq t)\leq2\exp{(-2\lfloor n/2\rfloor t^2/(b-a)^2)}\ ,
\]
where $\theta=E[h(\w_1, \w_2;\kappa)]$.
\end{theorem}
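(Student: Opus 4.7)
The plan is to reproduce Hoeffding's original reduction from a $U$-statistic tail bound to the classical Hoeffding inequality for sums of independent bounded random variables. The core tool is an averaging representation: any degree-$2$ $U$-statistic on $n$ samples can be expressed as an arithmetic mean over permutations of the sample of simpler statistics, each of which is the average of $\lfloor n/2\rfloor$ i.i.d.\ evaluations of $h(\cdot,\cdot;\kappa)$ on \emph{disjoint} pairs. Concentration is then transferred from these simpler averages to $U_n^\kappa(W)$ via convexity.

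Concretely, I would first fix a permutation $\pi$ of $\{1,\dots,n\}$ and define
\[
V_\pi(W) \;=\; \frac{1}{\lfloor n/2\rfloor}\sum_{k=1}^{\lfloor n/2\rfloor} h\bigl(\w_{\pi(2k-1)},\w_{\pi(2k)};\kappa\bigr),
\]
which is an average of $\lfloor n/2\rfloor$ independent, $[a,b]$-valued random variables with common mean $\theta$. A symmetry/counting argument shows
\[
U_n^\kappa(W) \;=\; \frac{1}{n!}\sum_\pi V_\pi(W),
\]
so Jensen's inequality applied to the convex map $x\mapsto e^{tx}$ yields
\[
\bE\bigl[\exp\!\bigl(t(U_n^\kappa(W)-\theta)\bigr)\bigr] \;\leq\; \bE\bigl[\exp\!\bigl(t(V-\theta)\bigr)\bigr],
\]
where $V$ denotes any single pair-average. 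Applying the classical Hoeffding lemma to the sum of $\lfloor n/2\rfloor$ independent centered $[a,b]$-valued terms inside $V$ gives the sub-Gaussian moment-generating function bound $\exp\bigl(t^2(b-a)^2/(8\lfloor n/2\rfloor)\bigr)$. Optimizing over $t$ in the Chernoff bound then yields the one-sided tail $\exp(-2\lfloor n/2\rfloor t^2/(b-a)^2)$, and running the same argument for $-U_n^\kappa(W)$ produces the factor $2$ in the stated two-sided bound.

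The only genuinely subtle step is the combinatorial identity $U_n^\kappa(W)=\frac{1}{n!}\sum_\pi V_\pi(W)$: one must verify that each unordered pair $\{\w_{i_1},\w_{i_2}\}$ with $i_1<i_2$ is hit equally often as $\pi$ ranges over all $n!$ permutations, so that the right-hand side collapses to $\binom{n}{2}^{-1}\sum_{i_1<i_2}h(\w_{i_1},\w_{i_2};\kappa)$. Once that counting is done, the rest is essentially the textbook proof of Hoeffding's inequality applied to $V$, and the $\lfloor n/2\rfloor$ in the exponent is an unavoidable feature of the disjoint-pair decomposition, which provides only as many genuinely independent evaluations of $h$ as there are non-overlapping pairs in the sample.
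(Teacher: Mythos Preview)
Your proposal is correct and is exactly Hoeffding's original argument (the paper itself does not prove this statement; it simply cites \cite[Eqn.~(5.7)]{Hoeffding1994} and uses it as a black box). The reduction via the permutation average $U_n^\kappa(W)=\tfrac{1}{n!}\sum_\pi V_\pi(W)$, followed by Jensen and the classical Hoeffding lemma on the $\lfloor n/2\rfloor$ independent disjoint-pair summands, is precisely the derivation in Hoeffding's paper, so there is nothing to add.
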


We now present the detailed proofs of Lemma~\ref{lem:variance} as follows.

\begin{proof}
Denote by $\Sigma_{H_0}$ is the covariance matrix of $n\U_n^\cK(W_{H_0})$ under null hypothesis $H_0$, consisting of (co)variances $n^2\cdot\sigma_{H_0,a,b}$ with $1\leq a,b\leq c$ as follows, by \citep[Theorem 2, Section 1.4]{Lee2019},
\begin{equation}\label{eq:null_var_sec}
n^2\cdot\sigma_{H_0,a,b}= COV\left(n\U_n^{\kappa_a}(W_{H_0}), n\U_n^{\kappa_b}(W_{H_0})\right)\\
=n^2\binom{n}{2}^{-1}\sum_{r=1}^{2}\binom{2}{r}\binom{n-2}{2-r}\zeta_r\ .\\
\end{equation}
for some specific $\zeta_r$.

For the first-order degenerate $U$-statistics, we have $\zeta_1 =0$ and 
\begin{equation}\label{eq:var_zeta}
n^2\cdot\sigma_{H_0,a,b}=n^2\binom{n}{2}^{-1}\zeta_2\ ,
\end{equation}
where $\zeta_2$ denotes the second-order projection term defined as
\[
\zeta_2=E_{\w_1,\w_2}\left[h(\w_1,\w_2;\kappa_a)h(\w_1,\w_2;\kappa_b)\right]\ ,
\]
with $\w_1$ and $\w_2$ independently drawn under the null hypothesis $H_0$.

A natural estimator for $\zeta_2$, based on the $U$-statistic, is given by
\begin{eqnarray*}
\widehat{\zeta}_2 = \binom{n}{2}^{-1}\times\sum_{1\leq i_1 < i_2\leq n}h(\w'_{i_1},\w'_{i_2};\kappa_a)h(\w'_{i_1},\w'_{i_2};\kappa_b)\ ,
\end{eqnarray*}
with $\w'_{i_1}, \w'_{i_2}\in W_{H_0}$.

By Theorem~\ref{thm:larg_U}, for any arbitrarily small positive number $\epsilon$, the term $|\widehat{\zeta}_2-\zeta_2|$ satisfies the inequality
\begin{eqnarray*}
\Pr\left(|\widehat{\zeta}_2-\zeta_2|>\epsilon\right)&\leq& 2\exp(-2\lfloor n/2\rfloor\epsilon^2/(b^2-a^2)^2)\\
&\rightarrow& 0 \qquad \text{as} \qquad n\rightarrow \infty\ .
\end{eqnarray*}

This proves that 
\begin{equation}\label{eq:zeta_conv}
\widehat{\zeta}_2\stackrel{p}{\rightarrow}\zeta_2\qquad\text{as}\qquad n\rightarrow\infty\ .
\end{equation}

For the entries $n^2\cdot\hat{\sigma}_{H_0,a,b}$ of the estimated covariance matrix $\widehat{\Sigma}_{H_0}$, defined as 
\[
n^2\cdot\hat{\sigma}_{H_0,a,b} = n^{2}\binom{n}{2}^{-2} \sum_{1\leq i_1 < i_2\leq n}h(\w'_{i_1},\w'_{i_2};\kappa_a)h(\w'_{i_1},\w'_{i_2};\kappa_b)=n^2\binom{n}{2}^{-1}\widehat{\zeta}_2\ ,
\]
we have that, 
\[
\hat{\sigma}_{H_0,a,b}^2\stackrel{p}{\rightarrow}\sigma^2_{H_0,a,b}\qquad\text{as}\qquad n\rightarrow\infty\ ,
\]
by combining Eqns.~\eqref{eq:var_zeta} and ~\eqref{eq:zeta_conv}.

This convergence implies that
\[
\widehat{\Sigma}_{H_0}\stackrel{p}{\rightarrow}\Sigma_{H_0}\qquad\text{as}\qquad n\rightarrow\infty\ .
\]
Thus, the proof is complete.
\end{proof}

\subsection{The Detailed Proofs of Theorem~\ref{thm:alt_converge}}
We begin with a useful Theorem as follows.
\begin{theorem}\citep[Theorem 4.2.3]{Korolyuk:Borovskich2013}\label{thm:asym_alt_cite}
Under the alternative hypothesis $H_1$, for non-degenerate $h(\cdot;\kappa)$ where $E[h^2(\w_1,\w_2;\kappa)]<\infty$ for each $\kappa\in\cK$, the following holds
\[
\sqrt{n}(\U_n^\cK(W)-\U^\cK(\bW))\stackrel{d}{\rightarrow}\cN(\bm{0},\Sigma_{H_1})\ ,
\]
where $\U^\cK(\bW)=E\left[\U_n^\cK(W)\right]=E\left[\h(\w_1, \w_2;\cK)\right]$ and $\Sigma_{H_1}$ is the covariance matrix of $\sqrt{n}\U_n^\cK(W)$, whose entries consist of (co)variances $\sigma_{H_1,a,b}$ with $1\leq a,b\leq c$ as follows
\[
\sigma_{H_1,a,b}=4\left(E\left[h_1(\w_1;\kappa_a)h_1(\w_1;\kappa_b)\right]-U^{\kappa_a}(\bW)U^{\kappa_b}(\bW)\right)\ .
\]
\end{theorem}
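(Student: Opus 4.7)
The plan is to deduce this multivariate CLT from the classical univariate CLT for non-degenerate $U$-statistics via the Cramér-Wold device. For an arbitrary direction $\meta = (\eta_1,\dots,\eta_c)^T \in \bR^c$, observe that $\meta^T \U_n^\cK(W) = U_n^{\cK_\meta}(W)$ is itself a second-order $U$-statistic, whose symmetric kernel function is the linear combination $h(\cdot;\cK_\meta) = \sum_{j=1}^c \eta_j h(\cdot;\kappa_j)$; the Cauchy-Schwarz inequality together with $E[h(\w_1,\w_2;\kappa_j)^2] < \infty$ for each $j$ guarantees $E[h(\w_1,\w_2;\cK_\meta)^2] < \infty$. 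Writing $\theta_\meta = E[h(\w_1,\w_2;\cK_\meta)] = \meta^T \U^\cK(\bW)$, the task reduces to showing $\sqrt{n}(U_n^{\cK_\meta}(W) - \theta_\meta) \stackrel{d}{\rightarrow} \cN(0,\ \meta^T \Sigma_{H_1} \meta)$.

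Next, I would apply Hoeffding's $H$-decomposition to $U_n^{\cK_\meta}(W)$:
\[
U_n^{\cK_\meta}(W) - \theta_\meta = \frac{2}{n}\sum_{i=1}^n \bigl(h_1(\w_i;\cK_\meta) - \theta_\meta\bigr) + R_n,
\]
where the first-order projection $h_1(\w_1;\cK_\meta) = E[h(\w_1,\w_2;\cK_\meta)\mid \w_1] = \sum_j \eta_j h_1(\w_1;\kappa_j)$ inherits linearity in $\meta$, and $R_n$ is a completely first-order degenerate second-order $U$-statistic. Standard orthogonality of the Hoeffding projections (see, e.g., \cite{Serfling2009, Lee2019}) gives $\mathrm{Var}(R_n) = O(n^{-2})$, so $\sqrt{n}\,R_n = o_P(1)$ by Chebyshev's inequality.

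The leading term is a centered sum of i.i.d.\ variables whose single-sample variance equals
\[
\mathrm{Var}(h_1(\w_1;\cK_\meta)) = \sum_{1\le a,b\le c} \eta_a \eta_b \bigl(E[h_1(\w_1;\kappa_a)h_1(\w_1;\kappa_b)] - U^{\kappa_a}(\bW) U^{\kappa_b}(\bW)\bigr) = \tfrac{1}{4}\,\meta^T \Sigma_{H_1} \meta,
\]
finite by the $L^2$ hypothesis. The classical Lindeberg-Lévy CLT therefore yields $\frac{2}{\sqrt{n}}\sum_{i=1}^n (h_1(\w_i;\cK_\meta) - \theta_\meta) \stackrel{d}{\rightarrow} \cN(0,\ \meta^T \Sigma_{H_1} \meta)$. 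Applying Slutsky's theorem to absorb the negligible remainder yields the univariate conclusion, and since $\meta$ was arbitrary, the Cramér-Wold theorem delivers the joint convergence $\sqrt{n}(\U_n^\cK(W) - \U^\cK(\bW)) \stackrel{d}{\rightarrow} \cN(\bm{0}, \Sigma_{H_1})$.

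The main technical obstacle is the variance bound $\mathrm{Var}(R_n) = O(n^{-2})$ for the degenerate remainder, which is the crux of the projection approach; it follows from combinatorial identities for $U$-statistic variances together with the mutual orthogonality of the Hoeffding projections in $L^2$. The non-degeneracy of each $h(\cdot;\kappa_j)$ is used only to ensure that the limiting covariance $\Sigma_{H_1}$ is non-trivial (in particular, for the studentized follow-up $T_n^\cK(W)$ one wants $\meta^T \Sigma_{H_1}\meta > 0$ for every non-zero $\meta$ lying in the span of the signal directions); the CLT itself holds under the stated $L^2$ moment assumption regardless of rank. As noted in the excerpt, this entire argument is exactly the content of \cite[Theorem 4.2.3]{Korolyuk:Borovskich2013}, so the proof in the paper can simply cite that result.
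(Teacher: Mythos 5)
Your proposal is correct: the Cram\'er--Wold reduction to the scalar $U$-statistic $U_n^{\cK_\meta}$, the Hoeffding projection with $\mathrm{Var}(R_n)=O(n^{-2})$, and the variance identity $\mathrm{Var}(h_1(\w_1;\cK_\meta))=\tfrac14\meta^T\Sigma_{H_1}\meta$ are exactly the standard argument behind the cited result. The paper itself gives no proof of this statement---it is imported verbatim as \citep[Theorem 4.2.3]{Korolyuk:Borovskich2013}---so your sketch supplies precisely the proof the paper defers to the reference, and you correctly observe that citing it suffices.
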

We now present the detailed proofs of Theorem~\ref{thm:alt_converge} as follows.
\begin{proof}
The asymptotic distribution of $\U_n^\cK(W)$ follows directly from Theorem~\ref{thm:asym_alt_cite}. In the following, we analyze the asymptotic distribution of $T^\cK_{n}(W)$. 

By applying the Taylor series expansion of $T^\cK_{n}(W)$ at $U^\cK(\bW)$, we obtain
\begin{eqnarray*}
T^\cK_{n}(W)&= &n^2(\U^\cK(\bW))^T\widehat{\Sigma}^{-1}_{H_0}\U^\cK(\bW)+2n^2\left(\U_n^\cK(W)-\U^\cK(\bW)\right)^T\widehat{\Sigma}^{-1}_{H_0}\U^\cK(\bW)\\
&~&+~n^2\left(\U_n^\cK(W)-\U^\cK(\bW)\right)^T\widehat{\Sigma}^{-1}_{H_0}\left(\U_n^\cK(W)-\U^\cK(\bW)\right)\ .
\end{eqnarray*}

Furthermore, it is evident that $\|\U_n^\cK(W)-\U^\cK(\bW)\|_2=O_p(1/\sqrt{n})$ from Theorem~\ref{thm:asym_alt_cite}. Building on this, we have
\begin{eqnarray*}
\lefteqn{n^{-3/2}\left(T^\cK_{n}(W)-n^2\left(U^\cK(\bW)\right)\widehat{\Sigma}^{-1}_{H_0}U^\cK(\bW)\right)}\\
&=& 2n^{1/2}\left(\U_n^\cK(W)-\U^\cK(\bW)\right)^T\widehat{\Sigma}^{-1}_{H_0}\U^\cK(\bW)\\
&~&+~n^{1/2}\left(\U_n^\cK(W)-\U^\cK(\bW)\right)^T\widehat{\Sigma}^{-1}_{H_0}\left(\U_n^\cK(W)-\U^\cK(\bW)\right)\\
&=&2n^{1/2}\left(\U_n^\cK(W)-\U^\cK(\bW)\right)^T\widehat{\Sigma}^{-1}_{H_0}\U^\cK(\bW)+O(1/\sqrt{n})\ .
\end{eqnarray*}

By applying Lemma~\ref{lem:variance} and Theorem~\ref{thm:asym_alt_cite}, and invoking Slutsky’s theorem~\cite{Papoulis:Pillai2001}, we obtain the following asymptotic distribution
\[
n^{-3/2}\left(T^\cK_{n}(W)-n^2\left(U^\cK(\bW)\right)\widehat{\Sigma}^{-1}_{H_0}U^\cK(\bW)\right)\stackrel{d}{\rightarrow}\cN(0,\sigma^2_{H_1})\ ,
\]
where $\sigma^2_{H_1}=4\left(U^\cK(\bW)\right)^T\Sigma^{-1}_{H_0}\Sigma_{H_1}\Sigma^{-1}_{H_0}U^\cK(\bW)$.
\end{proof}

\subsection{The Detailed Proofs of Theorem~\ref{thm:typeI}}
The proof of Theorem~\ref{thm:typeI} builds upon the results established in \cite[Proposition 1]{Schrab:Kim:Guedj:Gretton2022}. We present the detailed proofs of Theorem~\ref{thm:typeI} as follows.
\begin{proof}
Under the null hypothesis $H_0$, and following the analysis in \cite[Appendix F.1]{Schrab:Kim:Guedj:Gretton2022} for MMD and HSIC, we have that the corresponding test $U$-statistic $n\U_n^{\kappa}(W_{te})$ shares the same asymptotic distribution as its $b$-th wild bootstrap $U$-statistic $n\U_n^{\kappa,b}(W_{te})$ defined as 
\[
n\cdot\U_n^{\kappa,b}(W_{te}) = n\cdot\binom{n}{2}^{-1} \sum_{1 \leq i_1 < i_2\leq n}\epsilon_{i_1}\epsilon_{i_2}h(\w_{i_1}, \w_{i_2};\kappa)\ ,
\] 
for all $b\in[B]$ and all $\kappa\in\cK$. Consequently, by defining the multiple kernel forms 
\begin{eqnarray*}
n\cdot\U_n^{\cK}(W_{te})&=&\left(n\cdot U_n^{\kappa_1}(W_{te}), n\cdot U_n^{\kappa_2}(W_{te}),..., n\cdot U_n^{\kappa_c}(W_{te})\right)^T\ ,\\
n\cdot\U_n^{\cK,b}(W_{te})&=&\left(n\cdot U_n^{\kappa_1,b}(W_{te}), n\cdot U_n^{\kappa_2,b}(W_{te}),..., n\cdot U_n^{\kappa_c,b}(W_{te})\right)^T\ ,
\end{eqnarray*}
it follows that the multivariate $U$-statistics $n\U_n^{\cK}(W_{te})$ and $n\U_n^{\cK,b}(W_{te})$ share the same asymptotic distribution by Cramér-Wold Theorem, as shown in~\cite[Section 6.1]{schrab2025unified} using the fact that the U-statistics are linear with respect to their kernel parameter. Furthermore, in the asymptotic manner, we have $\widehat{\L}_{H_0}\rightarrow \L_{H_0}$ established in Lemma~\ref{lem:variance} with $\Sigma_{H_0}=\L_{H_0}\L_{H_0}$ and $\widehat{\Sigma}_{H_0}=\widehat{\L}_{H_0}\widehat{\L}_{H_0}$. Based on Slutsky's Theorem~\cite{Papoulis:Pillai2001}, it follows directly from Theorem~\ref{thm:null_asym} that  
\begin{eqnarray*}
n\cdot\widehat{\L}_{H_0}\U_n^{\cK}(W_{te})&\stackrel{d}{\rightarrow}& \L_{H_0}^{-1} G_\cK\ ,\\
n\cdot\widehat{\L}_{H_0}\U_n^{\cK,b}(W_{te})&\stackrel{d}{\rightarrow}& \L_{H_0}^{-1} G_\cK\ .
\end{eqnarray*}

According to the definition in Eqn.~\eqref{eq:test_stat}, the test statistic and the wild bootstrap statistic can be expressed as follows
\begin{eqnarray*}
\cT&=&\|\max(n\cdot \mathrm{diag}(\F_{tr})\widehat{\L}_{H_0}^{-1}\U_n^\cK(W_{te}),\bm{0})\|^2_2\ ,\\
\cT^b &=& \left\|\max\left(n\cdot \mathrm{diag}(\F_{tr})\widehat{\L}_{H_0}^{-1}\U_n^{\cK,b}(W_{te}), \bm{0}\right)\right\|^2_2\ .
\end{eqnarray*}

It is observed that the mapping $x\mapsto\max(x,0)$ is continuous. Consequently, $\cT$ and $\cT^b$ are continuous functions of $n\cdot \widehat{\L}_{H_0}^{-1}\U_n^{\cK}(W_{te})$ and $n\cdot \widehat{\L}_{H_0}^{-1}\U_n^{\cK,b}(W_{te})$, respectively. 

Finally, by \emph{Continuous Mapping Theorem} \cite{Mann:Wald1943}, we have
\begin{eqnarray*}
\cT&\stackrel{d}{\rightarrow}&\left\|\max\left(\mathrm{diag}(\F_{tr})\L_{H_0}^{-1}G_\cK, \bm{0}\right)\right\|^2_2\ ,\\
\cT^b&\stackrel{d}{\rightarrow}&\left\|\max\left(\mathrm{diag}(\F_{tr})\L_{H_0}^{-1}G_\cK, \bm{0}\right)\right\|^2_2\ .
\end{eqnarray*}

This completes the proof.
\end{proof}

\begin{figure}[t!]
    \centering
    \includegraphics[width=0.8\linewidth]{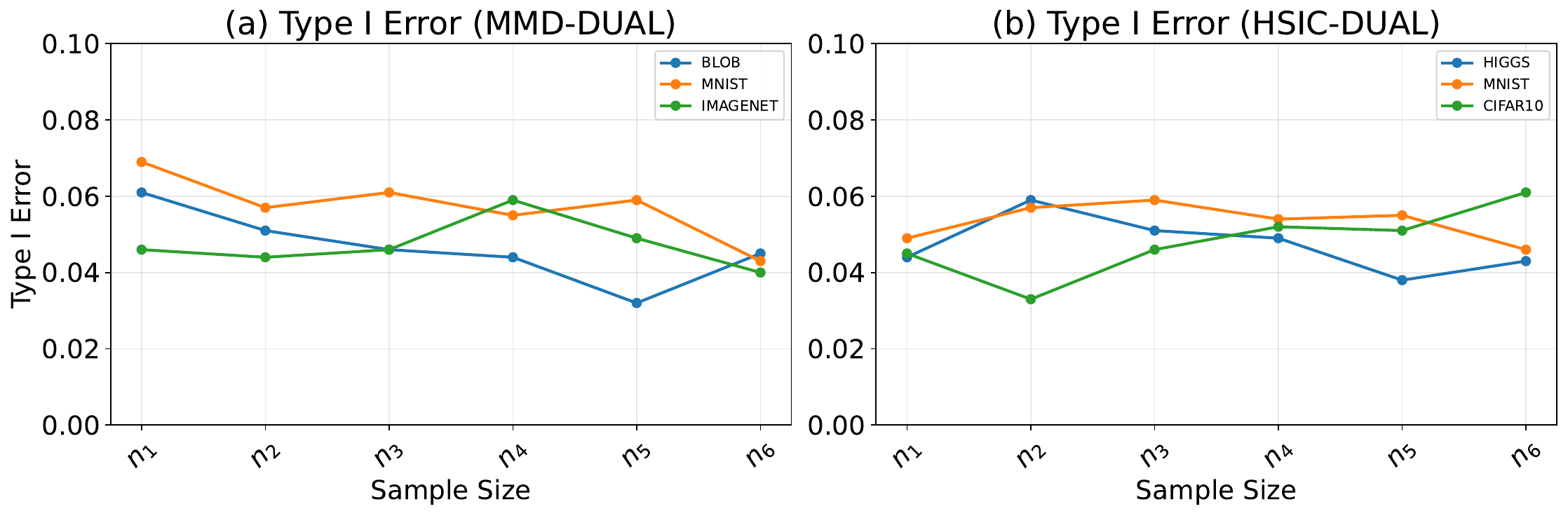}
    \caption{Two-sample testing: $(a)$ Type I error checking experiments on dataset BLOB, MNIST and ImageNet; and Independence testing: $(b)$ Type I error checking experiments on dataset Higgs, MNIST and CIFAR10. The Type I error results are averaged over 1,000 repetitions under the significant level $\alpha=0.05$. $n_1 - n_6$ refers to a set of six sample sizes associated with each dataset presented in Figure~\ref{fig:main_results}, where the specific sample sizes vary across datasets.}
    \label{fig:type-I_error}
\end{figure}

\section{Supplementary Experimental Results} \label{sec:extra_exp}
In this Section, we will illustrate the details on the experiments we conducted on two-sample testing and independence testing. Moreover, we present the results of extra Type-I error check and ablation experiments on additional datasets.

\subsection{Experimental Details}
\textbf{Initialization of Kernels.} We allow 6 different categories of kernels to be aggregated together: O+L, O+G, O+M, R+L, R+G and R+M, where O denotes original features, R denotes representation of original features learned by deep neural network \cite{Liu:Xu:Lu:Zhang:Gretton:Sutherland2020}, L denotes Laplacian kernel, G denotes Gaussian kernel and M denotes Mahalanobis kernel. For the initialization of bandwidths for each categories, we can either select from scaled median heuristics \cite{Schrab:Kim:Guedj:Gretton2022, Biggs:Schrab:Gretton2023} or by grid-search \cite{Jitkrittum:Szabo:Chwialkowski:Gretton2016}. For the median heuristics-based approach, specifically, we compute the 0.05 and 0.95 quantiles of the pairwise distances between samples, and then select bandwidths uniformly between the minimum of the 0.05 quantile and the maximum of the 0.95 quantile. With a fixed kernel set size $c$, this procedure yields the same kernel set for the same data, ensuring both performance and reproducibility.

In Table \ref{tab:test_power_blob}, we further analyze the sensitivity of the kernel set initialization. Specifically, we compare the test powers of the MMD-DUAL, MMDAgg, and MMD-FUSE methods when the kernel sets (with 10 kernels) are initialized using either the heuristic approach or random initialization. From the result, we find that across all methods, heuristic initialization yields higher and more stable test power, as indicated by lower standard deviations, compared to random initialization. Moreover, randomly initializing the kernel sets leads to a noticeable drop in performance for all methods, generally resulting in lower test power and higher standard deviations. However, MMD-DUAL differs from MMDAgg and MMD-FUSE in that it performs kernel optimization. While MMDAgg and MMD-FUSE fully rely on the initial kernel sets, MMD-DUAL can adaptively optimize over a diverse kernel pool even when the initialization is not ideal. This enables MMD-DUAL to maintain relatively high and stable test power under random initialization.

\begin{table}[htbp]
\centering
\caption{Test power$\pm$standard deviation for dataset BLOB under heuristic kernel initialization and random kernel initialization. $N$ is the number of pairs of two samples.}
\label{tab:test_power_blob}
\resizebox{\textwidth}{!}{%
\begin{tabular}{lcccccc}
\toprule
\textbf{Heuristic} & N=50 & N=100 & N=150 & N=200 & N=250 & N=300 \\
\midrule
MMD-DUAL & $0.134\pm0.016$ & $\mathbf{0.454\pm0.022}$ & $\mathbf{0.750\pm0.028}$ & $\mathbf{0.915\pm0.016}$ & $\mathbf{0.983\pm0.005}$ & $\mathbf{0.998\pm0.001}$ \\
MMDAgg & $0.124\pm0.055$ & $0.270\pm0.076$ & $0.539\pm0.073$ & $0.764\pm0.053$ & $0.941\pm0.026$ & $0.987\pm0.008$ \\
MMD-FUSE & $\mathbf{0.153\pm0.055}$ & $0.346\pm0.065$ & $0.673\pm0.084$ & $0.863\pm0.042$ & $0.972\pm0.011$ & $\mathbf{0.998\pm0.002}$ \\
\midrule
\textbf{Random} & & & & & & \\
\midrule
MMD-DUAL & $0.084\pm0.067$ & $\mathbf{0.365\pm0.123}$ & $\mathbf{0.420\pm0.084}$ & $\mathbf{0.585\pm0.038}$ & $\mathbf{0.737\pm0.055}$ & $\mathbf{0.825\pm0.061}$ \\
MMDAgg & $0.058\pm0.028$ & $0.148\pm0.054$ & $0.210\pm0.056$ & $0.410\pm0.163$ & $0.675\pm0.158$ & $0.733\pm0.169$ \\
MMD-FUSE & $\mathbf{0.095\pm0.067}$ & $0.098\pm0.048$ & $0.203\pm0.152$ & $0.439\pm0.181$ & $0.656\pm0.163$ & $0.740\pm0.083$ \\
\bottomrule
\end{tabular}
}
\end{table}

\textbf{Learnable parameters for kernel learning.} As we use four different kinds of kernels, we will list the parameters for each kernel below:
\begin{enumerate}
    \item Gaussian kernel: $\kappa_i(x, y) = \exp(-||x-y||^2/\sigma_i^2)$, where bandwidth $\sigma_i$ is the trainable parameters. 
    \item Laplacian kernel: $\kappa_i(x, y) = \exp(-||x-y||/\sigma_i)$, where bandwidth $\sigma_i$ is the trainable parameters.
    \item Mahalanobis kernel: $\kappa_i(x, y) = \exp(-(x-y)^{T}\Sigma_i^{-1}(x-y))$, where the covariance matrix $\Sigma_i$ is the trainable parameters. 
    \item Deep kernel: $\kappa_i(x, y) = [(1-\epsilon)\exp(-||\phi_{\omega_i}(x) - \phi_{\omega_i}(y)||^2/\sigma_{\phi_i}^2) + \epsilon]\exp(-||x-y||^2/\sigma_i^2)$, where the parameters $\omega_i$ in the deep neural network $\phi_{\omega_i}$, bandwidths $\sigma_{\phi_i}$ and $\sigma_i$, and the weight $\epsilon$ are all trainable.
\end{enumerate}
In the reproducible code repository (it can be redirected from the provided code link), we use all four kernels. However, for simplicity to use, even gaussian kernels with different bandwidths can achieve high performance. 

\textbf{Two-sample Testing Baselines and Experiments.}
In two-sample testing experiments, we compare our proposed MMD-DUAL with 4 state-of-the-art baselines. The implementations of AutoTST \cite{Kubler:Stimper:Buchholz:Muandet:Scholkopf2022}, MMDAgg \cite{Schrab:Kim:Albert:Laurent:Guedj:Gretton2023}, MMD-D \cite{Liu:Xu:Lu:Zhang:Gretton:Sutherland2020} and MMD-FUSE \cite{Biggs:Schrab:Gretton2023} can all be found in the GitHub link displayed in their papers' Abstract. For MMDAgg and MMD-FUSE, we use the default settings of total number of twenty bandwidths (ten O+L and ten O+G). For the implementation of our proposed MMD-DUAL, we use one O+G and one O+M for BLOB dataset, use each of the O+L, O+G, O+M, R+L, R+G and R+M for MNIST dataset and use one O+G, one O+M and one R+G for ImageNet dataset. For the representation model architecture, we follow the implementation of \cite{Liu:Xu:Lu:Zhang:Gretton:Sutherland2020} in BLOB dataset and \cite{Tian:Peng:Zhou:Gong:Gretton:Liu2025} in image dataset. The learning rate is $5e^{-4}$ for BLOB and $5e^{-5}$ for MNIST and ImageNet. For all the two-sample testing experiments, we conduct each experiment with ten different seeds, and for each seed, we perform the testing data selection and two-sample testing process for 100 times. In total, the results are all averaging over $1,\!000$ repetitions.

\textbf{Independence Testing Baselines and Experiments.} In independence testing experiments, we compare our proposed HSIC-DUAL with 3 state-of-the-art baselines. The implementations of FSIC \cite{Jitkrittum:Szabo:Gretton2017}, HSICAgg \cite{Schrab:Kim:Guedj:Gretton2022} and HSIC-O \cite{Ren:Xia:Zhang:Guan:Zhou2024} can all be found in the GitHub link displayed in their papers' Abstract. For HSIC-Agg, we use the default settings of total number of nine kernel O+G. For the implementation of our proposed HSIC-DUAL, we use four O+G, four O+L, five O+M for Higgs dataset, use two O+G, two O+L, three R+G and three R+L for MNIST dataset and use four O+G, four O+L, four R+G and four R+L for CIFAR10 dataset. The representation model architecture of original features are exactly same as the two-sample testing settings. The implementation of \cite{Liu:Xu:Lu:Zhang:Gretton:Sutherland2020} in Higgs dataset and \cite{Tian:Peng:Zhou:Gong:Gretton:Liu2025} in image dataset. In independence testing, we only apply representation encoder model on the original features, not on labels. The learning rate is $5e^{-4}$ for all three datasets. Moreover, for all the independence testing experiments, we conduct each experiment with ten different seeds, and for each seed, we perform the testing data selection and independence testing process for 100 times. In total, the results are all averaging over $1,\!000$ repetitions.

\textbf{Two-sample and Independence Testing Datasets.} All the sample sizes $n$ in the Figure \ref{fig:main_results} represents the number of samples we use in both the training phase and testing phase. Thus, for baselines without data-splitting (e.g., MMDAgg, HSICAgg, FSIC and MMD-FUSE), we use $2n$ samples to ensure the fairness that there are same total samples included in the whole testing experiments. For two-sample testing, we use the BLOB dataset generated in the same way as \cite{Gretton:Borgwardt:Rasch:Scholkopf:Smola2012, Liu:Xu:Lu:Zhang:Gretton:Sutherland2020}, use the adversarial MNIST the same as \cite{Liu:Xu:Lu:Zhang:Gretton:Sutherland2020, Schrab:Kim:Albert:Laurent:Guedj:Gretton2023} and use the ImageNetV2 the same as \cite{Liu:Xu:Lu:Zhang:Gretton:Sutherland2020}. For independence testing, we generate different linear perturbations on each dimensions of the original Higgs data to measure the test power \cite{Laumann:Kugelgen:Barahona2021}. For image datasets, we generate the corruption of labels with percentage of 75\% (only 25\% of the images have their original labels) to measure the test power, followed by \cite{Schrab:Kim:Guedj:Gretton2022}. For all three datasets under the null hypothesis in two-sample testing, we draw two samples from same distribution (e.g., both from adversarial images). For all three datasets under the null hypothesis in the independence testing, we independently and separately draw two samples from the original datasets.

\textbf{Implementation resources.} The experiments of the work are conducted on two platforms. One platform is an Nvidia-4090 GPU PC with Pytorch framework. Another platform is a High-performance Computer cluster with several Nvidia-A100 GPUs with Pytorch framework. The memory of two platforms are both 64 GB. The storage of disk of two platforms are both over 4 TB.

\subsection{Additional Experiments Results} \label{add_exp_result}

\textbf{Type-I error Check.} In Figure \ref{fig:type-I_error}, we conduct the Type-I error check under the null hypothesis, the Type-I error of DUAL is controlled at the desired significance level $\alpha=0.05$. The choice of significance level value and the procedure of Type-I error check also follow all the previous works in two-sample testing and independence testing \cite{Gretton:Borgwardt:Rasch:Scholkopf:Smola2006, Gretton:Borgwardt:Rasch:Scholkopf:Smola2012,Schrab:Kim:Guedj:Gretton2022, Schrab:Kim:Albert:Laurent:Guedj:Gretton2023, Liu:Xu:Lu:Zhang:Gretton:Sutherland2020}.

\begin{figure}[t]
    \centering
    \includegraphics[width=\linewidth]{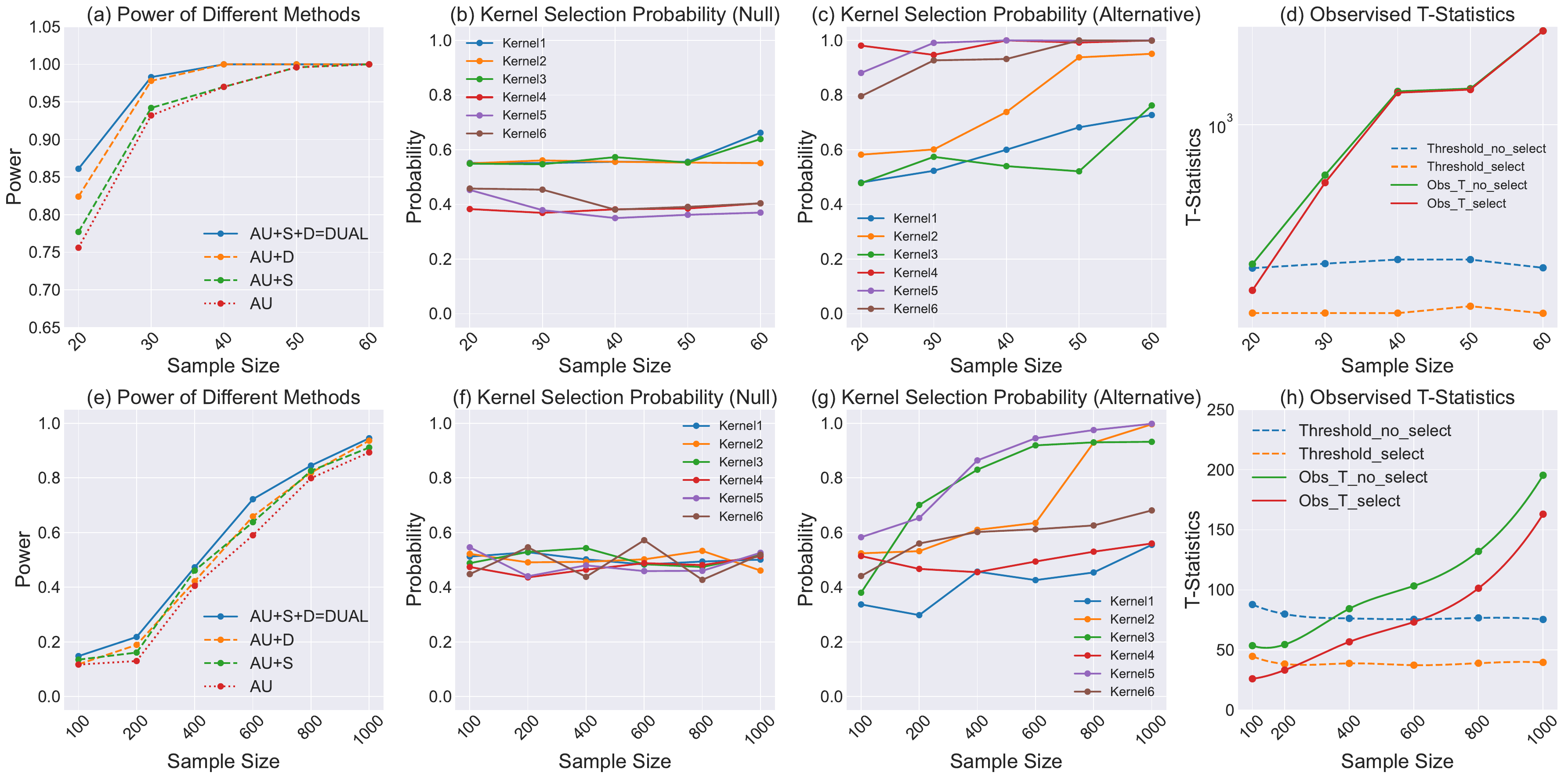}
    \caption{Ablation Study on the effectiveness of diversity and selection inference on two more image datasets. $(a\!-\!d)$ are ablation study for MMD-DUAL on MNIST; $(e\!-\!h)$ are ablation study for HSIC-DUAL on MNIST. $(a,e)$ Test power for model variants: AU represents simple Aggregated $U$-Statistics; S represents selection inference technique; D represents considering diversity into AU; AU+S+D refers to our proposed DUAL.}
    \label{fig:extra_ablation}
\end{figure}

\textbf{Additional Ablation Results.} In Figure \ref{fig:extra_ablation}, we conduct additional ablation experiments to show that, the analysis in Section \ref{subsec:ablation} is consistent across different datasets and different testing methods, proving the effectiveness of the learned diverse kernels and selection inference technique from DUAL. 

\textbf{Time Complexity Results.} In Table \ref{tab:running_times}, we present the running time of MMD-DUAL, MMDAgg and MMD-FUSE across different datasets and sample sizes. In this experiment, all three methods are initialized with the same kernel set, following the experimental setup of \cite{Schrab:Kim:Albert:Laurent:Guedj:Gretton2023}. From the result, we observe that all methods have similar testing times. The key difference lies in the training phase: MMDAgg and MMD-FUSE do not involve any optimization and therefore incur no training time, while MMD-DUAL includes a kernel optimization phase, resulting in a training cost.

\begin{table}[htbp]
\centering
\caption{Running Time (seconds) per two-sample test trial. We record the total training time and testing time for three methods across 100 trials, then we divide each total time by 100. For MMDAgg and MMD-FUSE, there are no training time, and the number of samples used in testing will be double than that used in MMD-DUAL testing, since MMD-DUAL has a half-half train-test splitting process.}
\label{tab:running_times}
\resizebox{\textwidth}{!}{%
\begin{tabular}{lccccccccccccccccccc}
\toprule
\multirow{2}{*}{Running Time (s)} & \multicolumn{6}{c}{BLOB} & \multicolumn{6}{c}{MNIST} & \multicolumn{6}{c}{ImageNet} \\
\cmidrule(lr){2-7} \cmidrule(lr){8-13} \cmidrule(lr){14-19}
& N=50 & N=100 & N=150 & N=200 & N=250 & N=300 & N=20 & N=30 & N=40 & N=50 & N=60 & N=70 & N=400 & N=500 & N=600 & N=700 & N=800 & N=900 \\
\midrule
\textbf{Training} & & & & & & & & & & & & & & & & & & \\
MMD-DUAL & 0.047 & 0.048 & 0.048 & 0.050 & 0.052 & 0.054 & 1.291 & 1.324 & 1.340 & 1.357 & 1.360 & 1.377 & 9.682 & 11.081 & 13.546 & 15.996 & 18.727 & 21.983 \\
MMDAgg & 0 & 0 & 0 & 0 & 0 & 0 & 0 & 0 & 0 & 0 & 0 & 0 & 0 & 0 & 0 & 0 & 0 & 0 \\
MMD-FUSE & 0 & 0 & 0 & 0 & 0 & 0 & 0 & 0 & 0 & 0 & 0 & 0 & 0 & 0 & 0 & 0 & 0 & 0 \\
\midrule
\textbf{Testing} & & & & & & & & & & & & & & & & & & \\
MMD-DUAL & 0.004 & 0.004 & 0.004 & 0.004 & 0.004 & 0.004 & 0.018 & 0.018 & 0.018 & 0.018 & 0.018 & 0.018 & 0.707 & 0.807 & 1.024 & 1.110 & 1.274 & 1.471 \\
MMDAgg & 0.004 & 0.004 & 0.004 & 0.004 & 0.004 & 0.004 & 0.029 & 0.031 & 0.031 & 0.031 & 0.031 & 0.032 & 0.763 & 0.957 & 1.167 & 1.333 & 1.448 & 1.594 \\
MMD-FUSE & 0.001 & 0.002 & 0.002 & 0.002 & 0.003 & 0.004 & 0.028 & 0.028 & 0.028 & 0.028 & 0.028 & 0.028 & 0.680 & 0.843 & 0.985 & 1.172 & 1.406 & 1.552 \\
\bottomrule
\end{tabular}
}
\end{table}

\section{Analytical Support for Ablation Study Results}\label{app:example}
To facilitate comprehension and offer deeper insight, we present a simple example illustrating the advantages of our testing approach, which incorporates the alignment vector with wild bootstrap. This example serves as a more detailed analytical explanation and provides additional support for the ablation study results shown in Figures~\ref{fig:ablation} and~\ref{fig:extra_ablation}.
\begingroup
\renewcommand{\thetheorem}{1}
\begin{example}\label{exa:testp}
For simplicity, we assume that each component in the vector $n\widehat{\L}_{H_0}^{-1}\U_n^{\cK}(W_{te})$ are mutually independent and share the same scale. For each component $\left(n\widehat{\L}_{H_0}^{-1}\U_n^{\cK}(W_{te})\right)_i$ with $i\in[c]$, we consider its wild bootstrap value $\left(n\widehat{\L}_{H_0}^{-1}\U_n^{\cK,b}(W_{te})\right)_i$ in $b$-th iteration with $b\in[B]$, it follows that 
\[
\Pr\left(\left(n\widehat{\L}_{H_0}^{-1}\U_n^{\cK,b}(W_{te})\right)_i\geq0\right)=\Pr\left(\left(n\widehat{\L}_{H_0}^{-1}\U_n^{\cK,b}(W_{te})\right)_i<0\right)=0.5\ ,
\]
based on the i.i.d. Rademacher random variables (which are symmetric about 0). Given the fixed vector $\F_{tr}\in\{-1,+1\}^c$ from training phase, the probability that the sign of the $i$-th component of $n\widehat{\L}_{H_0}^{-1}\U_n^{\cK,b}(W_{te})$ matches the corresponding entry in $\F_{tr}$ is thus $0.5$, i.e., 
\[
\Pr\left(\text{sgn}\left(n\widehat{\L}_{H_0}^{-1}\U_n^{\cK,b}(W_{te})\right)_i = F_{tr,i}\right)=0.5\ ,
\]
where $\text{sgn}\left(n\widehat{\L}_{H_0}^{-1}\U_n^{\cK,b}(W_{te})\right)$ is denoted by $\F^b_{te}$ in the wild bootstrap process. As a result, the probability that $F^b_i =\bI[F^b_{te,i} = F_{tr,i}] = 0$ is equal to $0.5$. We denote $\tau_\alpha$ as the asymptotic value of $\hat{\tau}_\alpha$ with $B=\infty$ in Eqn.~\eqref{eq:testthreshold} and denote by $\tau^{\rm NA}_\alpha$ the asymptotic threshold of statistic $n^2\|\widehat{\L}_{H_0}^{-1}\U_n^\cK(W_{te})\|_2^2$, i.e., the statistic without selection inference. Asymptotically, as $c\rightarrow\infty$, it follows that
\[
\tau_\alpha=0.5\tau^{\rm NA}_\alpha\ .
\]
In contrast, under the alternative hypothesis $H_1$, the asymptotic distributions of the statistic vectors $n\widehat{\L}_{H_0}^{-1}\U_n^{\cK}(W_{te})$ and $n\widehat{\L}_{H_0}^{-1}\U_n^{\cK}(W_{tr})$ deviate from $\bm{0}$, as can be inferred from Theorem~\ref{thm:alt_converge}. Denote by $\F_{te}=\mathrm{sgn}\left(\widehat{\L}_{H_0}^{-1}\U_n^{\cK}(W_{te})\right)$ and $\F_{tr}=\mathrm{sgn}\left(\widehat{\L}_{H_0}^{-1}\U_n^{\cK}(W_{tr})\right)$ the sign vector. As a consequence, the probability that the signs of the $i$-th components of two vectors disagree, i.e., $\Pr(F_i = \mathbb{I}[F_{te,i} = F_{tr,i}] = 0)$, is reduced to some $\beta < 0.5$. Asymptotically, as $c\rightarrow\infty$, it follows that
\[
\cT=\|n\cdot \F\circ\widehat{\L}_{H_0}^{-1}\U_n^\cK(W_{te})\|^2_2=(1-\beta)n^2\|\widehat{\L}_{H_0}^{-1}\U_n^\cK(W_{te})\|_2^2\ ,
\]
which results in an improvement in the test power as follows
\begin{eqnarray*}
\Pr(\cT>\tau_\alpha)-\Pr\left(n^2\|\widehat{\L}_{H_0}^{-1}\U_n^\cK(W_{te})\|_2^2>\tau^{\rm NA}_\alpha\right) &=&Pr(\cT>\tau_\alpha)-\Pr\left(\frac{\cT}{1-\beta}>2\tau_\alpha\right)\\
&=&Pr(\cT>\tau_\alpha)-\Pr\left(\cT>2(1-\beta)\tau_\alpha\right)\\
&>&0 \ .
\end{eqnarray*}
\end{example}
\endgroup
\addtocounter{theorem}{-1}

In this example, we assume mutual independence to derive $\Pr\left(F^b_i =\bI[F^b_{te,i} = F_{tr,i}] = 0\right)=0.5$, a condition that is challenging to achieve but is empirically observed in practice due to the de-correlation property of $\widehat{\L}_{H_0}^{-1}$ in practice, as illustrated in Figures~\ref{fig:ablation} and~\ref{fig:extra_ablation} (second column). Furthermore, we investigate the probability $\beta$, which is larger than 0.5 and increases as sample size increases as shown in Figures~\ref{fig:ablation} and~\ref{fig:extra_ablation} (third column).

\section{Potential Applications of the Proposed Multivariate U-Statistics}
As noted in prior work, MMD-GAN \cite{Li:Chang:Cheng:Yang:Poczos2017} is a generative model that relies on the standard MMD loss. In principle, this loss could be replaced with our proposed multivariate $U$-statistic in \eqref{eq:aggStat}, i.e., $T_n^K(W)$. This statistic aggregates information from MMD statistics computed with multiple kernels, allowing it to capture richer information about the discrepancy between the target and generative distributions. By minimizing $T_n^K(W)$, the generator may produce samples that better align with the target distribution compared to using the standard MMD alone.

Moreover, the proposed MMD-DUAL and HSIC-DUAL could potentially be applied to various domains that require statistical two-sample or independence testing methods. For example, \emph{MMD-DUAL} (two-sample) could be used for tasks involving the comparison of a trusted reference against incoming data, such as out-of-distribution detection \cite{Jiang:Liu:Fang:Chen:Liu:Zhang:Han2023}, adversarial image detection \cite{Zhang:Liu:Yang:Yang:Li:Han:Tan2023}, and distribution alignment in transfer learning \cite{Guo:Lin:Yi:Song:Sun:Lin:Zhong:Wu:Wang:Zhang2022}. Similarly, \emph{HSIC-DUAL} (independence) could be useful in domains that require mitigating or verifying statistical dependence, including domain generalization \cite{Zhong:Hou:Yao:Dong:Liu:Yue:Wu:Zheng:Ouyang:Yang2024}, causal discovery \cite{Chi:Li:Yang:Liu:Lan:Ren:Liu:Han2024}, machine unlearning \cite{Mehta:Pal:Singh:Ravi2022}, and trustworthy machine learning~\cite{Han:Yao:Liu:Li:Koyejo:Liu2025}.

\newpage
\section*{NeurIPS Paper Checklist}
\begin{enumerate}

\item {\bf Claims}
    \item[] Question: Do the main claims made in the abstract and introduction accurately reflect the paper's contributions and scope?
    \item[] Answer: \answerYes{} 
    \item[] Justification: In the abstract and Section \ref{sec:intro}, we clearly demonstrate the contribution and scope of this paper.
    \item[] Guidelines:
    \begin{itemize}
        \item The answer NA means that the abstract and introduction do not include the claims made in the paper.
        \item The abstract and/or introduction should clearly state the claims made, including the contributions made in the paper and important assumptions and limitations. A No or NA answer to this question will not be perceived well by the reviewers. 
        \item The claims made should match theoretical and experimental results, and reflect how much the results can be expected to generalize to other settings. 
        \item It is fine to include aspirational goals as motivation as long as it is clear that these goals are not attained by the paper. 
    \end{itemize}

\item {\bf Limitations}
    \item[] Question: Does the paper discuss the limitations of the work performed by the authors?
    \item[] Answer: \answerYes{} 
    \item[] Justification: The assumptions are clearly stated in the theorem conditions, and the computational complexity limitations are discussed in Section~\ref{subsec:discuss}. In Section~\ref{subsec:learning}, we also identify a risk that would happen in the training process, even though we provide an approach to mitigate it.
    \item[] Guidelines:
    \begin{itemize}
        \item The answer NA means that the paper has no limitation while the answer No means that the paper has limitations, but those are not discussed in the paper. 
        \item The authors are encouraged to create a separate "Limitations" section in their paper.
        \item The paper should point out any strong assumptions and how robust the results are to violations of these assumptions (e.g., independence assumptions, noiseless settings, model well-specification, asymptotic approximations only holding locally). The authors should reflect on how these assumptions might be violated in practice and what the implications would be.
        \item The authors should reflect on the scope of the claims made, e.g., if the approach was only tested on a few datasets or with a few runs. In general, empirical results often depend on implicit assumptions, which should be articulated.
        \item The authors should reflect on the factors that influence the performance of the approach. For example, a facial recognition algorithm may perform poorly when image resolution is low or images are taken in low lighting. Or a speech-to-text system might not be used reliably to provide closed captions for online lectures because it fails to handle technical jargon.
        \item The authors should discuss the computational efficiency of the proposed algorithms and how they scale with dataset size.
        \item If applicable, the authors should discuss possible limitations of their approach to address problems of privacy and fairness.
        \item While the authors might fear that complete honesty about limitations might be used by reviewers as grounds for rejection, a worse outcome might be that reviewers discover limitations that aren't acknowledged in the paper. The authors should use their best judgment and recognize that individual actions in favor of transparency play an important role in developing norms that preserve the integrity of the community. Reviewers will be specifically instructed to not penalize honesty concerning limitations.
    \end{itemize}

\item {\bf Theory assumptions and proofs}
    \item[] Question: For each theoretical result, does the paper provide the full set of assumptions and a complete (and correct) proof?
    \item[] Answer: \answerYes{} 
    \item[] Justification: In Appendix \ref{sec:proof}, we provide the complete proof process of the theoretical contributions of our work.
    \item[] Guidelines:
    \begin{itemize}
        \item The answer NA means that the paper does not include theoretical results. 
        \item All the theorems, formulas, and proofs in the paper should be numbered and cross-referenced.
        \item All assumptions should be clearly stated or referenced in the statement of any theorems.
        \item The proofs can either appear in the main paper or the supplemental material, but if they appear in the supplemental material, the authors are encouraged to provide a short proof sketch to provide intuition. 
        \item Inversely, any informal proof provided in the core of the paper should be complemented by formal proofs provided in appendix or supplemental material.
        \item Theorems and Lemmas that the proof relies upon should be properly referenced. 
    \end{itemize}

    \item {\bf Experimental result reproducibility}
    \item[] Question: Does the paper fully disclose all the information needed to reproduce the main experimental results of the paper to the extent that it affects the main claims and/or conclusions of the paper (regardless of whether the code and data are provided or not)?
    \item[] Answer: \answerYes{} 
    \item[] Justification: In Section \ref{sec:implementation}, we provide detailed implementation pipeline to ensure that our experiment can be reproduced. 
    \item[] Guidelines:
    \begin{itemize}
        \item The answer NA means that the paper does not include experiments.
        \item If the paper includes experiments, a No answer to this question will not be perceived well by the reviewers: Making the paper reproducible is important, regardless of whether the code and data are provided or not.
        \item If the contribution is a dataset and/or model, the authors should describe the steps taken to make their results reproducible or verifiable. 
        \item Depending on the contribution, reproducibility can be accomplished in various ways. For example, if the contribution is a novel architecture, describing the architecture fully might suffice, or if the contribution is a specific model and empirical evaluation, it may be necessary to either make it possible for others to replicate the model with the same dataset, or provide access to the model. In general. releasing code and data is often one good way to accomplish this, but reproducibility can also be provided via detailed instructions for how to replicate the results, access to a hosted model (e.g., in the case of a large language model), releasing of a model checkpoint, or other means that are appropriate to the research performed.
        \item While NeurIPS does not require releasing code, the conference does require all submissions to provide some reasonable avenue for reproducibility, which may depend on the nature of the contribution. For example
        \begin{enumerate}
            \item If the contribution is primarily a new algorithm, the paper should make it clear how to reproduce that algorithm.
            \item If the contribution is primarily a new model architecture, the paper should describe the architecture clearly and fully.
            \item If the contribution is a new model (e.g., a large language model), then there should either be a way to access this model for reproducing the results or a way to reproduce the model (e.g., with an open-source dataset or instructions for how to construct the dataset).
            \item We recognize that reproducibility may be tricky in some cases, in which case authors are welcome to describe the particular way they provide for reproducibility. In the case of closed-source models, it may be that access to the model is limited in some way (e.g., to registered users), but it should be possible for other researchers to have some path to reproducing or verifying the results.
        \end{enumerate}
    \end{itemize}

\item {\bf Open access to data and code}
    \item[] Question: Does the paper provide open access to the data and code, with sufficient instructions to faithfully reproduce the main experimental results, as described in supplemental material?
    \item[] Answer: \answerYes{} 
    \item[] Justification: Here is the anonymous \href{https://anonymous.4open.science/r/Aggregated_U_stats-980C/README.md}{link} for the code. 
    \item[] Guidelines:
    \begin{itemize}
        \item The answer NA means that paper does not include experiments requiring code.
        \item Please see the NeurIPS code and data submission guidelines (\url{https://nips.cc/public/guides/CodeSubmissionPolicy}) for more details.
        \item While we encourage the release of code and data, we understand that this might not be possible, so “No” is an acceptable answer. Papers cannot be rejected simply for not including code, unless this is central to the contribution (e.g., for a new open-source benchmark).
        \item The instructions should contain the exact command and environment needed to run to reproduce the results. See the NeurIPS code and data submission guidelines (\url{https://nips.cc/public/guides/CodeSubmissionPolicy}) for more details.
        \item The authors should provide instructions on data access and preparation, including how to access the raw data, preprocessed data, intermediate data, and generated data, etc.
        \item The authors should provide scripts to reproduce all experimental results for the new proposed method and baselines. If only a subset of experiments are reproducible, they should state which ones are omitted from the script and why.
        \item At submission time, to preserve anonymity, the authors should release anonymized versions (if applicable).
        \item Providing as much information as possible in supplemental material (appended to the paper) is recommended, but including URLs to data and code is permitted.
    \end{itemize}

\item {\bf Experimental setting/details}
    \item[] Question: Does the paper specify all the training and test details (e.g., data splits, hyperparameters, how they were chosen, type of optimizer, etc.) necessary to understand the results?
    \item[] Answer: \answerYes{} 
    \item[] Justification: In Appendix \ref{sec:extra_exp}, we clearly demonstrated various experimental settings, including hyperparameters, model settings, training settings, evaluation settings, etc.
    \item[] Guidelines:
    \begin{itemize}
        \item The answer NA means that the paper does not include experiments.
        \item The experimental setting should be presented in the core of the paper to a level of detail that is necessary to appreciate the results and make sense of them.
        \item The full details can be provided either with the code, in appendix, or as supplemental material.
    \end{itemize}

\item {\bf Experiment statistical significance}
    \item[] Question: Does the paper report error bars suitably and correctly defined or other appropriate information about the statistical significance of the experiments?
    \item[] Answer: \answerYes{} 
    \item[] Justification: We conduct the main test power experiments for 1,000 repetitions to ensure the statistic significance.
    \item[] Guidelines:
    \begin{itemize}
        \item The answer NA means that the paper does not include experiments.
        \item The authors should answer "Yes" if the results are accompanied by error bars, confidence intervals, or statistical significance tests, at least for the experiments that support the main claims of the paper.
        \item The factors of variability that the error bars are capturing should be clearly stated (for example, train/test split, initialization, random drawing of some parameter, or overall run with given experimental conditions).
        \item The method for calculating the error bars should be explained (closed form formula, call to a library function, bootstrap, etc.)
        \item The assumptions made should be given (e.g., Normally distributed errors).
        \item It should be clear whether the error bar is the standard deviation or the standard error of the mean.
        \item It is OK to report 1-sigma error bars, but one should state it. The authors should preferably report a 2-sigma error bar than state that they have a 96\% CI, if the hypothesis of Normality of errors is not verified.
        \item For asymmetric distributions, the authors should be careful not to show in tables or figures symmetric error bars that would yield results that are out of range (e.g. negative error rates).
        \item If error bars are reported in tables or plots, The authors should explain in the text how they were calculated and reference the corresponding figures or tables in the text.
    \end{itemize}

\item {\bf Experiments compute resources}
    \item[] Question: For each experiment, does the paper provide sufficient information on the computer resources (type of compute workers, memory, time of execution) needed to reproduce the experiments?
    \item[] Answer: \answerYes{} 
    \item[] Justification: In Appendix \ref{sec:extra_exp}, we provide sufficient information on the computer resources needed to reproduce the experiments.
    \item[] Guidelines:
    \begin{itemize}
        \item The answer NA means that the paper does not include experiments.
        \item The paper should indicate the type of compute workers CPU or GPU, internal cluster, or cloud provider, including relevant memory and storage.
        \item The paper should provide the amount of compute required for each of the individual experimental runs as well as estimate the total compute. 
        \item The paper should disclose whether the full research project required more compute than the experiments reported in the paper (e.g., preliminary or failed experiments that didn't make it into the paper). 
    \end{itemize}
    
\item {\bf Code of ethics}
    \item[] Question: Does the research conducted in the paper conform, in every respect, with the NeurIPS Code of Ethics \url{https://neurips.cc/public/EthicsGuidelines}?
    \item[] Answer: \answerYes{} 
    \item[] Justification: We guarantee that the research conducted in the paper complies with NeurIPS Code of Ethics in all aspects.
    \item[] Guidelines:
    \begin{itemize}
        \item The answer NA means that the authors have not reviewed the NeurIPS Code of Ethics.
        \item If the authors answer No, they should explain the special circumstances that require a deviation from the Code of Ethics.
        \item The authors should make sure to preserve anonymity (e.g., if there is a special consideration due to laws or regulations in their jurisdiction).
    \end{itemize}

\item {\bf Broader impacts}
    \item[] Question: Does the paper discuss both potential positive societal impacts and negative societal impacts of the work performed?
    \item[] Answer: \answerNA{} 
    \item[] Justification: The purpose of this paper is to improve the diversity and power in kernel aggregated hypothesis testing, without any negative societal impacts.
    \item[] Guidelines:
    \begin{itemize}
        \item The answer NA means that there is no societal impact of the work performed.
        \item If the authors answer NA or No, they should explain why their work has no societal impact or why the paper does not address societal impact.
        \item Examples of negative societal impacts include potential malicious or unintended uses (e.g., disinformation, generating fake profiles, surveillance), fairness considerations (e.g., deployment of technologies that could make decisions that unfairly impact specific groups), privacy considerations, and security considerations.
        \item The conference expects that many papers will be foundational research and not tied to particular applications, let alone deployments. However, if there is a direct path to any negative applications, the authors should point it out. For example, it is legitimate to point out that an improvement in the quality of generative models could be used to generate deepfakes for disinformation. On the other hand, it is not needed to point out that a generic algorithm for optimizing neural networks could enable people to train models that generate Deepfakes faster.
        \item The authors should consider possible harms that could arise when the technology is being used as intended and functioning correctly, harms that could arise when the technology is being used as intended but gives incorrect results, and harms following from (intentional or unintentional) misuse of the technology.
        \item If there are negative societal impacts, the authors could also discuss possible mitigation strategies (e.g., gated release of models, providing defenses in addition to attacks, mechanisms for monitoring misuse, mechanisms to monitor how a system learns from feedback over time, improving the efficiency and accessibility of ML).
    \end{itemize}
    
\item {\bf Safeguards}
    \item[] Question: Does the paper describe safeguards that have been put in place for responsible release of data or models that have a high risk for misuse (e.g., pretrained language models, image generators, or scraped datasets)?
    \item[] Answer: \answerNA{} 
    \item[] Justification: The paper poses no such risks
    \item[] Guidelines:
    \begin{itemize}
        \item The answer NA means that the paper poses no such risks.
        \item Released models that have a high risk for misuse or dual-use should be released with necessary safeguards to allow for controlled use of the model, for example by requiring that users adhere to usage guidelines or restrictions to access the model or implementing safety filters. 
        \item Datasets that have been scraped from the Internet could pose safety risks. The authors should describe how they avoided releasing unsafe images.
        \item We recognize that providing effective safeguards is challenging, and many papers do not require this, but we encourage authors to take this into account and make a best faith effort.
    \end{itemize}

\item {\bf Licenses for existing assets}
    \item[] Question: Are the creators or original owners of assets (e.g., code, data, models), used in the paper, properly credited and are the license and terms of use explicitly mentioned and properly respected?
    \item[] Answer: \answerYes{} 
    \item[] Justification: The creators or original owners of the assets used in the paper, such as code, data, and models, have been appropriately recognized, and the licenses and terms of use have been clearly mentioned and properly respected.
    \item[] Guidelines:
    \begin{itemize}
        \item The answer NA means that the paper does not use existing assets.
        \item The authors should cite the original paper that produced the code package or dataset.
        \item The authors should state which version of the asset is used and, if possible, include a URL.
        \item The name of the license (e.g., CC-BY 4.0) should be included for each asset.
        \item For scraped data from a particular source (e.g., website), the copyright and terms of service of that source should be provided.
        \item If assets are released, the license, copyright information, and terms of use in the package should be provided. For popular datasets, \url{paperswithcode.com/datasets} has curated licenses for some datasets. Their licensing guide can help determine the license of a dataset.
        \item For existing datasets that are re-packaged, both the original license and the license of the derived asset (if it has changed) should be provided.
        \item If this information is not available online, the authors are encouraged to reach out to the asset's creators.
    \end{itemize}

\item {\bf New assets}
    \item[] Question: Are new assets introduced in the paper well documented and is the documentation provided alongside the assets?
    \item[] Answer: \answerNA{} 
    \item[] Justification: The paper does not release new assets.
    \item[] Guidelines:
    \begin{itemize}
        \item The answer NA means that the paper does not release new assets.
        \item Researchers should communicate the details of the dataset/code/model as part of their submissions via structured templates. This includes details about training, license, limitations, etc. 
        \item The paper should discuss whether and how consent was obtained from people whose asset is used.
        \item At submission time, remember to anonymize your assets (if applicable). You can either create an anonymized URL or include an anonymized zip file.
    \end{itemize}

\item {\bf Crowdsourcing and research with human subjects}
    \item[] Question: For crowdsourcing experiments and research with human subjects, does the paper include the full text of instructions given to participants and screenshots, if applicable, as well as details about compensation (if any)? 
    \item[] Answer: \answerNA{} 
    \item[] Justification: The paper does not involve crowdsourcing nor research with human subjects.
    \item[] Guidelines:
    \begin{itemize}
        \item The answer NA means that the paper does not involve crowdsourcing nor research with human subjects.
        \item Including this information in the supplemental material is fine, but if the main contribution of the paper involves human subjects, then as much detail as possible should be included in the main paper. 
        \item According to the NeurIPS Code of Ethics, workers involved in data collection, curation, or other labor should be paid at least the minimum wage in the country of the data collector. 
    \end{itemize}

\item {\bf Institutional review board (IRB) approvals or equivalent for research with human subjects}
    \item[] Question: Does the paper describe potential risks incurred by study participants, whether such risks were disclosed to the subjects, and whether Institutional Review Board (IRB) approvals (or an equivalent approval/review based on the requirements of your country or institution) were obtained?
    \item[] Answer: \answerNA{} 
    \item[] Justification: The paper does not involve crowdsourcing nor research with human subjects.
    \item[] Guidelines:
    \begin{itemize}
        \item The answer NA means that the paper does not involve crowdsourcing nor research with human subjects.
        \item Depending on the country in which research is conducted, IRB approval (or equivalent) may be required for any human subjects research. If you obtained IRB approval, you should clearly state this in the paper. 
        \item We recognize that the procedures for this may vary significantly between institutions and locations, and we expect authors to adhere to the NeurIPS Code of Ethics and the guidelines for their institution. 
        \item For initial submissions, do not include any information that would break anonymity (if applicable), such as the institution conducting the review.
    \end{itemize}

\item {\bf Declaration of LLM usage}
    \item[] Question: Does the paper describe the usage of LLMs if it is an important, original, or non-standard component of the core methods in this research? Note that if the LLM is used only for writing, editing, or formatting purposes and does not impact the core methodology, scientific rigorousness, or originality of the research, declaration is not required.
    \item[] Answer: \answerNA{} 
    \item[] Justification: The core method development in this research does not involve LLMs as any important, original, or non-standard components.
    \item[] Guidelines:
    \begin{itemize}
        \item The answer NA means that the core method development in this research does not involve LLMs as any important, original, or non-standard components.
        \item Please refer to our LLM policy (\url{https://neurips.cc/Conferences/2025/LLM}) for what should or should not be described.
    \end{itemize}
\end{enumerate}
\end{document}